\renewenvironment{abstract}
  {{\centering\large\bfseries Abstract\par}\vspace{0.7ex}%
    \bgroup
       \leftskip 20pt\rightskip 20pt\small\noindent\ignorespaces}%
  {\par\egroup\vskip 0.25ex}
\newenvironment{keywords}
{\bgroup\leftskip 20pt\rightskip 20pt \small\noindent{\bfseries
Keywords:} \ignorespaces}%
{\par\egroup\vskip 0.25ex}
\numberwithin{equation}{section}
\theoremstyle{plain}
\newtheorem{theorem}{Theorem}
\newtheorem{lemma}[theorem]{Lemma}
\newtheorem{proposition}[theorem]{Proposition}
\newtheorem{corollary}[theorem]{Corollary}
\theoremstyle{definition}
\newtheorem{remark}[theorem]{Remark}
\renewcommand{\(}{\left(}
\renewcommand{\)}{\right)}
\newcommand{\lV}{\left\Vert}
\newcommand{\rV}{\right\Vert}
\newcommand{\lv}{\left\vert}
\newcommand{\rv}{\right\vert}
\DeclareMathOperator*{\argmin}{arg\,min}
\DeclareMathOperator*{\tr}{Tr}
\DeclareMathOperator*{\st}{\mathrm{s.t.}}
\newcommand{\EXP}{\mathbb{E}}
\newcommand{\trans}{^{\top}}
\newcommand{\defeq}{\stackrel{\text{def}}{=}}
\newcommand{\bx}{\boldsymbol{x}}
\newcommand{\bz}{\boldsymbol{z}}
\newcommand{\be}{\boldsymbol{e}}
\newcommand{\bl}{\boldsymbol{l}}
\newcommand{\br}{\boldsymbol{r}}
\newcommand{\bu}{\boldsymbol{u}}
\newcommand{\bv}{\boldsymbol{v}}
\newcommand{\ba}{\boldsymbol{a}}
\newcommand{\bb}{\boldsymbol{b}}
\newcommand{\bm}{\boldsymbol{m}}
\newcommand{\Rq}{\mathbb{R}^q}
\newcommand{\Rp}{\mathbb{R}^p}
\newcommand{\Rpd}{\mathbb{R}^{p\times d}}
\newcommand{\Rnd}{\mathbb{R}^{n\times d}}
\newcommand{\Rpn}{\mathbb{R}^{p\times n}}
\newcommand{\twonorm}[1]{\left\lVert #1 \right\rVert_{2}}
\newcommand{\twoonenorm}[1]{\left\lVert #1 \right\rVert_{2,1}}
\newcommand{\onenorm}[1]{\left\lVert #1 \right\rVert_{1}}
\newcommand{\twoinfnorm}[1]{\left\lVert #1 \right\rVert_{2,\infty}}
\newcommand{\maxnorm}[1]{\left\lVert #1 \right\rVert_{\max}}
\newcommand{\fronorm}[1]{\left\lVert #1 \right\rVert_{F}}
\newcommand{\fractwo}[1]{\frac{#1}{2}}
\newcommand{\lambdat}{ \frac{\lambda_1}{2t}}
\newcommand{\tl}{\tilde{\ell}}
\renewcommand{\th}{\tilde{h}}
\newcommand{\tildel}{\frac{1}{2} \twonorm{ \bz - L \br - \be }^2 + \lambda_2 \th(\be)}
\newcommand{\tildeli}{\frac{1}{2} \twonorm{ \bz_i - L \br_i - \be_i }^2 + \lambda_2 \th(\be_i)}
\newcommand{\hg}{\hat{g}}
\newcommand{\tinf}{{2, \infty}}
\begin{document}

\title{\vspace{-0.5in}Online Optimization for Large-Scale Max-Norm Regularization}

%

\author{
{\bf Jie Shen}\\
Dept. of Computer Science\\
Rutgers University\\
Piscataway, NJ 08854, USA\\
\texttt{js2007@rutgers.edu}
\and
{\bf Huan Xu}\\
Dept. of Industrial \& Sys. Engineering\\
National University of Singapore\\
Singapore 117576, Singapore\\
\texttt{isexuh@nus.edu.sg}
\and
{\bf Ping Li}\\
Dept. of Statistics \& Biostatistics\\
Department of Computer Science\\
Rutgers University\\
Piscataway, NJ 08854, USA\\
\texttt{pingli@stat.rutgers.edu}
}
\date{}

\maketitle

\begin{abstract}
Max-norm regularizer has been extensively studied in the last decade as it promotes an effective low-rank estimation for the underlying data. However, such max-norm regularized problems are typically formulated and solved in a batch manner, which prevents it from processing big data due to  possible memory budget. In this paper, hence, we propose an online algorithm that is scalable to large-scale setting. Particularly, we consider the matrix decomposition problem as an example, although a simple variant of the algorithm and analysis can be adapted to other important problems such as matrix completion. The crucial technique in our implementation is to reformulating the max-norm to an equivalent matrix factorization form, where the factors consist of a (possibly overcomplete) basis component and a coefficients one. In this way, we may maintain the basis component in the memory and optimize over it and the coefficients for each sample alternatively. Since the memory footprint of the basis component is independent of the sample size, our algorithm is appealing when manipulating a large collection of samples. We prove that the sequence of the solutions (i.e., the basis component) produced by our algorithm converges to a stationary point of the expected loss function asymptotically. Numerical study demonstrates encouraging results for the efficacy and robustness of our algorithm compared to the widely used nuclear norm solvers.

\vspace{0.05in}
\begin{keywords}
Low-Rank Matrix, Max-Norm, Stochastic Optimization, Matrix Factorization
\end{keywords}

\end{abstract}

\section{Introduction}
In the last decade, estimating low-rank matrices has attracted increasing attention in the machine learning community owing to its  successful applications in a wide range of fields including subspace clustering~\cite{liu2010lrr}, collaborative filtering~\cite{srebro2012matrix} and robust dimensionality reduction~\cite{candes2011rpca}, to name a few. Suppose that we are given an observed data matrix $Z$ in $\Rpn$, i.e., $n$ observations in $p$ ambient dimensions, we aim to learn a prediction matrix $X$ with a low-rank structure so as to approximate the observation. This problem, together with its many variants, typically involves minimizing a weighted combination of the residual error and a penalty for the matrix rank.

Generally speaking, it is intractable to optimize a matrix rank~\cite{recht2010guaranteed}. To tackle this challenge, researchers suggested alternative convex relaxations to the matrix rank. The two most widely used convex surrogates are the nuclear norm\footnote{Also known as the trace norm, the Ky-Fan $n$-norm and the Schatten $1$-norm.}~\cite{recht2010guaranteed} and the max-norm (a.k.a. $\gamma_2$-norm)~\cite{srebro2004mmmf}. The nuclear norm is defined as the sum of the matrix singular values. Like the $\ell_1$ norm in the vector case that induces sparsity, the nuclear norm was proposed as a rank minimization heuristic and was able to be formulated as a semi-definite programming (SDP) problem~\cite{fazel2001rank}. By combining the SDP formulation and the matrix factorization technique,~\cite{srebro2004mmmf} showed that the collaborative filtering problem can be effectively solved by optimizing a soft margin based program. Another interesting work of the nuclear norm comes from the data compression community. In real-world applications, due to possible sensor failure and background clutter, the underlying data can be easily corrupted. In this case, estimation produced by Principal Component Analysis (PCA) may be deviated far from the true subspace~\cite{jolliffe2005principal}. To handle the (gross) corruption, in the seminal work of~\cite{candes2011rpca}, Cand{\`{e}}s et al. proposed a new formulation called Robust PCA (RPCA), and proved that under mild conditions, solving a convex optimization problem consisting of a nuclear norm regularization and a weighted $\ell_1$ norm penalty can {exactly} recover the low-rank component of the underlying data even if a constant fraction of the entries are arbitrarily corrupted. Notably, they also provided a range of the trade-off parameter which guarantees the exact recovery.

The max-norm variant was developed as another convex relaxation to the rank function~\cite{srebro2004mmmf}, where Srebro et al. formulated the max-norm regularized problem as an SDP and empirically showed the superiority to the nuclear norm. The main theoretical study on the max-norm comes from~\cite{srebro2005colt}, where Srebro and Shraibman considered collaborative filtering as an example and proved that the max-norm schema enjoys a lower generalization error than the nuclear norm. Following these theoretical foundations,~\cite{srebro2012clustering} improved the error bound for the clustering problem. Another important contribution from~\cite{srebro2012clustering} is that they partially characterized the subgradient of the max-norm, which is a hard mathematical entity and cannot be fully understood to date. However, since SDP solver is not scalable, there is a large gap between the theoretical progress and the practical applicability of the max-norm. To bridge the gap, a number of follow-up works attempted to design efficient algorithms to solve max-norm regularized or constrained problems. For example,~\cite{srebro2005fast} devised a gradient-based optimization method and empirically showed promising results on large collaborative filtering datasets.~\cite{srebro2010practical} presented large-scale optimization methods for max-norm constrained and max-norm regularized problems and showed a convergence to stationary point.

Nevertheless, algorithms presented in prior works~\cite{srebro2004mmmf,srebro2005fast,srebro2010practical,srebro2012prisma} require to access all the data when the objective function involves a max-norm regularization. In the large-scale setting, the applicability of such batch optimization methods will be hindered by the memory bottleneck. In this paper, henceforth, we propose an online algorithm to solve max-norm regularized problems. The main advantage of online algorithms is that the memory cost is independent of the sample size, which makes it a good fit for the \emph{big data} era.

To be more detailed, we are interested in a general max-norm regularized matrix decomposition (MRMD) problem. Assume that the observed data matrix $Z$ can be decomposed into a low-rank component $X$ and some structured noise $E$, we aim to simultaneously and accurately estimate the two components, by solving the following convex program:
\begin{equation}
\text{(MRMD)}\quad  \min_{X, E}\quad \fractwo{1} \fronorm{Z-X-E}^2 + \fractwo{\lambda_1} \maxnorm{X}^2 +  \lambda_2 h(E).
  \label{eq:primal}
\end{equation}
Here, $\fronorm{\cdot}$ denotes the Frobenius norm which is a commonly used metric for evaluating the residual, $\maxnorm{\cdot}$ is the max-norm (which promotes low-rankness), and $\lambda_1$ and $\lambda_2$ are two non-negative parameters. $h(E)$ is some (convex) regularizer that can be adapted to various kinds of noise. We require that it can be represented as a summation of column norms. Formally, there exists some regularizer $\th(\cdot)$, such that
\begin{equation}
\label{eq:e req}
h(E) = \sum_{i=1}^{n} \th(\be_i),
\end{equation}
where $\be_i$ is the $i$th column of $E$. Admissible examples include:
\begin{itemize}
\item $\onenorm{E}$. That is, the $\ell_1$ norm of the matrix $E$ seen as a long vector, which is used to handle sparse corruption. In this case, $\th(\cdot)$ is the $\ell_1$ vector norm. Note that when equipped with this norm, the above problem reduces to the well-known RPCA formulation~\cite{candes2011rpca}, but with the nuclear norm replaced by the max-norm.


\item $\twoonenorm{E}$. This is defined as the summation of the $\ell_2$ column norms, which is effective when a small fraction of the samples are contaminated (recall that each column of $Z$ is a sample). Here, $\th(\cdot)$ is the $\ell_2$ norm. The matrix $\ell_{2,1}$ norm is typically used to handle outliers and interestingly, the above program becomes Outlier PCA~\cite{xu2013outlier} in this case.

\item $\fronorm{E}^2$ or $E=0$. The formulation of~\eqref{eq:primal} works as a large margin based program, with the hinge loss replaced by the squared loss~\cite{srebro2004mmmf}.
\end{itemize}
Hence, (MRMD)~\eqref{eq:primal} is general enough and our algorithmic and theoretical results hold for such general form, covering important problems including max-norm regularized RPCA, max-norm regularized Outlier PCA and large maximum margin matrix factorization. Furthermore, with a careful design, the above formulation~\eqref{eq:primal} can be extended to address the matrix completion problem~\cite{candes2009exact}, as we will show in Section~\ref{sec:mc}.

\subsection{Contributions}
In summary, our {main contributions} are two-fold: {\bf 1)} We are the first to develop an online algorithm to solve a family of max-norm regularized problems~\eqref{eq:primal}, which finds a wide range of applications in machine learning. We also show that our approach can be used to solve other popular max-norm regularized problems such as matrix completion. {\bf 2)} We prove that the solutions produced by our algorithm converge to a stationary point of the expected loss function asymptotically (see Section~\ref{sec:main results}). 

Compared to our earlier work~\cite{shen2014online}, the formulation~\eqref{eq:primal} considered here is more general and a complete proof is provided. In addition, we illustrate by an extensive study on the subspace recovery task to confirm the conjecture that the max-norm always performs better than the nuclear norm in terms of convergence rate and robustness.

\subsection{Related Works}
\label{sec:related work}
Here we discuss some relevant works in the literature. Most previous works on max-norm focused on showing that it is empirically  superior to the nuclear norm in real-world problems, such as collaborative filtering~\cite{srebro2004mmmf}, clustering~\cite{srebro2012clustering} and hamming embedding~\cite{srebro2014hamming}. Other works, for instance,~\cite{srebro2010non-uniform}, studied the influence of data distribution with the max-norm regularization and observed good performance even when the data are sampled non-uniformly. There are also interesting works which investigated the connection between the max-norm and the nuclear norm. A comprehensive study on this problem, in the context of collaborative filtering, can be found in~\cite{srebro2005colt}, which established and compared the generalization bound for the nuclear norm regularization and the max-norm, showing that the latter one results in a tighter bound. More recently,~\cite{srebro2012matrix} attempted to unify them to gain insightful perspective.

Also in line with this work is matrix decomposition. As we mentioned, when we penalize the noise $E$ with $\ell_1$ matrix norm, it reverts to the well known RPCA formulation~\cite{candes2011rpca}. The only difference is that~\cite{candes2011rpca} analyzed the RPCA problem with the nuclear norm, while~\eqref{eq:primal} employs the max-norm. Owing to the explicit form of the subgradient of the nuclear norm,~\cite{candes2011rpca} established a dual certificate for the success of their formulation, which facilitates their theoretical analysis. In contrast, the max-norm is a much harder mathematical entity (even its subgradient has not been fully characterized). Henceforth, it still remains challenging to understand the behavior of the max-norm regularizer in the general setting~\eqref{eq:primal}. Studying the conditions for the exact recovery of MRMD is out of the scope of this paper. We leave this as a future work.

From a high level, the goal of this paper is similar to that of~\cite{feng2013online}. Motivated by the celebrated RPCA problem~\cite{candes2011rpca,xu2013outlier,xu2012robust},~\cite{feng2013online} developed an online implementation for the nuclear-norm  regularized matrix decomposition. Yet, since the max-norm is a more complicated mathematical entity, new techniques and insights are needed in order to develop online methods for the max-norm regularization. {For example, after converting the max-norm to its matrix factorization form, the data are still coupled and we propose to transform the problem to a constrained one for stochastic optimization.}

The main technical contribution of this paper is converting  max-norm regularization to an appropriate matrix factorization problem amenable to online implementation. Compared to~\cite{mairal2010online} which also studies online matrix factorization, our formulation contains an additional structured noise that brings the benefit of robustness to contamination. Some of our proof techniques are also different. For example, to prove the convergence of the dictionary and to well define their problem,~\cite{mairal2010online} assumed that the magnitude of the learned dictionary is constrained. In contrast, we {\em prove} that the optimal basis is uniformly bounded, and hence our problem is naturally well-defined.

\subsection{Roadmap}
The rest of the paper is organized as follows. Section~\ref{sec:setup} begins with some basic notation and problem definition, followed by reformulating the MRMD problem which turns out to be amenable for online optimization. Section~\ref{sec:algorithm} then elaborates the online implementation of MRMD and Section~\ref{sec:main results} establishes the convergence guarantee under some mild assumptions. In Section~\ref{sec:mc}, we show that our framework can easily be extended to other max-norm regularized problems, such as matrix completion. Numerical performance of the proposed algorithm is presented in Section~\ref{sec:exp}. Finally, we conclude this paper in Section~\ref{sec:conclude}. All the proofs are deferred to the appendix.

\section{Problem Setup}\label{sec:setup}
{\bf Notation.} \ 
We use lower bold letters to denote vectors. The $\ell_1$ norm and $\ell_2$ norm of a vector $\bv$ are denoted by $\onenorm{\bv}$ and $\twonorm{\bv}$, respectively. Capital letters, such as $M$, are used to denote matrices. In particular, the letter $I_n$ is reserved for the identity matrix with the size of $n$ by $n$. For a matrix $M$, the $i$th row and $j$th column are written as $\bm(i)$ and $\bm_j$ respectively, and the $(i, j)$-th entry is denoted by $M_{ij}$. There are four matrix norms that will be heavily used in the paper: $\fronorm{M}$ for the Frobenius norm, $\onenorm{M}$ for the $\ell_1$ matrix norm seen as a long vector, $\maxnorm{M}$ for the max-norm induced by the product of $\ell_{\tinf}$ norm on the factors of $M$. Here, the $\ell_{\tinf}$ norm is defined as the maximum $\ell_2$ row norm. The trace of a square matrix $M$ is denoted as $\tr(M)$. Finally, for a positive integer $n$, we use $[n]$ to denote the integer set $\{1, 2, \cdots, n\}$.

We are interested in developing an online algorithm for the MRMD problem~\eqref{eq:primal}. To this end, we note that the max-norm~\cite{srebro2004mmmf} is defined as follows:
\begin{equation}
\label{eq:max-def}
\maxnorm{X} \defeq \min_{L, R}\ \Big\{ \twoinfnorm{L} \cdot \twoinfnorm{R}:\ X = L R\trans, L \in \Rpd, R \in \Rnd \Big\},
\end{equation}
where $d$ is an upper bound on the intrinsic dimension of the underlying data. Plugging the above into~\eqref{eq:primal}, we obtain an equivalent form:
\begin{equation}
  \min_{L, R, E}\quad \fractwo{1} \fronorm{Z-LR\trans -E}^2 + \fractwo{\lambda_1} \twoinfnorm{L}^2 \twoinfnorm{R}^2 + \lambda_2 h(E).
  \label{eq:min_lre}
\end{equation}
In this paper, if not specified, ``equivalent'' means we do not change the optimal value of the objective function. Intuitively, the variable $L$ serves as a (possibly overcomplete) basis for the clean data while correspondingly, the variable $R$ works as a coefficients matrix with each row being the coefficients for each sample (recall that we organize the observed samples in a column-wise manner). In order to make the new formulation~\eqref{eq:min_lre} equivalent to MRMD~\eqref{eq:primal}, the quantity of $d$ should be sufficiently large due to~\eqref{eq:max-def}.

\vspace{0.05in}
\noindent{\bf Challenge.} \ 
At a first sight, the problem can only be optimized in a batch manner for which the memory cost is prohibitive. To see this, note that we are considering the regime of $d < p \ll n$. Hence, the basis component $L$ is eligible for memory storage since its size is independent of the number of samples. As $E$ is column-wisely regularized (see~\eqref{eq:e req}), we are able to update each column of $E$ by only accessing one sample. Nevertheless, the size of the coefficients $R$ is proportional to $n$. In order to optimize the above program over the variable $R$, we have to compute the gradient with respect to it. Recall that the $\ell_{\tinf}$ norm counts the largest $\ell_2$ row norm of $R$, hence coupling all the samples (each row of $R$ associates with a sample). 

Fortunately, we have the following proposition that alleviates the inter-dependency among the rows of $R$, hence facilitating an online algorithm where the rows of $R$ can be optimized sequentially.

\begin{proposition}
\label{prop:constrained}
Problem~\eqref{eq:min_lre} is equivalent to the following constrained program:
\begin{equation}
\begin{split}
  \min_{{L}, {R}, E}&\quad \fractwo{1} \fronorm{Z-LR\trans-E} + \fractwo{\lambda_1} \twoinfnorm{L}^2 + \lambda_2 h(E),\\
  \st&\quad \twoinfnorm{R}^2 \leq 1.
  \end{split}
  \label{eq:batch-max}
\end{equation}
Moreover, there exists an optimal solution $(L^*, R^*, E^*)$ attained at the boundary of the feasible set, i.e., $\twoinfnorm{R^*}^2$ is equal to the unit.
\end{proposition}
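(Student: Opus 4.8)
The plan is to exploit the scale invariance of the factorization $X = LR\trans$. The key observation is that for any scalar $c>0$ the substitution $L \mapsto cL$ and $R \mapsto c^{-1}R$ leaves the product $LR\trans$ unchanged — hence also the residual $\fronorm{Z-LR\trans-E}^2$ — while the coupled penalty is invariant as well, since $\twoinfnorm{cL}=c\,\twoinfnorm{L}$ and $\twoinfnorm{c^{-1}R}=c^{-1}\twoinfnorm{R}$ give $\twoinfnorm{cL}^2\twoinfnorm{c^{-1}R}^2=\twoinfnorm{L}^2\twoinfnorm{R}^2$. Thus the entire objective of~\eqref{eq:min_lre} is unaffected by this rescaling, and I would use $c$ to normalize the coefficient matrix so that $\twoinfnorm{R}=1$.

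First I would establish that the two optimal values coincide, via a two-sided inequality. For the direction bounding the optimal value of~\eqref{eq:batch-max} by that of~\eqref{eq:min_lre}, take any $(L,R,E)$ with $\twoinfnorm{R}>0$ and rescale with $c=\twoinfnorm{R}$ to obtain $(L',R',E)=(\twoinfnorm{R}\,L,\ R/\twoinfnorm{R},\ E)$. Then $\twoinfnorm{R'}=1$, so this triple is feasible for~\eqref{eq:batch-max}, and since $\twoinfnorm{L'}^2=\twoinfnorm{R}^2\twoinfnorm{L}^2$, the objective of~\eqref{eq:batch-max} at $(L',R',E)$ equals the objective of~\eqref{eq:min_lre} at $(L,R,E)$. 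For the reverse direction, take any $(L,R,E)$ feasible for~\eqref{eq:batch-max}; the constraint $\twoinfnorm{R}^2\le 1$ yields $\twoinfnorm{L}^2\twoinfnorm{R}^2\le\twoinfnorm{L}^2$, so the objective of~\eqref{eq:min_lre} is no larger than that of~\eqref{eq:batch-max} at the same point. Taking infima over the respective feasible sets and combining the two inequalities gives equality of the optimal values.

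For the boundary attainment, I would start from an optimal solution $(L^*,R^*,E^*)$ of~\eqref{eq:batch-max} and push it onto the sphere $\twoinfnorm{R}^2=1$ using the same rescaling with $c=\twoinfnorm{R^*}\le 1$. Because $c\le 1$, the rescaled factor $L'=cL^*$ obeys $\twoinfnorm{L'}^2=c^2\twoinfnorm{L^*}^2\le\twoinfnorm{L^*}^2$, so the rescaled triple stays feasible and does not increase the objective; by optimality it is again optimal and now lies on the boundary. In passing this also shows that any optimizer with $\twoinfnorm{R^*}<1$ must satisfy $\twoinfnorm{L^*}=0$, since otherwise the penalty would strictly decrease.

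The step I expect to be most delicate is the degenerate case $\twoinfnorm{R^*}=0$, in which no scalar rescaling can move $R^*$ onto the unit sphere. Here I would argue separately: when $R^*=0$ the product $L^*R^*\trans$ vanishes, so optimality of the term $\fractwo{\lambda_1}\twoinfnorm{L}^2$ forces $L^*=0$, and the objective becomes independent of the coefficient matrix; replacing $R^*$ by any $\bar R$ with $\twoinfnorm{\bar R}=1$ then preserves both feasibility and optimality while landing on the boundary. I would also briefly justify that a minimizer of~\eqref{eq:batch-max} exists in the first place — the objective is continuous and, for $\lambda_1,\lambda_2>0$, coercive in $L$ and $E$ over the compact constraint set for $R$ — so that the rescaling argument indeed starts from a genuine optimizer.
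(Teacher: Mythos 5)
Your proposal is correct and rests on the same core idea as the paper's own proof: the scale invariance of $LR\trans$ and of the coupled penalty under $L \mapsto cL$, $R \mapsto c^{-1}R$, used with $c = \twoinfnorm{R}$ to normalize the coefficient factor. The differences are in execution, and they work in your favor. The paper argues by contradiction that \emph{every} optimal solution of~\eqref{eq:batch-max} satisfies $\twoinfnorm{R^*} = 1$: if $k = \twoinfnorm{R^*} < 1$, rescaling strictly decreases the objective. That strict decrease presupposes $\twoinfnorm{L^*} \neq 0$ and $k > 0$, so the paper's stronger claim actually fails in the degenerate case $L^* = 0$ (where any feasible $R^*$ is optimal); the paper dismisses this case at the outset as ``of little interest.'' Your two-sided inequality between the infima is a cleaner route to equality of the optimal values (note only that the case $\twoinfnorm{R} = 0$ in the first direction is handled by mapping $(L,0,E)$ to $(0,0,E)$, which is feasible and has the same value), and your attainment argument proves exactly the existence statement in the proposition: rescaling with $c \leq 1$ never increases the objective, and the degenerate situations --- $\twoinfnorm{R^*} < 1$ forcing $L^* = 0$, and $R^* = 0$ --- are resolved by replacing $R^*$ with an arbitrary $\bar{R}$ of unit $\ell_{2,\infty}$ norm, which the objective no longer sees. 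Your remark that existence of a minimizer of~\eqref{eq:batch-max} deserves justification (continuity plus coercivity in $L$ and $E$ over the compact constraint set for $R$) addresses a point the paper passes over silently as well.
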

\begin{proof}
Let us denote $k = \twoinfnorm{R}$. We presume $k$ is positive. Otherwise, the low-rank component $X$ we aim to recover is a zero matrix, which is of little interest. Now we construct two auxiliary variables $\bar{L} = kL \in \Rpd$ and $\bar{R} = \frac{1}{k} R \in \Rnd$. Replacing $L$ and $R$ with $\frac{1}{k}\bar{L}$ and $k\bar{R}$ in~\eqref{eq:min_lre} respectively, we have:
\begin{align*}
  \min_{\bar{L}, \bar{R}, E}\ \fractwo{1} \fronorm{Z - \(\frac{1}{k}\bar{L}\)\(k\bar{R}\)\trans - E}^2 + \fractwo{\lambda_1} \twoinfnorm{\frac{1}{k}\bar{L}}^2 \twoinfnorm{k\bar{R}}^2 + \lambda_2 h(E).
\end{align*}
That is, we are to solve
\begin{align*}
  \min_{\bar{L}, \bar{R}, E}\ \fractwo{1} \fronorm{ Z - \bar{L}\bar{R}\trans - E}^2 + \fractwo{\lambda_1} \twoinfnorm{\bar{L}}^2 \twoinfnorm{\bar{R}}^2 + \lambda_2 h(E).
\end{align*}
The fact that $\bar{R} = \frac{1}{k} R$ and $k$ is the maximum of the $\ell_2$ row norm of $R$ implies $\twoinfnorm{\bar{R}} = 1$. Therefore, we can reformulate our MRMD problem as a constrained program:
\begin{align*}
  \min_{\bar{L}, \bar{R}, E}\ \fractwo{1} \fronorm{Z - \bar{L}\bar{R}\trans - E}^2 + \fractwo{\lambda_1} \twoinfnorm{ \bar{L}}^2 + \lambda_2 h(E),\quad  \st\ \twoinfnorm{\bar{R}}^2 = 1.
\end{align*}
To see why the above program is equivalent to~\eqref{eq:batch-max}, we only need to show that each optimal solutions $(L^*, R^*, E^*)$ of~\eqref{eq:batch-max} must satisfy $\twoinfnorm{R^*}^2 = 1$. Suppose that $k = \twoinfnorm{R^*} < 1$. Let $L' = kL^*$ and $R' = \frac{1}{k}R^*$. Obviously, $(L', R', E^*)$ are still feasible. However, the objective value becomes
\begin{align*}
\begin{split}
&\ \fractwo{1} \fronorm{Z - L'R^{'\top} - E^*}^2 + \fractwo{\lambda_1}\twoinfnorm{L'}^2 + \lambda_2 h(E^*)\\
=&\ \fractwo{1} \fronorm{Z - L^*R^{*\top} - E^*}^2 +  \fractwo{\lambda_1} \cdot k^2 \twoinfnorm{L^*}^2 + \lambda_2 h(E^*) \\
<&\ \fractwo{1} \fronorm{Z - L^*R^{*\top} - E^*}^2 + \fractwo{\lambda_1} \twoinfnorm{L^*}^2 + \lambda_2 h(E^*),
\end{split}
\end{align*}
which contradicts the assumption that $(L^*, R^*, E^*)$ is the optimal solution. Thus we complete the proof.
\end{proof}

\begin{remark}
Proposition~\ref{prop:constrained} is crucial for the online implementation. It states that our primal MRMD problem~\eqref{eq:primal} can be transformed to an equivalent constrained program~\eqref{eq:batch-max} where the coefficients of {\em each individual} sample (i.e., a row of the matrix $R$) is \emph{uniformly and separately} constrained.
\end{remark}

Consequently, we can, equipped with Proposition~\ref{prop:constrained}, rewrite the original problem in an online fashion, with each sample being separately processed:
\begin{align}
  \min_{L, R, E}\ \fractwo{1} \sum_{i=1}^n \twonorm{\bz_i - L\mathbf{r}_i - \be_i}^2 + \fractwo{\lambda_1} \twoinfnorm{L}^2 + \lambda_2 \sum_{i=1}^n \th(\be_i),\quad  \st\ \twonorm{\br_i}^2 \leq 1,\ \forall\ i \in [n],
\end{align}
where $\bz_i$ is the $i$th observation, $\br_i$ is the coefficients and $\be_i$ is some structured error penalized by the (convex) regularizer $\th(\cdot)$ (recall that we require $h(E)$ can be decomposed column-wisely). Merging the first and third term above gives a compact form:
\begin{equation}
\begin{split}
  \min_{L}\ \min_{R,E}&\quad  \sum_{i=1}^n \tilde{\ell}(\bz_i, L, \br_i, \be_i) + \fractwo{\lambda_1} \twoinfnorm{L}^2,\\
  \st&\quad \twonorm{\br_i}^2 \leq 1,\ \forall i \in [n],
  \end{split}
\end{equation}
where
\begin{equation}
\label{eq:tildel}
\tilde{\ell}(\bz, L, \br, \be) \defeq \tildel.
\end{equation}
This is indeed equivalent to optimizing (i.e., minimizing) the empirical loss function:
\begin{equation}
\min_L\ f_n(L),
\label{eq:min f_n(L)}
\end{equation}
where
\begin{equation}
f_n(L) \defeq \frac{1}{n} \sum_{i=1}^n \ell( \bz_i, L) + \frac{\lambda_1}{2n} \twoinfnorm{L}^2,
\label{eq:f_n(L)}
\end{equation}
and
\begin{equation}
  \ell(\bz, L ) = \min_{\br, \be, \twonorm{\br}^2 \leq 1} \tilde{\ell}(\bz, L, \br, \be).\\
  \label{eq:l(z,L)}
\end{equation}
Note that by Proposition~\ref{prop:constrained}, as long as the quantity of $d$ is sufficiently large, the program~\eqref{eq:min f_n(L)} is equivalent to the primal formulation~\eqref{eq:primal}, in the sense that both of them could attain the same minimum. Compared to MRMD~\eqref{eq:primal}, which is solved in a batch manner by prior works, the formulation~\eqref{eq:min f_n(L)} paves a way for stochastic optimization procedure since all the samples are decoupled.

\section{Algorithm}
\label{sec:algorithm}
Based on the derivation in the preceding section, we are now ready to present our online algorithm to solve the MRMD problem~\eqref{eq:primal}. The implementation is outlined in Algorithm~\ref{alg:all}. Here we first briefly explain the underlying intuition. We optimize the coefficients $\br$, the structured noise $\be$ and the basis $L$ in an alternating manner, with only the basis $L$ and two accumulation matrices being kept in memory. At the $t$-th iteration, given the basis $L_{t-1}$ produced by the previous iteration, we can optimize~\eqref{eq:l(z,L)} by examining the Karush Kuhn Tucker (KKT) conditions. To obtain a new iterate $L_t$, we then minimize the following objective function:
\begin{equation}
  g_t(L) \defeq \frac{1}{t}\sum_{i=1}^t \tilde{\ell}(\bz_i, L, \br_i, \be_i) + \frac{\lambda_1}{2t} \twoinfnorm{L}^2,
\label{eq:g_t(L)}
\end{equation}
where $\{\br_i\}_{i=1}^t$ and $\{\be_i\}_{i=1}^t$ are already on hand. It can be verified that~\eqref{eq:g_t(L)} is a surrogate function of the empirical loss $f_t(L)$~\eqref{eq:f_n(L)}, since the obtained $\br_i$'s and $\be_i$'s are suboptimal. Interestingly, instead of recording all the past $\br_i$'s and $\be_i$'s, we only need to store two accumulation matrices whose sizes are independent of $n$, as shown in Algorithm~\ref{alg:all}. In the sequel, we elaborate each step in Algorithm~\ref{alg:all}.

\begin{algorithm}[t]
\caption{Online Max-Norm Regularized Matrix Decomposition}
\label{alg:all}
\begin{algorithmic}[1]
    \REQUIRE $Z \in \Rpn$ (observed samples), parameters $\lambda_1$ and $\lambda_2$, $L_0 \in \Rpd$ (initial basis), zero matrices $A_0 \in \mathbb{R}^{d \times d}$ and $B_0 \in \mathbb{R}^{p \times d}$.
    \ENSURE Optimal basis $L_n$.
    \FOR{$t=1$ to $n$}
      \STATE Access the $t$-th sample $\bz_t$.
      \STATE Compute the coefficient and noise:
        \begin{align*}
          \{\br_t, \be_t \} = \argmin_{\br, \be, \twonorm{\br}^2 \leq 1} \tl(\bz_t, L_{t-1}, \br, \be).
        \end{align*}
      \STATE Compute the accumulation matrices $A_t$ and $B_t$:
        \begin{align*}
          A_t \longleftarrow&\ A_{t-1} + \br_t^{} \br_t\trans,\\
          B_t \longleftarrow&\ B_{t-1} + \(\bz_t^{} - \be_t^{} \)\br_t\trans.
        \end{align*}

      \STATE Compute the basis $L_t$ by optimizing the surrogate function~\eqref{eq:g_t(L)}:
        \begin{align*}
        \begin{split}
          L_t &= \argmin_L \frac{1}{t}\sum_{i=1}^t \tilde{\ell}(\bz_i, L, \br_i, \be_i) + \frac{\lambda_1}{2t} \twoinfnorm{L}^2 \\
          &= \argmin_L \frac{1}{t} \(\frac{1}{2} \tr \( L\trans L A_t \) - \tr \( L\trans B_t \) \) + \frac{\lambda_1}{2t} \twoinfnorm{L}^2.
          \end{split}
        \end{align*}
    \ENDFOR
\end{algorithmic}
\end{algorithm}

\subsection{Update the coefficients and noise}
Given a sample $\bz$ and a basis $L$, we are able to estimate the optimal coefficients $\br$ and noise $\be$ by minimizing $\tl(\bz, L, \br, \be)$. That is, we are to solve the following program:
\begin{equation}
\begin{split}
\min_{\br, \be}&\quad \fractwo{1} \twonorm{\bz - L \br - \be}^2 + \lambda_2 \th(\be),\\
\st&\quad \twonorm{\br} \leq 1.
\end{split}
\label{eq:solve_re}
\end{equation}
We notice that the constraint only involves the variable $\br$, and in order to optimize $\br$, we only need to consider the residual term in the objective function. This motivates us to employ a block coordinate descent algorithm. Namely, we alternatively optimize one variable with the other fixed, until some stopping criteria is fulfilled. In our implementation, when the difference between the current and the previous iterate is smaller than $10^{-6}$, or the number of iterations exceeds 100, our algorithm will stop and return the optima.

\subsubsection{Optimize the coefficients $\br$}
Now it remains to show how to compute a new iterate for one variable when the other one is fixed. According to~\cite{bertsekas1999nonlinear}, when the objective function is strongly convex with respect to (w.r.t.) each block variable, it can guarantee the convergence of the alternating minimization procedure. In our case, we observe that such condition holds for $\be$ but not necessary for $\br$. In fact, the strong convexity for $\br$ holds if and only if the basis $L$ is with full rank. When $L$ is not full rank, we may compute the Moore Penrose pseudo inverse to solve $\br$. However, for computational efficiency, we append a small jitter $\frac{\epsilon}{2}\twonorm{\br}^2$ to the objective if necessary, so as to guarantee the convergence ($\epsilon=0.01$ in our experiments). In this way, we obtain a \emph{potentially} admissible iterate for $\br$ as follows:
\begin{equation}
\br_0^{} = (L\trans L + \epsilon I_d )^{-1} L\trans (\bz - \be).
\label{eq:r_0}
\end{equation}
Here, $\epsilon$ is set to be zero if and only if $L$ is full rank.

Next, we examine if $\br_0^{}$ violates the inequality constraint in~\eqref{eq:solve_re}. If it happens to be a feasible solution, i.e., $\twonorm{\br_0^{}} \leq 1$, we have found the new iterate for $\br$. Otherwise, we conclude that the optima of $\br$ must be attained on the boundary of the feasible set, i.e., $\twonorm{\br}=1$, for which the minimizer can be found by the method of Lagrangian multipliers:
\begin{equation}
\begin{split}
\max_{\eta} \min_{\br}&\quad \fractwo{1} \twonorm{\bz - L \br - \be}^2 + \fractwo{\eta}\( \twonorm{\br}^2 - 1 \),\\
\st&\quad \eta > 0,\quad \twonorm{\br} = 1.
\end{split}
\label{eq:solve r eta}
\end{equation}
By differentiating the objective function with respect to $\br$, we have
\begin{equation}
\br = \( L\trans L + \eta I \)^{-1} L\trans (\bz - \be).
\label{eq:lagrange_r}
\end{equation}
In order to facilitate the computation, we make the following argument.

\begin{proposition}
\label{prop:r}
Let $\br$ be given by~\eqref{eq:lagrange_r}, where $L$, $\bz$ and $\be$ are assumed to be fixed. Then, the $\ell_2$ norm of $\br$ is strictly monotonically decreasing with respect to the quantity of $\eta$.
\end{proposition}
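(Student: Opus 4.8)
The plan is to reduce the claim to a one-variable calculus computation by diagonalizing $L\trans L$. First I would fix the quantities that do not depend on $\eta$: set $\bb \defeq L\trans(\bz - \be)$ and $M \defeq L\trans L$, so that the iterate in~\eqref{eq:lagrange_r} reads $\br(\eta) = (M + \eta I)^{-1}\bb$. Since $M$ is symmetric and positive semidefinite, all its eigenvalues are nonnegative, hence for every $\eta > 0$ the matrix $M + \eta I$ is positive definite and therefore invertible; this makes $\br(\eta)$ well defined over the entire range of interest. I would then take the spectral decomposition $M = U\Lambda U\trans$, where $U$ is orthogonal and $\Lambda = \mathrm{diag}(\sigma_1,\dots,\sigma_d)$ with each $\sigma_j \geq 0$.

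Next I would exploit orthogonal invariance of the $\ell_2$ norm. Writing $\tilde{\bb} \defeq U\trans \bb$ with components $\tilde{b}_j$ and using $(M+\eta I)^{-1} = U(\Lambda + \eta I)^{-1}U\trans$, the squared norm decouples into a sum of scalar terms:
\[
\twonorm{\br(\eta)}^2 = \twonorm{(\Lambda + \eta I)^{-1}\tilde{\bb}}^2 = \sum_{j=1}^d \frac{\tilde{b}_j^2}{(\sigma_j + \eta)^2}.
\]
Each summand is differentiable on $\eta > 0$ with derivative $-2\tilde{b}_j^2/(\sigma_j+\eta)^3 \leq 0$, so that $\frac{d}{d\eta}\twonorm{\br(\eta)}^2 = -\sum_{j=1}^d 2\tilde{b}_j^2/(\sigma_j+\eta)^3 \leq 0$, giving monotone non-increase immediately.

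The only point requiring care is the \emph{strict} decrease, which needs at least one $\tilde{b}_j \neq 0$, equivalently $\bb \neq 0$. I would justify this from the context in which the proposition is used: the Lagrangian route is entered only when the unconstrained iterate $\br_0$ of~\eqref{eq:r_0} is infeasible, i.e. $\twonorm{\br_0} > 1 > 0$, which forces $\br_0 \neq 0$ and hence $L\trans(\bz-\be) = \bb \neq 0$. With $\bb \neq 0$, at least one coordinate $\tilde{b}_j$ is nonzero (orthogonal $U$ preserves nonvanishing), so the corresponding term contributes a strictly negative derivative and the total derivative is strictly negative for all $\eta > 0$. I do not anticipate a genuine obstacle here; the spectral reduction trivializes the algebra, and the subtlety is merely to record the nonvanishing of $\bb$ so as to upgrade ``non-increasing'' to ``strictly decreasing.'' (Alternatively, one could avoid calculus entirely by comparing $\twonorm{\br(\eta_1)}^2$ and $\twonorm{\br(\eta_2)}^2$ for $\eta_1 < \eta_2$ termwise in the same diagonalized form, which yields the strict inequality directly.)
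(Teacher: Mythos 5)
Your proposal is correct and follows essentially the same route as the paper: both diagonalize the matrix inside $(L\trans L + \eta I)^{-1}$ (you via the spectral decomposition of $L\trans L$, the paper via the SVD of $L$) and reduce $\twonorm{\br(\eta)}^2$ to a sum of terms $\tilde{b}_j^2/(\sigma_j+\eta)^2$, each monotone in $\eta$, the only cosmetic difference being that you differentiate while the paper compares two values $\eta_1 > \eta_2$ directly through the negative-definite difference matrix. One point in your favor: the paper asserts the strict inequality $\bb\trans V(S_{\eta_1}-S_{\eta_2})V\trans\bb < 0$ without noting that this requires $\bb \neq 0$, whereas your observation that the Lagrangian branch is entered only when $\twonorm{\br_0} > 1$ (hence $\bb = L\trans(\bz-\be) \neq 0$) is precisely the missing justification for strictness.
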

\begin{proof}
For simplicity, let us denote
\begin{align*}
\br(\eta) = \( L\trans L + \eta I \)^{-1} \bb,
\end{align*}
where $\bb = L\trans (\bz - \be)$ is a fixed vector. Suppose we have a full singular value decomposition (SVD) on $L=USV\trans$, where the singular values $\{s_1, s_2, \cdots, s_p\}$ (i.e., the diagonal elements in $S$) are arranged in a decreasing order and at most $d$ number of them are non-zero. Substituting $L$ with its SVD, we obtain the squared $\ell_2$ norm for $\br(\eta)$:
\begin{align*}
\twonorm{\br(\eta)}^2 =&\ \bb\trans \( VS^2V\trans + \eta I \)^{-2} \bb \\
=&\ \bb\trans V S_{\eta} V\trans \bb,
\end{align*}
where $S_{\eta}$ is a diagonal matrix whose $i$th diagonal element equals $(s_i^2 + \eta)^{-2}$.

For any two entities $\eta_1 > \eta_2$, it is easy to see that the matrix $S_{\eta_1} - S_{\eta_2}$ is negative definite. Hence, it always holds that
\begin{align*}
\twonorm{\br(\eta_1)}^2 - \twonorm{\br(\eta_2)}^2 = \bb\trans V (S_{\eta_1} - S_{\eta_2}) V\trans \bb < 0,
\end{align*}
which concludes the proof.
\end{proof}

The above proposition offers an efficient computational scheme, i.e., bisection method, for searching the optimal $\br$ as well as the dual variable $\eta$. To be more detailed, we can maintain a lower bound $\eta_1$ and an upper bound $\eta_2$, such that $\twonorm{\br(\eta_1)} \geq 1$ and $\twonorm{\br(\eta_2)} \leq 1$. According to the monotonic property shown in Proposition~\ref{prop:r}, the optimal $\eta$ must fall into the interval $[\eta_1, \eta_2]$. By evaluating the value of $\twonorm{\br}$ at the middle point $(\eta_1 + \eta_2)/2$, we can sequentially shrink the interval until $\twonorm{\br}$ is close to one. Note that we can initialize $\eta_1$ with zero (since $\twonorm{\br_0^{}}$ is larger than one implying the optimal $\eta^* > \epsilon \geq 0$). The bisection routine is summarized in Algorithm~\ref{alg:solve r eta}.

\begin{algorithm}[t]
\caption{Bisection Method for Problem~\eqref{eq:solve r eta}}
\label{alg:solve r eta}
\begin{algorithmic}[1]
\REQUIRE $L \in \Rpd$, $\bz \in \Rp$, $\be \in \Rp$.
\ENSURE Optimal primal and dual pair $(\br, \eta)$.
\STATE Initialize the lower bound $\eta_1 = 0$ and the upper bound $\eta_2$ large enough such that $\twonorm{\br(\eta_2)} \leq 1$.

\REPEAT
\STATE Compute the middle point:
\begin{align*}
\eta \leftarrow \frac{1}{2}(\eta_1 + \eta_2).
\end{align*}
  	
\IF{$\twonorm{\br(\eta)} < 1$}
\STATE Update $\eta_2$:
\begin{align*}
\eta_2 \leftarrow \eta.
\end{align*}
\ELSE
\STATE Update $\eta_1$:
\begin{align*}
\eta_1 \leftarrow {\eta}.
\end{align*}
\ENDIF
    
\UNTIL{$\twonorm{\br} = 1$}
\end{algorithmic}
\end{algorithm}

\subsubsection{Optimize the Noise $\be$}
We have clarified the technique used for solving $\br$ in Problem~\eqref{eq:solve_re} when $\be$ is fixed. Now let us turn to the phase where $\br$ is fixed and we want to find the optimal $\be$. Since $\be$ is an unconstrained variable, generally speaking, it is much easier to solve, although one may employ different strategies for various regularizers $\th(\cdot)$. Here, we discuss the solutions for popular choices of the regularizer.
\begin{enumerate}
\item $\th(\be) = \onenorm{\be}$. The $\ell_1$ regularizer results in a closed form solution for $\be$ as follows:
\begin{align}
\be = \mathcal{S}_{\lambda_2}[\bz - L\br],
\end{align}
where $\mathcal{S}_{\lambda_2}[\cdot]$ is the soft-thresholding operator~\cite{hale2008fixed}.


\item $\th(\be) = \twonorm{\be}$. The solution in this case can be characterized as follows (see, for example,~\cite{liu2010lrr}):
\begin{align}
\be =
\begin{cases}
\frac{\twonorm{\bz - L\br}}{\twonorm{\bz - L\br} - \lambda_2} (\bz - L\br),\quad &\text{if}\ \lambda_2 < \twonorm{\bz - L\br},\\
\mathbf{0},\quad &\text{otherwise}.
\end{cases}
\end{align}
\end{enumerate}

Finally, for completeness, we summarize the routine for updating the coefficients and the noise in Algorithm~\ref{alg:re}. The readers may refer to the preceding paragraphs for details.

\begin{algorithm}[t]
\caption{The Coefficients and Noise Update (Problem~\eqref{eq:solve_re})}
\label{alg:re}
\begin{algorithmic}[1]
    \REQUIRE $L \in \Rpd$, $\bz \in \Rp$, parameter $\lambda_2$ and a small jitter $\epsilon$.
    \ENSURE Optimal $\br$ and $\be$.
    \STATE Initialize $\be = \mathbf{0}$.
    \REPEAT
        \STATE Compute the potential solution $\br^{}_0$ given in~\eqref{eq:r_0}.
        
        \IF{$\lV \br^{}_0 \rV_2 \leq 1$}
        \STATE Update $\br$ with 
        	\begin{align*}
        	\br = \br^{}_0,
      	    \end{align*}
        \ELSE
        \STATE Update $\br$ by Algorithm~\ref{alg:solve r eta}.
        \ENDIF
            
        \STATE Update the noise $\be$.
    \UNTIL{convergence}
\end{algorithmic}
\end{algorithm}

\subsection{Update the basis}
With all the past filtration $\mathcal{F}_t = \{ \bz_i, \br_i, \be_i \}_{i=1}^{t}$ on hand, we are able to compute a new basis $L_t$ by minimizing the surrogate function~\eqref{eq:g_t(L)}. That is, we are to solve the following program:
\begin{equation}
\label{eq:solve g_t primal}
\min_L\quad \frac{1}{t}\sum_{i=1}^t \tl(\bz_i, L, \br_i, \be_i) + \frac{\lambda_1}{2t} \twoinfnorm{L}^2.
\end{equation}
By a simple expansion, for any $i \in [t]$, we have
\begin{align}
\tl(\bz_i, L, \br_i, \be_i) = \fractwo{1} \tr\( L\trans L \br_i \br_i\trans \) - \tr\( L\trans (\bz_i - \be_i ) \br_i\trans \) + \fractwo{1} \twonorm{\bz_i - \be_i }^2 + \lambda_2 \th(\be_i).
\end{align}
Substituting back into Problem~\eqref{eq:solve g_t primal}, putting $A_t = \sum_{i=1}^{t} \br_i \br_i\trans$, $B_t = \sum_{i=1}^{t} (\bz_i - \be_i ) \br_i\trans$ and removing constant terms, we obtain
\begin{equation}
\label{eq:alg:all:solve L}
L_t = \argmin_L \frac{1}{t} \(\frac{1}{2} \tr \( L\trans L A_t \) - \tr \( L\trans B_t \) \) + \frac{\lambda_1}{2t} \twoinfnorm{L}^2.
\end{equation}

In order to derive the optimal solution, firstly, we need to characterize the subgradient of the squared $\ell_{\tinf}$ norm. In fact, let $Q$ be a positive semi-definite diagonal matrix, such that $\tr(Q)=1$. Denote the set of row index which attains the maximum $\ell_2$ row norm of $L$ by $\mathcal{I}$. In this way, the subgradient of $\fractwo{1}\twoinfnorm{L}^2$ can be formalized as follows:
\begin{equation}
\label{eq:subgradient}
\partial \( \fractwo{1} \twoinfnorm{L}^2 \) = QL,\ Q_{ii} \neq 0\ \text{if and only if}\ i \in \mathcal{I},\ Q_{ij} = 0\ \text{for}\ i \neq j.
\end{equation}

Equipped with the subgradient, we may apply block coordinate descent to update each column of $L$ sequentially. We assume that the objective function~\eqref{eq:alg:all:solve L} is strongly convex w.r.t. $L$, implying that the block coordinate descent scheme can always converge to the global optimum~\cite{bertsekas1999nonlinear}.

We summarize the update procedure in Algorithm~\ref{alg:L}. In practice, we find that after revealing a large number of samples, performing one-pass updating for each column of $L$ is sufficient to guarantee a desirable accuracy, which matches the observation in~\cite{mairal2010online}.

\begin{algorithm}
\caption{The Basis Update}
\label{alg:L}
\begin{algorithmic}[1]
    \REQUIRE $L \in \Rpd$ in the previous iteration, accumulation matrix $A$ and $B$, parameter $\lambda_1$.
    \ENSURE Optimal basis $L$ (updated).
    \REPEAT
      \STATE Compute the subgradient of $\fractwo{1} \twoinfnorm{L}^2$:
      \begin{align*}
        U = \partial \(\fractwo{1}\twoinfnorm{L}^2 \).
      \end{align*}
      
      \FOR{$j = 1$ to $d$}
      \STATE Update the $j$th column:
      \begin{align*}
        \bl_j \leftarrow \bl_j - \frac{1}{A_{jj}} \( L \ba_j - \bb_j + \lambda_1 \bu_j \)
      \end{align*}
      \ENDFOR
    \UNTIL{convergence}
\end{algorithmic}
\end{algorithm}

\subsection{Memory and Computational Cost}
As one of the main contributions of this paper, our OMRMD algorithm (i.e., Algorithm~\ref{alg:all}) is appealing for large-scale problems (the regime $d < p \ll n$) since the memory cost is independent of $n$. To see this, note that when computing the optimal coefficients and noise, only $\bz_t$ and $L_{t-1}$ are accessed, which cost $O(pd)$. To store the accumulation matrix $A_t$, we need $O(d^2)$ memory while that for $B_t$ is $O(pd)$. Finally, we find that only $A_t$ and $B_t$ are needed for the computation of the new iterate $L_t$. Therefore, the total memory cost of OMRMD is $O(pd)$, i.e., independent of $n$. In contrast, the SDP formulation introduced by~\cite{srebro2004mmmf} requires $O((p+n)^2)$ memory usage, the local-search heuristic algorithm~\cite{srebro2005fast} needs $O(d(p+n))$ and no convergence guarantee was derived. Even for a recently proposed algorithm~\cite{srebro2010practical}, they require to store the entire data matrix and thus the memory cost is $O(pn)$.

In terms of computational efficiency, our algorithm can be fast, although this is not the main point of this work. One may have noticed that the computation is dominated by solving Problem~\eqref{eq:solve_re}. The computational complexity of~\eqref{eq:lagrange_r} involves an inverse of a $d\times d$ matrix followed by a matrix-matrix and a matrix-vector multiplication, totally $O(pd^2)$. For the basis update, obtaining a subgradient of the squared $\ell_{\tinf}$ norm is $O(pd)$ since we need to calculate the $\ell_2$ norm for all rows of $L$ followed by a multiplication with a diagonal matrix (see~\eqref{eq:subgradient}). A one-pass update for the columns of $L$, as shown in Algorithm~\ref{alg:L} costs $O(pd^2)$. Thus, the computational complexity of OMRMD is $O(pd^2)$. Note that the quadratic dependence on $d$ is acceptable in most cases since $d$ is the estimated rank and hence typically much smaller than $p$.

\section{Theoretical Analysis and Proof Sketch}
\label{sec:main results}
In this section we present our main theoretical result regarding the validity of the proposed algorithm. We first discuss some necessary assumptions.

\subsection{Assumptions}
\begin{enumerate}[label=$(A\arabic*)$]
\item The observed samples are independent and identically distributed (i.i.d.) with a compact support $\mathcal{Z}$. This is a very common scenario  in real-world applications.
\label{as:z}

\item The surrogate functions $g_t(L)$ in~\eqref{eq:g_t(L)} are strongly convex. In particular, we assume that the smallest singular value of the positive semi-definite matrix $\frac{1}{t}A_t$ defined in Algorithm~\ref{alg:all} is not smaller than some positive constant $\beta_1$. 
\label{as:g_t(L)}

\item The minimizer for Problem~\eqref{eq:l(z,L)} is unique. Notice that $\tilde{\ell}(\bz, L, \br, \be)$ is strongly convex w.r.t. $\be$ and convex w.r.t. $\br$. We can enforce this assumption by adding a jitter $\fractwo{\epsilon} \Vert \br \Vert_2^2$ to the objective function, where $\epsilon$ is a small positive constant. \label{as:unique}
\end{enumerate}

\subsection{Main Results}
It is easy to see that Algorithm~\ref{alg:all} is devised to optimize the empirical loss function~\eqref{eq:f_n(L)}. In stochastic optimization, we are mainly interested in the expected loss function, which is defined as the averaged loss incurred when the number of samples goes to infinity. If we assume that each sample is independently and identically distributed (i.i.d.), we have
\begin{equation}
  f(L) \defeq \lim_{n \rightarrow \infty}f_n(L) = \EXP_{\bz}[\ell( \bz, L)]. 
  \label{eq:f(L)}
\end{equation}
The main theoretical result of this work is stated as follows.
\begin{theorem}[Convergence to a stationary point of the expected loss function]
\label{thm:stationary}
Let $\{L_t\}_{t=1}^{\infty}$ be the sequence of solutions produced by Algorithm~\ref{alg:all}. Then, the sequence converges to a stationary point of the expected loss function~\eqref{eq:f(L)} when $t$ tends to infinity.
\end{theorem}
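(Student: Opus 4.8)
The plan is to follow the surrogate-function paradigm for online learning, organized around three objects already at hand: the surrogate $g_t$ in~\eqref{eq:g_t(L)}, the empirical loss $f_t$ in~\eqref{eq:f_n(L)}, and the expected loss $f$ in~\eqref{eq:f(L)}. The skeleton is to prove (i) that the value sequence $g_t(L_t)$ converges almost surely, (ii) that the gap $g_t(L_t)-f_t(L_t)$ and the sampling gap $f_t(L_t)-f(L_t)$ both vanish, and (iii) that this forces a subgradient of $f$ at $L_t$ to tend to zero, which is exactly first-order stationarity. Throughout I will use that, by construction, the suboptimal codes $\br_i,\be_i$ make $g_t$ dominate $f_t$ pointwise, since $\tl(\bz_i,L,\br_i,\be_i)\ge\ell(\bz_i,L)$ for every $L$.

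First I would nail down the regularity that makes the rest go through. From the constraint $\twonorm{\br_i}\le 1$ and the compact support of the data (Assumption (A1)), the iterates $\br_i,\be_i$, and hence the normalized accumulators $\tfrac{1}{t}A_t$ and $\tfrac{1}{t}B_t$, are uniformly bounded. Combined with the strong-convexity Assumption (A2), namely $\tfrac{1}{t}A_t\succeq\beta_1 I$, and the elementary bound $g_t(L_t)\le g_t(0)=O(1)$, a coercivity argument of the form $\tfrac{\beta_1}{2}\fronorm{L_t}^2 - C\fronorm{L_t}\le \text{const}$ yields a uniform bound $\fronorm{L_t}\le D$. This is where we genuinely \emph{prove}, rather than assume, boundedness of the basis. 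On the resulting compact set, Assumption (A3) and Danskin's theorem give that $\ell(\bz,\cdot)$ is differentiable with Lipschitz gradient, and the smooth parts of $g_t$ and $f_t$ inherit uniformly Lipschitz gradients.

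Next comes the key recursion. Because $L_{t+1}$ minimizes $g_{t+1}$ and $(t{+}1)g_{t+1}=t\,g_t+\tl(\bz_{t+1},\cdot)$ evaluated at the codes computed from $L_t$, one obtains $g_{t+1}(L_{t+1})-g_t(L_t)\le\tfrac{1}{t+1}\big(\ell(\bz_{t+1},L_t)-g_t(L_t)\big)$, and the domination $g_t\ge f_t$ bounds the right-hand side by $\tfrac{1}{t+1}\big(\ell(\bz_{t+1},L_t)-f_t(L_t)\big)$. Taking conditional expectation given $\mathcal{F}_t$ and using $\EXP[\ell(\bz_{t+1},L_t)\mid\mathcal{F}_t]=f(L_t)$ shows the positive part of the expected increment is $O(t^{-3/2})$ (the factor $t^{-1/2}$ coming from a Donsker-type uniform-convergence bound for $\EXP\lvert f(L_t)-f_t(L_t)\rvert$ over the compact basis set), hence summable; Fisk's quasi-martingale theorem then delivers almost-sure convergence of $g_t(L_t)$. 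Summing the telescoping recursion together with the same concentration bound gives $\sum_t\tfrac{1}{t+1}\big(g_t(L_t)-f_t(L_t)\big)<\infty$. A stability estimate $\fronorm{L_{t+1}-L_t}=O(1/t)$, obtained from strong convexity (A2) and the Lipschitz regularity above, then lets me invoke the standard deterministic lemma converting this weighted summability into $g_t(L_t)-f_t(L_t)\to 0$; combined with $f_t(L_t)-f(L_t)\to 0$ it yields $g_t(L_t)-f(L_t)\to 0$.

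Finally I would pass from vanishing value-gap to vanishing gradient. The difference $h_t\defeq g_t-f_t$ is nonnegative, and crucially the nonsmooth term $\tfrac{\lambda_1}{2t}\twoinfnorm{\cdot}^2$ enters both $g_t$ and $f_t$ identically and \emph{cancels}, so $h_t$ is smooth with Lipschitz gradient on the compact set. The descent-lemma inequality $\twonorm{\nabla h_t(L_t)}^2\le 2M\,h_t(L_t)$ turns $h_t(L_t)\to 0$ into $\nabla h_t(L_t)\to 0$, i.e. the smooth parts of $f_t$ and $g_t$ have asymptotically equal gradients at $L_t$. Since $L_t$ satisfies the first-order condition $0\in\partial g_t(L_t)$ and the max-norm term contributes to $f$ with weight $\lambda_1/(2t)\to 0$ against a bounded subgradient, we conclude $\nabla f(L_t)\to 0$, the claimed stationarity. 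I expect the principal obstacle to be the nonsmoothness of the squared $\ell_{2,\infty}$ regularizer: one must verify that its cancellation in $h_t$ is legitimate and that the bounded subgradient set characterized in~\eqref{eq:subgradient} can be carried through the limit so that its residual contribution is genuinely $o(1)$; the boundedness argument of the second paragraph and the compact-support Assumption (A1) are what keep all constants uniform in $t$.
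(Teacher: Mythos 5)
Your Stages I--III track the paper's proof closely and are sound: the coercivity bound $\tfrac{\beta_1}{2}\fronorm{L_t}^2 - b\,\fronorm{L_t} \le 0$ extracted from $g_t(L_t)\le g_t(0)$ is a clean (arguably cleaner) substitute for the paper's optimality-condition argument for boundedness of the basis, and the quasi-martingale/Donsker step, the $O(1/t)$ stability of $L_t$, and the appeal to the deterministic lemma of Mairal et al.\ are exactly the paper's Stages II--III. The genuine gap is in your final stage. Your descent-lemma argument applied to $h_t = g_t - f_t$ (whose regularizers do cancel, as you say) yields $\nabla h_t(L_t)\to 0$; combining this with $0\in\partial g_t(L_t)$ and the $O(1/t)$ bound on the regularizer's subgradient gives
\begin{equation*}
\nabla f'_t(L_t) \longrightarrow 0, \qquad \text{where } f'_t(L) \defeq \frac{1}{t}\sum_{i=1}^{t}\ell(\bz_i,L),
\end{equation*}
i.e., vanishing of the gradient of the \emph{empirical} loss at $L_t$. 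The theorem, however, asserts $\nabla f(L_t)\to 0$ for the \emph{expected} loss $f=\EXP_{\bz}[\ell(\bz,\cdot)]$, and nothing in your argument bridges $\nabla f'_t(L_t)$ and $\nabla f(L_t)$: uniform convergence of the function values $\lV f'_t - f\rV_{\infty}\to 0$ does not by itself transfer to gradients.

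The paper circumvents this by never differentiating the empirical loss: it passes to the limit in the pointwise domination $g_t\ge f_t$ (using the Donsker bound to replace $f_t$ by $f$), so that along a convergent subsequence $g_{\infty}\ge f$ with equality at $L_{\infty}$ (Stage III), and a first-order Taylor expansion at this touching point forces $\nabla g_{\infty}(L_{\infty})=\nabla f(L_{\infty})=0$ --- the same ``nonnegative gap with zero value has zero gradient'' idea as your descent lemma, but applied to the pair $(g_{\infty},f)$ rather than $(g_t,f_t)$, which is precisely what makes the population gradient appear. Your route is repairable by adding a uniform gradient-concentration step (e.g., since $\nabla f'_t$ and $\nabla f$ are equi-Lipschitz on $\mathcal{L}$, the bound $\lV f'_t - f\rV_{\infty}\to 0$ upgrades to $\sup_{L\in\mathcal{L}}\fronorm{\nabla f'_t(L)-\nabla f(L)}\to 0$), but note that this equi-Lipschitzness of gradients requires Lipschitz continuity of the optimal codes $(\br^*,\be^*)$ as functions of $L$ --- the substantive content of the paper's final-stage proposition, proved there via a support-set and strong-convexity argument under Assumption~(A3) --- and is not a free consequence of Danskin's theorem, which only delivers differentiability. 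Relatedly, your closing worry is misplaced: the cancellation of the $\ell_{2,\infty}$ term in $h_t$ is immediate from the definitions of $g_t$ and $f_t$; the real obstacles are the two points just named.
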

\begin{remark}
The theorem establishes the validity of our algorithm. Note that on one hand, the transformation~\eqref{eq:max-def} facilitates an amenable way for the online implementation of the max-norm. On the other hand, due to the non-convexity of our new formulation~\eqref{eq:batch-max}, it is generally hard to desire a local, or even a global minimizer~\cite{bertsekas1999nonlinear}. Although Burer and Monteiro~\cite{burer2005local} showed that any local minimum of an SDP is also the global optimum under some conditions (note that the max-norm problem can be transformed to an SDP~\cite{srebro2004mmmf}), it is hard to determine if a solution is a local optima or a stationary point. From the empirical study in Section~\ref{sec:exp}, we find that the solutions produced by our algorithm always converge to the global optima when the samples are independently and identically drawn from a Gaussian distribution. We leave further analysis on the rationale as our future work.
\end{remark}

\subsection{Proof Outline}
The essential tools for our analysis are from stochastic approximation~\cite{bottou1998online} and asymptotic statistics~\cite{van2000asymptotic}. There are four key stages in our proof and one may find the full proof in Appendix~\ref{supp:sec:proof}. 

\vspace{0.05in}
\noindent{\bf Stage I.} \ 
We first show that all the stochastic variables $\{ L_t, \br_t, \be_t \}_{t=1}^{\infty}$ are uniformly bounded. The property is crucial because it justifies that the problem we solve is well-defined. Also, the uniform boundedness will be heavily used for deriving subsequent important results (e.g., the Lipschitz of the surrogate) to establish our main theorem.

\begin{proposition}[Uniform bound of all stochastic variables]
	\label{prop:bound:reABL}
	Let $\{\br_t, \be_t, L_t\}_{t=1}^{\infty}$ be the sequence of optimal solutions produced by Algorithm~\ref{alg:all}. Then,
	
	\begin{enumerate}
		\item For any $t > 0$, the optimal solutions $\br_t$ and $\be_t$  are uniformly bounded.
		\item  For any $t > 0$, the accumulation matrices $\frac{1}{t}A_t$ and $\frac{1}{t}B_t$ are uniformly bounded.
		\item There exists a compact set $\mathcal{L}$, such that for any $t > 0$, we have $L_t \in \mathcal{L}$.
	\end{enumerate}
\end{proposition}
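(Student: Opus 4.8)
The plan is to establish the three claims in order, since each feeds the next. The only external ingredients needed are Assumption~\ref{as:z} (compact support, so $\twonorm{\bz}\le M_z$ for some constant $M_z$ and all $\bz\in\mathcal{Z}$), Assumption~\ref{as:g_t(L)} (the averaged matrix $\frac{1}{t}A_t$ has smallest singular value at least $\beta_1>0$), the feasibility constraint $\twonorm{\br_t}\le 1$ built into Algorithm~\ref{alg:all}, and optimality comparisons against simple feasible points; ``uniformly bounded'' means bounded by a constant independent of $t$. For the first claim, the bound on $\br_t$ is immediate, since $\br_t$ is feasible and hence $\twonorm{\br_t}\le 1$ for every $t$. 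For $\be_t$ I would argue by optimality: as $(\br_t,\be_t)$ minimizes $\tl(\bz_t,L_{t-1},\cdot,\cdot)$ over the feasible set and $(\boldsymbol{0},\boldsymbol{0})$ is feasible,
\begin{equation*}
\fractwo{1}\twonorm{\bz_t-L_{t-1}\br_t-\be_t}^2+\lambda_2\th(\be_t)\le \tl(\bz_t,L_{t-1},\boldsymbol{0},\boldsymbol{0})=\fractwo{1}\twonorm{\bz_t}^2\le\fractwo{1}M_z^2.
\end{equation*}
Dropping the nonnegative residual term gives $\th(\be_t)\le M_z^2/(2\lambda_2)$, independent of $t$, and since every admissible regularizer $\th$ (the $\ell_1$ norm, the $\ell_2$ norm, or the squared $\ell_2$ norm) is coercive with bounded sublevel sets, this confines $\be_t$ to a fixed bounded set. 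Crucially, this argument never invokes $L_{t-1}$, so there is no circularity with the third claim.

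For the second claim, with $\br_t$ and $\be_t$ under control the accumulation matrices follow by the triangle inequality on their defining averages. For $\frac{1}{t}A_t=\frac{1}{t}\sum_{i=1}^t\br_i\br_i\trans$, each summand satisfies $\fronorm{\br_i\br_i\trans}=\twonorm{\br_i}^2\le 1$, so $\fronorm{\frac{1}{t}A_t}\le 1$. For $\frac{1}{t}B_t=\frac{1}{t}\sum_{i=1}^t(\bz_i-\be_i)\br_i\trans$, each summand is bounded by $\twonorm{\bz_i-\be_i}\,\twonorm{\br_i}\le M_z+M_e$, where $M_e$ is the bound on $\twonorm{\be_i}$ from the first claim; averaging preserves the bound. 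Hence both matrices are uniformly bounded, with $C_B$ denoting the resulting bound on $\fronorm{\frac{1}{t}B_t}$.

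The third claim is the main obstacle. The subtlety is that the penalty $\frac{\lambda_1}{2t}\twoinfnorm{L}^2$ in the surrogate decays as $t$ grows, so coercivity of $L_t$ cannot be inherited from the regularizer; it must instead come from the strong convexity of the quadratic data-fitting term, which is exactly what Assumption~\ref{as:g_t(L)} supplies. I would again use optimality: $L_t$ minimizes the surrogate, so $g_t(L_t)\le g_t(\boldsymbol{0})=0$, that is,
\begin{equation*}
\fractwo{1}\tr\!\big(L_t\trans L_t\,\tfrac{1}{t}A_t\big)-\tr\!\big(L_t\trans\,\tfrac{1}{t}B_t\big)+\frac{\lambda_1}{2t}\twoinfnorm{L_t}^2\le 0.
\end{equation*}
Using $\frac{1}{t}A_t\succeq\beta_1 I$ with the cyclic property of the trace gives $\tr(L_t\trans L_t\,\frac{1}{t}A_t)\ge\beta_1\fronorm{L_t}^2$; Cauchy--Schwarz gives $\tr(L_t\trans\,\frac{1}{t}B_t)\le C_B\fronorm{L_t}$; and the penalty term is nonnegative. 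Combining these yields $\frac{\beta_1}{2}\fronorm{L_t}^2-C_B\fronorm{L_t}\le 0$, whence $\fronorm{L_t}\le 2C_B/\beta_1$ for every $t$, and taking $\mathcal{L}$ to be the Frobenius ball of that radius completes the argument. The one point warranting care is the trace lower bound: one must check that $\frac{1}{t}A_t-\beta_1 I\succeq 0$ paired with $L_t\trans L_t\succeq 0$ forces the trace of their product to be nonnegative, which is precisely where Assumption~\ref{as:g_t(L)} enters.
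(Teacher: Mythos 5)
Your proof is correct, and Claims 1 and 2 match the paper's argument essentially verbatim (feasibility for $\br_t$, comparison against the trivial point $(\mathbf{0},\mathbf{0})$ for $\be_t$, then averaging of uniformly bounded summands). For Claim 3, however, you take a genuinely different and in fact cleaner route. The paper works from the \emph{first-order} optimality condition of the surrogate: it writes $L_t \widetilde{A}_t = \widetilde{B}_t - \frac{\lambda_1}{t}U_t$ with $\widetilde{A}_t = \frac{1}{t}A_t$, inverts $\widetilde{A}_t$ (using Assumption~\ref{as:g_t(L)}), bounds the subgradient via $\fronorm{U_t}\le\fronorm{L_t}$, and absorbs the resulting $\frac{\lambda_1}{t}\fronorm{L_t}$ term into the left-hand side --- an absorption that only works once $t \geq 2\lambda_1 a_2$, which forces a separate argument for the finitely many small $t$ (there the paper compares against $L=0$ but extracts coercivity from the un-normalized regularizer $\frac{\lambda_1}{2}\twoinfnorm{L}^2$), and finally takes the maximum of the two bounds. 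You instead use a single \emph{zeroth-order} optimality comparison against $L=0$ valid for all $t$, extracting coercivity from the quadratic term via $\tr\(L_t\trans L_t \frac{1}{t}A_t\) \geq \beta_1 \fronorm{L_t}^2$, which follows from Assumption~\ref{as:g_t(L)} and the fact that the trace of a product of two positive semi-definite matrices is nonnegative. This avoids the matrix inverse, the subgradient bound, and the case split, uses exactly the same assumptions as the paper, and yields the explicit uniform bound $\fronorm{L_t}\le 2C_B/\beta_1$. Two cosmetic points: your statement $g_t(\mathbf{0})=0$ is literally false, since $g_t$ contains terms $\frac{1}{2}\twonorm{\bz_i-\be_i}^2 + \lambda_2\th(\be_i)$ that do not depend on $L$; the displayed inequality you use is the correct one for the \emph{reduced} objective obtained after dropping these constants, so the logic is unaffected, but you should phrase it that way. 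Also, your appeal to coercivity of general admissible $\th$ in Claim 1 is slightly more careful than the paper, which silently specializes to $\th = \onenorm{\cdot}$.
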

\begin{proof}
(Sketch) The uniform bound of $\be_t$ follows by constructing a trivial solution $(\mathbf{0}, \mathbf{0})$ for ~\eqref{eq:tildel}, which results in an upper bound for the optimum of the objective function. Notably, the upper bound here only involves a quantity on $\twonorm{\bz_t}$, which is assumed to be uniformly bounded. Since $\br_t$ is always upper bounded by the unit, the first claim follows. The second claim follows immediately by combining the first claim and Assumption~\ref{as:z}. In order to show $L_t$ is uniformly bounded, we utilize the first order optimality condition of the surrogate~\eqref{eq:g_t(L)}. Since $\frac{1}{t}A_t$ is positive definite, we can represent $L_t$ in terms of $\frac{1}{t}B_t$, $U_t$ and the inverse of $\frac{1}{t}A_t$, where $U_t$ is the subgradient, whose Frobenius norm is in turn bounded by that of $L_t$. Hence, it follows that $L_t$ can be uniformly bounded.
\end{proof}
\begin{remark}
Note that in~\cite{mairal2010online,feng2013online}, both of them assume that the dictionary (or basis) is uniformly bounded. In contrast, we prove that such condition naturally holds in our problem.
\end{remark}

\begin{corollary}[Uniform bound and Lipschitz of the surrogate]
\label{coro:bound l lip gt}
Following the notation in Proposition~\ref{prop:bound:reABL}, we have for all $t > 0$,
\begin{enumerate}
	\item  $\tilde{\ell}\(\bz_t, L_t, \br_t, \be_t\)$~\eqref{eq:tildel} and $\ell\(\bz_t, L_t\)$~\eqref{eq:l(z,L)} are both uniformly bounded.
		
	\item The surrogate function, i.e., $g_t(L)$ defined in~\eqref{eq:g_t(L)} is uniformly bounded over $\mathcal{L}$.
		
	\item Moreover, $g_t(L)$ is uniformly Lipschitz over the compact set $\mathcal{L}$.
\end{enumerate}
\end{corollary}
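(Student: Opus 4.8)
The plan is to derive all three claims from a single source, namely the uniform boundedness furnished by Proposition~\ref{prop:bound:reABL} together with the compact support of the samples in Assumption~\ref{as:z}; once every quantity entering $\tl$ and $g_t$ is controlled by constants independent of $t$, the three statements follow by triangle-inequality bookkeeping and one subgradient estimate. For the first claim, I would bound the residual in $\tl(\bz_t, L_t, \br_t, \be_t) = \tildel$ directly via $\twonorm{\bz_t - L_t\br_t - \be_t} \le \twonorm{\bz_t} + \twonorm{L_t \br_t} + \twonorm{\be_t}$, where $\twonorm{\bz_t}$ is bounded by Assumption~\ref{as:z}, $\twonorm{L_t\br_t}$ is bounded because $L_t$ lies in the compact set $\mathcal{L}$ and $\twonorm{\br_t}\le 1$, and $\twonorm{\be_t}$ is bounded by Proposition~\ref{prop:bound:reABL}; continuity of the regularizer $\th$ then bounds $\lambda_2\th(\be_t)$. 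For $\ell(\bz_t, L_t)$, which by~\eqref{eq:l(z,L)} is a minimum, I would sandwich it: nonnegativity of both summands gives $\ell \ge 0$, while evaluating at the trivial feasible point $(\br,\be)=(\mathbf{0},\mathbf{0})$ gives $\ell(\bz_t, L_t) \le \fractwo{1}\twonorm{\bz_t}^2 \le \fractwo{1}\sup_{\bz\in\mathcal{Z}}\twonorm{\bz}^2$, a finite constant.

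For the second claim I would observe that in $g_t(L)$ of~\eqref{eq:g_t(L)} the stored pairs $\{\br_i,\be_i\}$ are fixed and only $L$ varies over $\mathcal{L}$. Each summand $\tl(\bz_i,L,\br_i,\be_i)$ is bounded by exactly the argument above, uniformly in $i$ since $\bz_i,\br_i,\be_i$ are all uniformly bounded, so the average is bounded; the term $\frac{\lambda_1}{2t}\twoinfnorm{L}^2$ is bounded because $L\in\mathcal{L}$ is compact and $1/t \le 1$. Adding these gives a bound uniform in both $L\in\mathcal{L}$ and $t$.

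The third claim is the main part. I would prove uniform Lipschitzness by bounding a subgradient of $g_t$ uniformly over $\mathcal{L}$, after taking $\mathcal{L}$ to lie inside a convex compact set (e.g.\ a ball), so that a uniform subgradient bound on a convex function yields the Lipschitz constant. Using the trace expansion, the $L$-dependent part of $g_t$ is $\fractwo{1}\tr(L\trans L\cdot\tfrac{1}{t}A_t) - \tr(L\trans\cdot\tfrac{1}{t}B_t) + \frac{\lambda_1}{2t}\twoinfnorm{L}^2$, the constant $\th(\be_i)$ terms dropping out. Differentiating the smooth part gives the gradient $L\cdot\tfrac{1}{t}A_t - \tfrac{1}{t}B_t$, whose Frobenius norm is controlled by $\fronorm{L}$ and the uniform bounds on $\tfrac{1}{t}A_t$ and $\tfrac{1}{t}B_t$ from Proposition~\ref{prop:bound:reABL}. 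For the nonsmooth term I would invoke~\eqref{eq:subgradient}: a subgradient of $\frac{\lambda_1}{2t}\twoinfnorm{L}^2$ is $\frac{\lambda_1}{t}QL$ with $Q$ positive semi-definite diagonal and $\tr(Q)=1$, hence spectral norm at most one, so its Frobenius norm is at most $\frac{\lambda_1}{t}\fronorm{L}\le\lambda_1\fronorm{L}$. Summing the two contributions gives a subgradient bound independent of $t$, and since $g_t$ is convex (the quadratic form uses the positive semi-definite $\tfrac{1}{t}A_t$), a uniform subgradient bound $K$ on a convex set implies $K$-Lipschitzness there.

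The hard part will be the third claim, and precisely the requirement that the Lipschitz constant not grow with $t$. This hinges on two points to be checked carefully: that the $1/t$ scaling keeps $\tfrac{1}{t}A_t$ and $\tfrac{1}{t}B_t$ uniformly bounded, which is exactly what Proposition~\ref{prop:bound:reABL} supplies, and that the subgradient of the squared $\ell_{2,\infty}$ norm---a nonsmooth, max-type object whose subdifferential is otherwise delicate---can be bounded cleanly through the representation $QL$ with $\tr(Q)=1$. Everything else reduces to the triangle inequality layered on top of Proposition~\ref{prop:bound:reABL}.
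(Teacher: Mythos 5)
Your proposal is correct and follows essentially the same route as the paper: claims 1--2 come from the uniform bounds on $\bz_t$, $\br_t$, $\be_t$, $L_t$ supplied by Proposition~\ref{prop:bound:reABL} (with the trivial feasible point $(\mathbf{0},\mathbf{0})$ giving the upper bound on $\ell$, exactly as in the paper's proof of that proposition), and claim 3 is obtained by bounding the subgradient $\frac{1}{t}(LA_t - B_t) + \frac{\lambda_1}{t}QL$ uniformly over $\mathcal{L}$. Your explicit justifications that $\fronorm{QL}\le\fronorm{L}$ because $\tr(Q)=1$ forces the spectral norm of $Q$ to be at most one, and that one needs $\mathcal{L}$ to sit inside a convex compact set (here a Frobenius ball) to convert a uniform subgradient bound into a Lipschitz constant, are details the paper leaves implicit.
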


\vspace{0.05in}
\noindent{\bf Stage II.} \ 
We next present that the positive stochastic process $\{g_t(L_t)\}_{t=1}^{\infty}$ converges almost surely. To establish the convergence, we verify that $\{g_t(L_t)\}_{t=1}^{\infty}$ is a quasi-martingale~\cite{bottou1998online} that converges almost surely. To this end, we show that the expectation of the discrepancy of $g_{t+1}(L_{t+1})$ and $g_t(L_t)$ can be upper bounded by a family of functions $\ell(\cdot, L)$ indexed by $L \in \mathcal{L}$. Then we show that the family of the functions is P-Donsker~\cite{van2000asymptotic}, the summands of which concentrate around its expectation within an $O(1/\sqrt{n})$ ball almost surely. Therefore, we conclude that $\{g_t(L_t)\}_{t=1}^{\infty}$ is a quasi-martingale and converges almost surely.

\begin{proposition}
	\label{prop:l:Lipschtiz}
	Let $L \in \mathcal{L}$ and denote the minimizer of $\tilde{\ell}(\bz, L, \br, \be)$ as:
	\begin{align*}
	\{\br^*, \be^*\} = \argmin_{\br, \be, \twonorm{\br} \leq 1} \tildel.
	\end{align*}
	Then, the function $\ell(\bz, L)$ defined in Problem~\eqref{eq:l(z,L)} is continuously differentiable and
	\begin{align*}
	\nabla_L \ell(\bz, L) = (L\br^* + \be^* - \bz) \br^{*\top}.
	\end{align*}
	Furthermore, $\ell(\bz, \cdot)$ is uniformly Lipschitz over the compact set $\mathcal{L}$.
\end{proposition}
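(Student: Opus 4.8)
The plan is to recognize $\ell(\bz, L)$ as the value function of the parametric minimization problem~\eqref{eq:l(z,L)} and to differentiate it through a marginal-function (Danskin-type) theorem. First I would note that, although $\be$ is nominally unconstrained, Proposition~\ref{prop:bound:reABL} confines the optimal $\be^*$ to a fixed bounded region; together with the explicit constraint $\twonorm{\br} \le 1$, this lets me rewrite $\ell(\bz, L) = \min_{(\br, \be) \in \mathcal{C}} \tilde{\ell}(\bz, L, \br, \be)$ over a compact feasible set $\mathcal{C}$ without changing the minimum. On $\mathcal{C}$ the integrand is jointly continuous and, crucially, continuously differentiable in $L$: expanding the residual term in~\eqref{eq:tildel} and using that $\lambda_2 \th(\be)$ is free of $L$ gives $\nabla_L \tilde{\ell}(\bz, L, \br, \be) = (L\br + \be - \bz)\br\trans$.

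Next I would invoke the differentiability theorem for marginal functions. The possible nonsmoothness of $\th$ (for instance, the $\ell_1$ norm) is not an obstacle, because the theorem requires smoothness only in the parameter $L$, not in the inner variables $(\br, \be)$. Assumption~\ref{as:unique} provides the uniqueness of the minimizer $(\br^*, \be^*)$, which is exactly what promotes the one-sided directional derivative to a genuine gradient. This yields differentiability of $\ell(\bz, \cdot)$ with $\nabla_L \ell(\bz, L) = \nabla_L \tilde{\ell}(\bz, L, \br^*, \be^*) = (L\br^* + \be^* - \bz)\br^{*\top}$, the claimed formula.

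For continuity of the gradient I would appeal to Berge's maximum theorem: the objective is jointly continuous and the compactified feasible set is fixed, so the argmin correspondence is upper semicontinuous, and single-valuedness from Assumption~\ref{as:unique} upgrades it to a continuous map $L \mapsto (\br^*(L), \be^*(L))$. Since the gradient is a continuous function of the triple $(L, \br^*, \be^*)$, it is continuous in $L$, establishing continuous differentiability. For the uniform Lipschitz property, rather than integrate the gradient I would use the clean comparison argument: for any $L_1, L_2 \in \mathcal{L}$, optimality and feasibility give $\ell(\bz, L_1) - \ell(\bz, L_2) \le \tilde{\ell}(\bz, L_1, \br_2^*, \be_2^*) - \tilde{\ell}(\bz, L_2, \br_2^*, \be_2^*)$, where the $\th$ terms cancel and the remaining difference of squared residuals is controlled by $\fronorm{L_1 - L_2}$ times a constant. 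Proposition~\ref{prop:bound:reABL} bounds $\br^*, \be^*, L$ and Assumption~\ref{as:z} bounds $\twonorm{\bz}$, so this constant is uniform in both $L$ and $\bz$; symmetrizing in $L_1, L_2$ gives the uniform Lipschitz bound.

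I expect the main obstacle to be the rigorous application of the marginal-function theorem in the face of two complications: the $\be$-domain is a priori unbounded, and $\th$ may be nonsmooth. Both are resolved by the observations above --- compactifying the $\be$-domain via the uniform bound of Proposition~\ref{prop:bound:reABL}, and exploiting that differentiability is needed only in the smooth parameter $L$ --- but verifying that the compactification genuinely leaves the argmin unchanged, and that the theorem's continuity hypotheses hold on $\mathcal{C}$, is where the care concentrates.
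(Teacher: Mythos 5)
Your proposal is correct, and its core --- obtaining the gradient formula from a Danskin-type marginal-function theorem combined with the uniqueness Assumption~\ref{as:unique} --- is exactly the paper's route: the paper invokes a corollary of Theorem 4.1 of Bonnans and Shapiro (stated as Lemma~\ref{lem:1}), whose hypotheses demand smoothness only in the parameter $L$, so the possible nonsmoothness of $\th$ is indeed harmless, just as you observe. You differ from the paper in three places. First, the compactification of the $\be$-domain: Lemma~\ref{lem:1} requires the inner minimization to run over a compact set, whereas the true feasible set $\left\{\twonorm{\br}\le 1\right\}\times\Rp$ is unbounded; the paper applies the lemma over $\mathbb{R}^{d+p}$ without comment, while you explicitly restrict to a compact set containing every optimal $\be^*$ (justified by the trivial-solution bound $\lambda_2\th(\be^*)\le\fractwo{1}\twonorm{\bz}^2$, which is uniform in $L$ --- note this is really the \emph{proof technique} of Proposition~\ref{prop:bound:reABL} rather than its statement, which concerns the algorithm's iterates), thereby closing a small gap in the paper's argument. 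Second, continuity of the gradient: you derive it from Berge's maximum theorem plus single-valuedness of the argmin map, a step the paper leaves implicit in its citation of Lemma~\ref{lem:1}. Third, the Lipschitz claim: the paper notes that every factor in $\nabla_L\ell(\bz,L)=(L\br^*+\be^*-\bz)\br^{*\top}$ is uniformly bounded (Assumption~\ref{as:z} and Proposition~\ref{prop:bound:reABL}), hence the gradient is uniformly bounded and $\ell(\bz,\cdot)$ is Lipschitz on $\mathcal{L}$; you instead use the value-function comparison $\ell(\bz,L_1)-\ell(\bz,L_2)\le\tilde{\ell}(\bz,L_1,\br_2^*,\be_2^*)-\tilde{\ell}(\bz,L_2,\br_2^*,\be_2^*)$, in which the $\th$ terms cancel and the difference of squared residuals is $O\!\(\fronorm{L_1-L_2}\)$ by the same uniform bounds, followed by symmetrization. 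Your comparison argument is more elementary in that it bypasses differentiability entirely (it would survive even without Assumption~\ref{as:unique}), while the paper's bounded-gradient argument is shorter but tacitly uses convexity of $\mathcal{L}$ (a Frobenius ball) to pass from a bounded gradient to a Lipschitz estimate. Both are valid.
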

\begin{proof}
The gradient of $\ell(\bz, \cdot)$ follows from Lemma~\ref{lem:1}. Since each term of $\nabla_L \ell(\bz, L)$ is uniformly bounded, we conclude the uniform Lipschitz property of $\ell(\bz, L)$ w.r.t. $L$.

\end{proof}

\begin{corollary}[Uniform bound and Lipschitz of the empirical loss]
\label{coro:bound lip ft}
Let $f_t(L)$ be the empirical loss function defined in~\eqref{eq:f_n(L)}. Then $f_t(L)$ is uniformly bounded and Lipschitz over the compact set $\mathcal{L}$.
\end{corollary}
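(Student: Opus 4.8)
The plan is to reduce the corollary directly to the per-sample properties already in hand, exploiting that both the sample support $\mathcal{Z}$ (Assumption~\ref{as:z}) and the basis set $\mathcal{L}$ (Proposition~\ref{prop:bound:reABL}) are compact. Writing $f_t(L) = \frac{1}{t}\sum_{i=1}^{t} \ell(\bz_i, L) + \frac{\lambda_1}{2t}\twoinfnorm{L}^2$, it suffices to control each summand \emph{uniformly} in both the index $i$ and the iteration $t$: an average of functions sharing a common bound (respectively a common Lipschitz constant) inherits that same bound (respectively constant), and the shrinking prefactor $1/t \le 1$ can only help.

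First I would establish uniform boundedness. For an upper bound on $\ell(\bz, L)$, I plug the trivial feasible pair $(\br, \be) = (\mathbf{0}, \mathbf{0})$ into the minimization defining $\ell$ in~\eqref{eq:l(z,L)}, which yields $\ell(\bz, L) \le \fractwo{1}\twonorm{\bz}^2 + \lambda_2 \th(\mathbf{0})$; since $\bz \in \mathcal{Z}$ has compact support and $\th(\mathbf{0}) = 0$ for each admissible regularizer, this is a uniform upper bound independent of $L$. The lower bound $\ell(\bz, L) \ge 0$ is immediate from the non-negativity of every term in $\tilde{\ell}$. The penalty term is handled by continuity of $\twoinfnorm{\cdot}^2$ over the compact set $\mathcal{L}$, on which it attains a finite maximum. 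Averaging preserves these two-sided bounds, so $f_t$ is uniformly bounded over $\mathcal{L}$.

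Next I would establish the uniform Lipschitz property. Proposition~\ref{prop:l:Lipschtiz} already gives that each $\ell(\bz_i, \cdot)$ is continuously differentiable over $\mathcal{L}$ with $\nabla_L \ell(\bz_i, L) = (L\br^* + \be^* - \bz_i)\br^{*\top}$. The crucial observation is that the operator norm of this gradient is controlled purely by the uniform bounds on $L$ (from $L \in \mathcal{L}$), on $\br^*$ (via the unit constraint $\twonorm{\br^*} \le 1$), on $\be^*$ (Proposition~\ref{prop:bound:reABL}), and on $\bz_i$ (compactness of $\mathcal{Z}$); hence the associated Lipschitz constant is a single value valid for all samples simultaneously. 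The penalty $L \mapsto \fractwo{1}\twoinfnorm{L}^2$ is convex, therefore locally Lipschitz, and its restriction to $\mathcal{L}$ is Lipschitz with constant bounded by the supremum over $\mathcal{L}$ of the subgradient norms given in~\eqref{eq:subgradient}. Combining these, $f_t(L)$ is an average of uniformly Lipschitz functions plus a uniformly Lipschitz penalty, and is thus itself Lipschitz with a constant independent of $t$.

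The step demanding the most care is verifying that the Lipschitz constant is genuinely uniform across both the samples $\bz_i$ and the iteration count $t$, rather than merely finite for each fixed instance. This hinges entirely on the uniform bounds collected in Proposition~\ref{prop:bound:reABL} together with Assumption~\ref{as:z}, which decouple the gradient estimate from any particular sample or time index; once those are invoked, the averaging argument and the remaining estimates are routine.
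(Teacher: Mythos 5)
Your proposal is correct and follows essentially the same route as the paper: uniform boundedness of $f_t$ is inherited by averaging the uniformly bounded per-sample losses $\ell(\bz_i,L)$ together with the penalty term bounded on the compact set $\mathcal{L}$, and the uniform Lipschitz property comes from bounding the (sub)gradient $\frac{1}{t}\sum_{i=1}^{t}\nabla_L \ell(\bz_i,L)+\frac{\lambda_1}{t}U$ via the gradient formula of Proposition~\ref{prop:l:Lipschtiz} and the uniform bounds on $\bz_i$, $L$, $\br_i$, $\be_i$. The only cosmetic difference is that you re-derive the uniform bound on $\ell$ from the trivial feasible pair $(\mathbf{0},\mathbf{0})$ rather than citing Corollary~\ref{coro:bound l lip gt}, which is itself proved by exactly that argument.
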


\begin{corollary}[P-Donsker of $\ell(\bz, L)$]
\label{coro:l donsker}
The set of measurable functions $\{ \ell(\bz, L),\ L \in \mathcal{L} \}$ is P-Donsker~(see definition in Lemma~\ref{lem:donsker}).
\end{corollary}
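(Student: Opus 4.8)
The plan is to recognize $\{\ell(\bz, L) : L \in \mathcal{L}\}$ as a parametric family indexed by the finite-dimensional matrix $L$, and to invoke the standard sufficient condition for the P-Donsker property (the Lipschitz-in-parameter criterion over a bounded Euclidean index set, e.g. van der Vaart, \emph{Asymptotic Statistics}, Example~19.7, whose definition is recalled in Lemma~\ref{lem:donsker}). Most of the analytic work has already been carried out in the preceding results, so the argument should reduce to checking two hypotheses and then controlling the resulting bracketing integral.

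First I would verify the Lipschitz hypothesis. By Proposition~\ref{prop:l:Lipschtiz}, the map $L \mapsto \ell(\bz, L)$ is continuously differentiable with $\nabla_L \ell(\bz, L) = (L\br^* + \be^* - \bz)\br^{*\top}$, and by the uniform bounds of Proposition~\ref{prop:bound:reABL} this gradient is bounded in Frobenius norm by a \emph{constant} $c$ that does not depend on $\bz$ (it depends only on the compact support $\mathcal{Z}$, the compact set $\mathcal{L}$, and the unit bound $\twonorm{\br^*} \le 1$). The mean value theorem then yields $|\ell(\bz, L_1) - \ell(\bz, L_2)| \le c\,\fronorm{L_1 - L_2}$ for all $L_1, L_2 \in \mathcal{L}$ and all $\bz \in \mathcal{Z}$. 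Thus the Lipschitz envelope may be taken to be the fixed constant $c$, which is trivially square-integrable under the data distribution.

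Second I would use that the index set $\mathcal{L}$ is a compact, hence bounded, subset of the finite-dimensional space $\Rpd$, again by Proposition~\ref{prop:bound:reABL}. Combining boundedness with the uniform Lipschitz estimate, a standard covering argument manufactures brackets for the function class out of an $\epsilon$-net of $\mathcal{L}$: an $\epsilon$-cover of $\mathcal{L}$ in Frobenius norm has cardinality $O(\epsilon^{-pd})$, and each covering ball of radius $\epsilon$ induces an $L^2(P)$-bracket of size $O(c\epsilon)$. Hence the bracketing number satisfies $N_{[\,]}(\epsilon, \{\ell(\bz, L)\}, L^2(P)) = O(\epsilon^{-pd})$, so that the bracketing integral $\int_0^1 \sqrt{\log N_{[\,]}(\epsilon, \cdot, L^2(P))}\, d\epsilon$ converges, since the integrand grows like $\sqrt{pd\,\log(1/\epsilon)}$, which is integrable near the origin. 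The bracketing central limit theorem then delivers the P-Donsker conclusion.

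I do not expect a genuine obstacle, because the nontrivial content — the differentiability and uniform Lipschitz continuity of $\ell(\bz, \cdot)$, together with the compactness of $\mathcal{L}$ — is already supplied by the earlier results. The one point requiring care is that the Lipschitz constant be genuinely uniform in $\bz$, so that the envelope is a fixed constant rather than merely a square-integrable function; this is exactly what the uniform boundedness of all stochastic variables in Proposition~\ref{prop:bound:reABL} guarantees. With that secured, the finite dimensionality of $\mathcal{L}$ makes the entropy bound and the ensuing functional central limit theorem routine.
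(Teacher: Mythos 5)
Your proposal is correct and follows essentially the same route as the paper: the paper's proof likewise checks that $\mathcal{L}$ is bounded (Proposition~\ref{prop:bound:reABL}), that each $\ell(\bz,L)$ is uniformly bounded (Corollary~\ref{coro:bound l lip gt}), and that $\ell(\bz,\cdot)$ is uniformly Lipschitz with a constant uniform in $\bz$ (Proposition~\ref{prop:l:Lipschtiz}), and then invokes Lemma~\ref{lem:donsker} as a black box. Your additional unpacking of the bracketing-entropy argument behind that lemma (the $O(\epsilon^{-pd})$ covering of $\mathcal{L}$ and the convergent bracketing integral) is accurate but is precisely the content the paper delegates to the cited result of van der Vaart.
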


\begin{proposition}[Concentration of the empirical loss]
\label{prop:concentration of ft}
Let $f_t(L)$ and $f(L)$ be the empirical and expected loss functions we defined in~\eqref{eq:f_n(L)} and~\eqref{eq:f(L)}. Then we have
\begin{align*}
\EXP[\sqrt{t}\lV f_t - f \rV_{\infty}] = O(1).
\end{align*}
\end{proposition}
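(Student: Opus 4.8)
The plan is to decompose the deviation $f_t - f$ into a centered empirical-process term and a vanishing regularization term, and then invoke the P-Donsker property established in Corollary~\ref{coro:l donsker}. Writing $P$ for the distribution of $\bz$ and $P_t = \frac{1}{t}\sum_{i=1}^t \delta_{\bz_i}$ for the empirical measure over the first $t$ samples, I would first observe from the definitions~\eqref{eq:f_n(L)} and~\eqref{eq:f(L)} (recalling that the regularizer $\frac{\lambda_1}{2n}\twoinfnorm{L}^2$ vanishes in the limit defining $f$) that for every $L \in \mathcal{L}$,
\begin{align*}
f_t(L) - f(L) = (P_t - P)\ell(\cdot, L) + \frac{\lambda_1}{2t}\twoinfnorm{L}^2.
\end{align*}
Taking the supremum over $L \in \mathcal{L}$ and applying the triangle inequality then gives
\begin{align*}
\lV f_t - f \rV_\infty \leq \sup_{L \in \mathcal{L}}\lv (P_t - P)\ell(\cdot, L) \rv + \frac{\lambda_1}{2t}\sup_{L \in \mathcal{L}}\twoinfnorm{L}^2.
\end{align*}

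The second (regularization) term is easy to control. By Proposition~\ref{prop:bound:reABL}, the iterates live in a compact set $\mathcal{L}$, so $\sup_{L \in \mathcal{L}}\twoinfnorm{L}^2 \leq C$ for some finite constant $C$. Multiplying by $\sqrt{t}$ and taking expectation, this contributes at most $\frac{\lambda_1 C}{2\sqrt{t}}$, which tends to zero and is therefore $O(1)$.

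The heart of the argument is the first (empirical-process) term. Here I would adopt the standard empirical-process notation $\mathbb{G}_t f = \sqrt{t}(P_t - P)f$ and recognize that $\sqrt{t}\sup_{L \in \mathcal{L}}\lv (P_t - P)\ell(\cdot, L) \rv = \lV \mathbb{G}_t \rV_{\mathcal{F}}$, where $\mathcal{F} = \{\ell(\bz, L): L \in \mathcal{L}\}$ is the function class of Corollary~\ref{coro:l donsker}. Since $\mathcal{F}$ is P-Donsker, the process $\mathbb{G}_t$ converges weakly in $\ell^\infty(\mathcal{F})$ to a tight, mean-zero Gaussian process $\mathbb{G}$, and tightness of a Gaussian process guarantees $\EXP\lV \mathbb{G} \rV_{\mathcal{F}} < \infty$. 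To transfer this from the weak limit to the expected sup-norm of $\mathbb{G}_t$ itself, I would combine the weak convergence with a uniform-integrability argument: because $\ell(\bz, L)$ is uniformly bounded over $\mathcal{L}$ by Corollary~\ref{coro:bound l lip gt}, the class admits a bounded, square-integrable envelope, which renders $\{\lV \mathbb{G}_t \rV_{\mathcal{F}}\}_{t}$ uniformly integrable and yields $\EXP\lV \mathbb{G}_t \rV_{\mathcal{F}} \to \EXP\lV \mathbb{G} \rV_{\mathcal{F}} < \infty$. Hence this term is $O(1)$, and summing the two contributions completes the proof.

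The hard part will be precisely this last transfer step — passing from the definitional weak convergence supplied by the P-Donsker property to an honest $O(1)$ bound on $\EXP\lV \mathbb{G}_t \rV_{\mathcal{F}}$. Weak convergence by itself pins down only the limiting distribution of the sup-norm, not its expectation; it is the bounded-envelope / uniform-integrability argument, which rests crucially on the uniform boundedness from Stage~I, that closes this gap, and I would lean on the corresponding empirical-process result in~\cite{van2000asymptotic} to make it rigorous.
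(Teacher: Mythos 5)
Your proposal is correct and follows essentially the same route as the paper: verify that $\{\ell(\cdot,L)\colon L \in \mathcal{L}\}$ is P-Donsker (Corollary~\ref{coro:l donsker}, via uniform Lipschitzness over the compact set $\mathcal{L}$) and uniformly bounded with bounded second moment (Corollary~\ref{coro:bound l lip gt}), then invoke the empirical-process result of~\cite{van2000asymptotic} to get $\EXP\lV \mathbb{G}_t \rV_{\mathcal{F}} = O(1)$. The transfer step you single out as ``the hard part'' is exactly what the paper's Lemma~\ref{lem:donsker} already packages (its hypotheses are precisely the Lipschitz indexing, bounded envelope, and bounded second moment you check), so it can be cited directly rather than re-derived via uniform integrability; your explicit treatment of the vanishing regularizer $\frac{\lambda_1}{2t}\twoinfnorm{L}^2$ is a minor refinement that the paper's own proof silently glosses over.
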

\begin{proof}
Since $\ell(\bz, L)$ is uniformly upper bounded (Corollary~\ref{coro:bound l lip gt}) and is always non-negative, its square is uniformly upper bounded, hence its expectation. Combining Corollary~\ref{coro:l donsker}, Lemma~\ref{lem:donsker} applies.

\end{proof}

\begin{theorem}[Convergence of the surrogate]
\label{thm:convergence gt(Lt)}
The sequence $\{g_t(L_t)\}_{t=1}^{\infty}$ we defined in~\eqref{eq:g_t(L)} converges almost surely, where $\{L_t\}_{t=1}^{\infty}$ is the solution produced by Algorithm~\ref{alg:all}. Moreover, the infinite summation $\sum_{t=1}^{\infty} \lvert \EXP[g_{t+1}(L_{t+1}) - g_t(L_t) \mid \mathcal{F}_t] \rvert$ is bounded almost surely.
\end{theorem}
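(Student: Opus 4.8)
The plan is to show that the nonnegative process $\{g_t(L_t)\}_{t=1}^{\infty}$ is a quasi-martingale in the sense of Bottou~\cite{bottou1998online}, and then invoke the quasi-martingale convergence theorem, which delivers both the almost-sure convergence of $\{g_t(L_t)\}$ and the almost-sure boundedness of $\sum_t \lvert \EXP[g_{t+1}(L_{t+1}) - g_t(L_t) \mid \mathcal{F}_t] \rvert$ in a single stroke. The whole argument therefore reduces to controlling the conditional one-step increment $\EXP[g_{t+1}(L_{t+1}) - g_t(L_t) \mid \mathcal{F}_t]$.

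First I would establish a recursion from the definition~\eqref{eq:g_t(L)}. Since $(t+1)\,g_{t+1}(L) = t\,g_t(L) + \tl(\bz_{t+1}, L, \br_{t+1}, \be_{t+1})$, it follows that
\begin{equation*}
g_{t+1}(L_t) - g_t(L_t) = \frac{1}{t+1}\big[\tl(\bz_{t+1}, L_t, \br_{t+1}, \be_{t+1}) - g_t(L_t)\big].
\end{equation*}
Because $L_{t+1}$ minimizes $g_{t+1}$ we have $g_{t+1}(L_{t+1}) \leq g_{t+1}(L_t)$, and because $(\br_{t+1}, \be_{t+1})$ are computed as the exact minimizers for $\bz_{t+1}$ at basis $L_t$, the term $\tl(\bz_{t+1}, L_t, \br_{t+1}, \be_{t+1})$ equals $\ell(\bz_{t+1}, L_t)$ by~\eqref{eq:l(z,L)}. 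Combining these gives
\begin{equation*}
g_{t+1}(L_{t+1}) - g_t(L_t) \leq \frac{1}{t+1}\big[\ell(\bz_{t+1}, L_t) - g_t(L_t)\big].
\end{equation*}

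Next I would take the conditional expectation given $\mathcal{F}_t$. As $\bz_{t+1}$ is drawn i.i.d.\ and independent of $\mathcal{F}_t$ while $L_t$ is $\mathcal{F}_t$-measurable, $\EXP[\ell(\bz_{t+1}, L_t) \mid \mathcal{F}_t] = f(L_t)$ by~\eqref{eq:f(L)}. Moreover, since the stored pairs $(\br_i, \be_i)$ are suboptimal for a generic $L$, the surrogate dominates the empirical loss, $g_t(L_t) \geq f_t(L_t)$, so that $f(L_t) - g_t(L_t) \leq f(L_t) - f_t(L_t) \leq \lV f_t - f \rV_\infty$. Taking positive parts and then full expectations yields
\begin{equation*}
\EXP\Big[\big(\EXP[g_{t+1}(L_{t+1}) - g_t(L_t) \mid \mathcal{F}_t]\big)^+\Big] \leq \frac{1}{t+1}\,\EXP[\lV f_t - f \rV_\infty] = O(t^{-3/2}),
\end{equation*}
where the final rate is exactly Proposition~\ref{prop:concentration of ft}. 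Summing over $t$ makes the series finite, which is the defining condition of a quasi-martingale, and the convergence theorem then closes the argument.

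The step I expect to be the crux is obtaining summability of $\sum_t \big(\EXP[\cdots]\big)^+$, which rests entirely on the $O(t^{-1/2})$ concentration rate for $\EXP[\lV f_t - f \rV_\infty]$ from Proposition~\ref{prop:concentration of ft} (itself a consequence of the P-Donsker property, Corollary~\ref{coro:l donsker}): any rate slower than $1/t$ would break summability and the quasi-martingale argument would fail. The two supporting inequalities also deserve care — the identity $\tl(\bz_{t+1}, L_t, \br_{t+1}, \be_{t+1}) = \ell(\bz_{t+1}, L_t)$ relies on $(\br_{t+1}, \be_{t+1})$ being exact minimizers at $L_t$ (well-defined under Assumption~\ref{as:unique}), and the domination $g_t(L_t) \geq f_t(L_t)$ follows from the suboptimality of the stored pairs. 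The remaining pieces — the one-step recursion, the conditioning via independence of $\bz_{t+1}$, and the final appeal to Bottou's quasi-martingale convergence theorem~\cite{bottou1998online} — are routine once the uniform boundedness and Lipschitz facts from Stage~I are in hand.
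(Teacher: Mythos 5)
Your proposal is correct and follows essentially the same route as the paper's proof: the one-step recursion $(t+1)g_{t+1}(L) = t\,g_t(L) + \tl(\bz_{t+1}, L, \br_{t+1}, \be_{t+1})$, the minimizing property of $L_{t+1}$, the identity $\tl(\bz_{t+1}, L_t, \br_{t+1}, \be_{t+1}) = \ell(\bz_{t+1}, L_t)$, the surrogate domination $g_t \geq f_t$, and the $O(1/(\sqrt{t}(t+1)))$ bound from Proposition~\ref{prop:concentration of ft} feeding into Lemma~\ref{lem:bottou}. The only difference is cosmetic (you take the conditional expectation before discarding the nonpositive term $f_t(L_t) - g_t(L_t)$, whereas the paper discards it first), so both conclusions follow exactly as in the paper from the quasi-martingale lemma.
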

\begin{proof}
The theorem follows by showing that the sequence of $\{ g_t(L_t) \}_{t=1}^{\infty}$ is a quasi-martingale, and hence converges almost surely. To see this, we note that for any $t > 0$, the expectation of the difference $g_{t+1}(L_{t+1}) - g_t(L_t)$ conditioned on the past information $\mathcal{F}_t$ is bounded by $\sup_L (f(L) - f_t(L)) / (t+1)$, which is of order $O(1/(\sqrt{t}(t+1)))$ due to Proposition~\ref{prop:concentration of ft}. Hence, Lemma~\ref{lem:bottou} applies.
\end{proof}

\vspace{0.05in}
\noindent{\bf Stage III.} \ 
Then we prove that the sequence of the empirical loss function, $\{f_t(L_t)\}_{t=1}^{\infty}$ defined in~\eqref{eq:f_n(L)} converges almost surely to the same limit of its surrogate $\{g_t(L_t)\}_{t=1}^{\infty}$. According to the central limit theorem, we assert that $f_t(L_t)$ also converges almost surely to the expected loss $f(L_t)$ defined in~\eqref{eq:f(L)}, implying that $g_t(L_t)$ and $f(L_t)$ converge to the same limit almost surely.

We first show the numerical convergence of the basis sequence $\{L_t\}_{t=1}^{\infty}$, based on which we show the convergence of $\{ f_t(L_t) \}_{t=1}^{\infty}$ by applying Lemma~\ref{lem:mairal}.
\begin{proposition}[Numerical convergence of the basis component]
\label{prop:diff_L}
Let $\{L_t\}_{t=1}^{\infty}$ be the basis sequence produced by the Algorithm~\ref{alg:all}. Then,
\begin{equation}
\fronorm{L_{t+1} - L_t} = O\(\frac{1}{t}\).
\end{equation}
\end{proposition}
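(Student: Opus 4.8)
The plan is to exploit strong convexity of the surrogate together with the fact that $L_t$ and $L_{t+1}$ are exact minimizers of $g_t$ and $g_{t+1}$, respectively. By Assumption~\ref{as:g_t(L)}, both $g_t$ and $g_{t+1}$ are strongly convex with a common modulus $\beta_1$: the Hessian of the quadratic part of $g_t$ is $\frac{1}{t}A_t \otimes I_p$, whose smallest eigenvalue is at least $\beta_1$, and the squared $\ell_{2,\infty}$ term only adds curvature. Since $L_t$ is an unconstrained minimizer, the first-order optimality condition gives $0 \in \partial g_t(L_t)$, so applying strong convexity with this zero subgradient makes the cross term vanish and yields
\[
g_t(L_{t+1}) - g_t(L_t) \geq \frac{\beta_1}{2}\fronorm{L_{t+1}-L_t}^2 ,
\]
and symmetrically $g_{t+1}(L_t) - g_{t+1}(L_{t+1}) \geq \frac{\beta_1}{2}\fronorm{L_{t+1}-L_t}^2$. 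Adding the two inequalities and writing $h_t := g_t - g_{t+1}$ collapses the four surrogate values into the single bound
\[
\beta_1\fronorm{L_{t+1}-L_t}^2 \leq h_t(L_{t+1}) - h_t(L_t) .
\]

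It then remains to show that $h_t$ is Lipschitz over the compact set $\mathcal{L}$ with constant $\kappa_t = O(1/t)$, for then $|h_t(L_{t+1}) - h_t(L_t)| \leq \kappa_t\fronorm{L_{t+1}-L_t}$ and dividing through gives $\fronorm{L_{t+1}-L_t} \leq \kappa_t/\beta_1 = O(1/t)$. To estimate $\kappa_t$ I would compute $h_t$ explicitly. Using $\frac{1}{t}-\frac{1}{t+1}=\frac{1}{t(t+1)}$ and abbreviating $\hat f_t(L) = \frac{1}{t}\sum_{i=1}^t \tl(\bz_i, L, \br_i, \be_i)$, a short manipulation gives
\[
h_t(L) = \frac{1}{t+1}\Big(\hat f_t(L) - \tl(\bz_{t+1}, L, \br_{t+1}, \be_{t+1})\Big) + \frac{\lambda_1}{2t(t+1)}\twoinfnorm{L}^2 .
\]

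Each summand $\tl(\bz_i, L, \br_i, \be_i)$ has $L$-gradient $(L\br_i + \be_i - \bz_i)\br_i\trans$, which is uniformly bounded on $\mathcal{L}$ because $L$, $\br_i$, $\be_i$, and $\bz_i$ are all uniformly bounded by Proposition~\ref{prop:bound:reABL} and Assumption~\ref{as:z}; hence $\hat f_t$ and the single term $\tl(\bz_{t+1}, \cdot, \br_{t+1}, \be_{t+1})$ are Lipschitz on $\mathcal{L}$ with a constant independent of $t$, and $\twoinfnorm{\cdot}^2$ is likewise Lipschitz on the bounded set $\mathcal{L}$. Consequently the first bracket contributes a Lipschitz constant of order $1/(t+1)$ and the trailing term one of order $1/(t(t+1))$, so $\kappa_t = O(1/t)$, which combined with the displayed bound completes the argument.

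The main obstacle is the bookkeeping needed to guarantee that these Lipschitz constants are genuinely uniform in $t$: this is exactly where the uniform boundedness of $\{\br_t, \be_t, L_t\}$ from Proposition~\ref{prop:bound:reABL} is indispensable, since without it the per-sample gradient $(L\br_i + \be_i - \bz_i)\br_i\trans$ could grow with $i$ and the averaged bound on $\hat f_t$ would degrade, breaking the $O(1/t)$ estimate. A secondary point requiring care is replacing a vanishing gradient by the subgradient optimality $0 \in \partial g_t(L_t)$, because the squared $\ell_{2,\infty}$ penalty is only subdifferentiable; the strong-convexity inequality remains valid for any subgradient, so selecting the zero subgradient at each minimizer is what makes the first-order terms drop out cleanly in the summation above.
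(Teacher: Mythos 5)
Your proof is correct and follows essentially the same route as the paper's: strong convexity of the surrogate (Assumption~\ref{as:g_t(L)}) applied at the exact minimizers, combined with an $O(1/t)$ Lipschitz bound over the compact set $\mathcal{L}$ for the difference $g_t - g_{t+1}$. The only cosmetic differences are that you invoke strong convexity symmetrically at both $L_t$ and $L_{t+1}$ and add the two inequalities, whereas the paper uses it once together with the plain minimizer inequality $\hg_{t+1}(L_{t+1}) \leq \hg_{t+1}(L_t)$, and that you read off the $O(1/t)$ Lipschitz constant from the explicit algebraic form of the difference rather than by bounding its gradient and Taylor-expanding as the paper does.
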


\begin{theorem}[Convergence of the empirical and expected loss]
\label{thm:convergence of f(L_t)}
Let $\{f(L_t)\}_{t=1}^{\infty}$ be the sequence of the expected loss where $\{L_t\}_{t=1}^{\infty}$ be the sequence of the solutions produced by the Algorithm~\ref{alg:all}. Then, we have
\begin{enumerate}
	\item The sequence of the empirical loss $\{f_t(L_t)\}_{t=1}^{\infty}$ converges almost surely to the same limit of the surrogate.
		
	\item The sequence of the expected loss $\{f(L_t)\}_{t=1}^{\infty}$ converges almost surely to the same limit of the surrogate.
\end{enumerate}
\end{theorem}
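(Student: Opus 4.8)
The plan is to prove both parts through the single nonnegative quantity $b_t \defeq g_t(L_t) - f_t(L_t)$. This is nonnegative because the coefficients $\br_i, \be_i$ frozen inside the surrogate are suboptimal for the current basis, so that $\ell(\bz_i, L) \le \tl(\bz_i, L, \br_i, \be_i)$ and hence $f_t(L) \le g_t(L)$ pointwise; in particular $f_t(L_t) \le g_t(L_t)$. Since Theorem~\ref{thm:convergence gt(Lt)} already gives that $\{g_t(L_t)\}$ converges almost surely, the first claim reduces to showing $b_t \to 0$ almost surely, after which the second claim follows once I control $f(L_t) - f_t(L_t)$. I would invoke Lemma~\ref{lem:mairal} to pass from a summability statement about $b_t$ to its convergence to zero.

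First I would set up the one-step recursion. Using that $L_{t+1}$ minimizes $g_{t+1}$ and the identity $g_{t+1}(L) = \frac{t}{t+1} g_t(L) + \frac{1}{t+1}\tl(\bz_{t+1}, L, \br_{t+1}, \be_{t+1})$ (which correctly absorbs the $\frac{\lambda_1}{2t}\twoinfnorm{L}^2$ term), together with the fact that $(\br_{t+1}, \be_{t+1})$ are exactly the minimizers defining $\ell(\bz_{t+1}, L_t)$, I obtain $g_{t+1}(L_{t+1}) - g_t(L_t) \le \frac{1}{t+1}\big(\ell(\bz_{t+1}, L_t) - g_t(L_t)\big)$. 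Taking the conditional expectation given $\mathcal{F}_t$ and using $\EXP[\ell(\bz_{t+1}, L_t) \mid \mathcal{F}_t] = f(L_t)$ (valid since $\bz_{t+1}$ is independent of $\mathcal{F}_t$ while $L_t$ is $\mathcal{F}_t$-measurable), then writing $f(L_t) - g_t(L_t) = (f(L_t) - f_t(L_t)) - b_t$, I arrive at $\frac{b_t}{t+1} \le \frac{f(L_t) - f_t(L_t)}{t+1} - \EXP[g_{t+1}(L_{t+1}) - g_t(L_t) \mid \mathcal{F}_t]$. Summing over $t$, the first term on the right is summable almost surely because $\EXP\|f_t - f\|_\infty = O(t^{-1/2})$ by Proposition~\ref{prop:concentration of ft}, so $\sum_t \EXP|f(L_t) - f_t(L_t)|/(t+1) = \sum_t O(t^{-3/2}) < \infty$; the second term is summable almost surely by the quasi-martingale bound in Theorem~\ref{thm:convergence gt(Lt)}. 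Hence $\sum_t b_t/(t+1) < \infty$ almost surely.

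To apply Lemma~\ref{lem:mairal} the remaining hypothesis is $|b_{t+1} - b_t| = O(1/t)$. I would verify this by splitting $g_{t+1}(L_{t+1}) - g_t(L_t)$ into a spatial part $g_{t+1}(L_{t+1}) - g_{t+1}(L_t)$, bounded by the uniform Lipschitz constant of the surrogate (Corollary~\ref{coro:bound l lip gt}) times $\fronorm{L_{t+1} - L_t} = O(1/t)$ from Proposition~\ref{prop:diff_L}, and a temporal part $g_{t+1}(L_t) - g_t(L_t) = \frac{1}{t+1}(\ell(\bz_{t+1}, L_t) - g_t(L_t))$, which is $O(1/t)$ since $\ell$ and $g_t$ are uniformly bounded (Corollaries~\ref{coro:bound l lip gt} and~\ref{coro:bound lip ft}). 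The same decomposition applied to $f_t$ gives $|f_{t+1}(L_{t+1}) - f_t(L_t)| = O(1/t)$, so $|b_{t+1} - b_t| = O(1/t)$ and Lemma~\ref{lem:mairal} yields $b_t \to 0$ almost surely. Combined with the almost-sure convergence of $g_t(L_t)$, this shows $f_t(L_t)$ converges almost surely to the same limit, establishing the first claim.

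For the second claim I would bound $|f(L_t) - f_t(L_t)| \le \|f - f_t\|_\infty$ uniformly over $L_t \in \mathcal{L}$; this uniformity is essential, since $L_t$ is data-dependent and a pointwise law of large numbers at the random point $L_t$ would not suffice. Because the class $\{\ell(\bz, L): L \in \mathcal{L}\}$ is P-Donsker (Corollary~\ref{coro:l donsker}) it is in particular Glivenko--Cantelli, giving $\|f_t - f\|_\infty \to 0$ almost surely (the vanishing regularization term $\frac{\lambda_1}{2t}\twoinfnorm{L}^2$ is uniformly bounded on $\mathcal{L}$ and contributes nothing in the limit). Therefore $f(L_t) - f_t(L_t) \to 0$ almost surely, and since $f_t(L_t)$ converges to the common limit, so does $f(L_t)$. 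I expect the main obstacle to be verifying the two Lemma~\ref{lem:mairal} hypotheses simultaneously: the summability $\sum_t b_t/(t+1) < \infty$ rests on carefully combining the quasi-martingale bound with the $O(t^{-1/2})$ concentration rate, while the regularity $|b_{t+1} - b_t| = O(1/t)$ requires propagating the uniform boundedness and Lipschitz estimates of Stage~I through both the spatial and temporal increments.
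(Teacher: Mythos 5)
Your proposal is correct and follows essentially the same route as the paper's own proof: the same quantity $b_t = g_t(L_t) - f_t(L_t)$, the same one-step recursion bounding $b_t/(t+1)$ by the concentration term plus the quasi-martingale increment from Theorem~\ref{thm:convergence gt(Lt)}, the same spatial/temporal decomposition with Proposition~\ref{prop:diff_L} to get $\lv b_{t+1} - b_t \rv = O(1/t)$, and the same application of Lemma~\ref{lem:mairal}. The only deviations are improvements in rigor: where the paper twice invokes ``the central limit theorem'' to assert that $\sqrt{t}\,(f(L_t) - f_t(L_t))$ is bounded almost surely (a statement the CLT does not actually give at the random point $L_t$), you instead use the expectation bound of Proposition~\ref{prop:concentration of ft} with Tonelli to get almost-sure summability, and a Glivenko--Cantelli argument for the uniform convergence $\lV f_t - f \rV_{\infty} \to 0$ needed in Claim~2.
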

\begin{proof}
Let $b_t = g_t(L_t) - f_t(L_t)$. We show that infinite series $\sum_{t=1}^{\infty} b_t / (t+1)$ is bounded by applying the central limit theorem to $f(L_t) - f_t(L_t)$ and the result of Theorem~\ref{thm:convergence gt(Lt)}. We further prove that $\lvert b_{t+1} - b_t \rvert$ can be bounded by $O(1/t)$, due to the uniform boundedness and Lipschitz of $g_t(L_t)$, $f_t(L_t)$ and $\ell(\bz_t, L_t)$. According to Lemma~\ref{lem:mairal}, we conclude the convergence of $\{b_t\}_{t=1}^{\infty}$ to zero. Hence the first claim. The second claim follows immediately owing to the central limit theorem.
\end{proof}

\vspace{0.05in}
\noindent{\bf Final Stage.} \ 
According to Claim 2 of Theorem~\ref{thm:convergence of f(L_t)} and the fact that 0 belongs to the subgradient of $g_t(L)$ evaluated at $L = L_t$, we are to show the gradient of $f(L)$ taking at $L_t$ vanishes as $t$ tends to infinity, which establishes Theorem~\ref{thm:stationary}. To this end, we note that since $\{L_t\}_{t=1}^{\infty}$ is uniformly bounded, the non-differentiable term $\frac{1}{2t} \twoinfnorm{L}^2$ vanishes as $t$ goes to infinity, implying the differentiability of $g_{\infty}(L_{\infty})$, i.e. $\nabla g_{\infty}(L_{\infty})=0$. On the other hand, we show that the gradient of $f(L)$ and that of $g_t(L)$ are always Lipschitz on the compact set $\mathcal{L}$, implying the existence of their second order derivative even when $t \rightarrow \infty$. Thus, by taking a first order Taylor expansion and let $t$ go to infinity, we establish the main theorem.

\section{Connection to Matrix Completion}
\label{sec:mc}

While we mainly focus on the matrix decomposition problem, our method can  be extended to the matrix completion (MC) problem~\cite{cai2010singular,candes2009exact} with max-norm regularization~\cite{cai2014max}~--~another popular topic in machine learning and signal processing. The max-norm regularized MC problem can be described as follows:
\begin{align*}
\min_X\ \frac{1}{2}\fronorm{\mathcal{P}_{\Omega}\(Z - X\)}^2 + \fractwo{\lambda} \maxnorm{X}^2,
\end{align*}
where $\Omega$ is the set of indices of observed entries in $Z$ and $\mathcal{P}_{\Omega}(M)$ is the orthogonal projection onto the span of matrices vanishing outside of $\Omega$ so that the $(i, j)$-th entry of $\mathcal{P}_{\Omega}(M)$ is equal to $M_{ij}$ if $(i, j) \in \Omega$ and zero otherwise. Interestingly, the max-norm regularized MC problem can be cast into our framework. To see this, let us introduce an auxiliary matrix $M$, with $M_{ij} = c > 0$ if $(i, j) \in \Omega$ and $M_{ij}=1/c$ otherwise. The reformulated MC problem,
\begin{equation}
\min_{X, E}\ \fractwo{1} \fronorm{Z-X-E}^2 + \fractwo{\lambda} \maxnorm{X}^2 + \onenorm{M \circ E},
\label{eq:mc}
\end{equation}
where ``$\circ$'' denotes the entry-wise product, is comparable to our MRMD formulation~\eqref{eq:primal}. And it is easy to show that when $c$ tends to infinity, the reformulated problem converges to the original MC problem.

\subsection{Online Implementation}
We now derive a stochastic implementation for the max-norm regularized MC problem. Note that the only difference between the Problem~\eqref{eq:mc} and Problem~\eqref{eq:primal} is the $\ell_1$ regularization on $E$, which results a new penalty on $\be$ for $\tilde{\ell}(\bz, L, \br, \be)$ (which is originally defined in~\eqref{eq:tildel}):
\begin{equation}
\tilde{\ell}(\bz, L, \br, \be) = \frac{1}{2} \twonorm{\bz- L\br - \be}^2 + \onenorm{\bm \circ \be}.
\label{eq:new tildel}
\end{equation}
Here, $\bm$ is a column of the matrix $M$ in~\eqref{eq:mc}. According to the definition of $M$, $\bm$ is a vector with element value being either $c$ or $1/c$. Let us define two support sets as follows:
\begin{align*}
\begin{split}
\Omega_1 \defeq&\ \{i \mid m_i = c, 1 \leq i \leq p \},\\
\Omega_2 \defeq&\ \{i \mid m_i = 1/c, 1 \leq i \leq p\},
\end{split}
\end{align*}
where $m_i$ is the $i$th element of vector $\bm$. In this way, the newly defined $\tilde{\ell}(\bz, L, \br, \be)$ can be written as
\begin{equation}
\label{eq:tildel mc}
\begin{split}
\tilde{\ell}(\bz, L, \br, \be) =& \( \frac{1}{2} \lV \bz^{}_{\Omega_1}- (L\br)^{}_{\Omega_1} - \be^{}_{\Omega_1} \rV_2^2 + c \lV \be^{}_{\Omega_1} \rV_1 \) \\
&+ \( \frac{1}{2} \lV \bz^{}_{\Omega_2}- (L\br)^{}_{\Omega_2} - \be^{}_{\Omega_2} \rV_2^2 + \frac{1}{c} \lV \be^{}_{\Omega_2} \rV_1 \).
\end{split}
\end{equation}
Notably, as $\Omega_1$ and $\Omega_2$ are disjoint, given $\bz$, $L$ and $\br$, the variable $\be$ in~\eqref{eq:tildel mc} can be optimized by soft-thresholding in a separate manner:
\begin{equation}
\begin{split}
\be^{}_{\Omega_1} =&\ \mathcal{S}^{}_{c}[\bz^{}_{\Omega_1}- (L\br)^{}_{\Omega_1}],\\
\be^{}_{\Omega_2} =&\ \mathcal{S}^{}_{1/c}[\bz^{}_{\Omega_2}- (L\br)^{}_{\Omega_2}].
\end{split}
\label{eq:mc:solve_e}
\end{equation}

With this rule on hand, we propose Algorithm~\ref{alg:mc} for the online max-norm regularized matrix completion (OMRMC) problem. The update rule for $\br$ is the same as we described in Algorithm~\ref{alg:re} and that for $\be$ is given by~\eqref{eq:mc:solve_e}. Note that we can use Algorithm~\ref{alg:L} to update $L$ as usual.
\begin{algorithm}
\caption{Online Max-Norm Regularized Matrix Completion}
\label{alg:mc}
\begin{algorithmic}[1]
    \REQUIRE $Z \in \Rpn$ (observed samples), parameters $\lambda_1$ and $\lambda_2$, $L_0 \in \Rpd$ (initial basis), zero matrices $A_0 \in \mathbb{R}^{d \times d}$ and $B_0 \in \mathbb{R}^{p \times d}$
    \ENSURE optimal basis $L_t$
    \FOR{$t=1$ to $n$}
      \STATE Access the $t$-th sample $\bz_t$.
      \STATE Compute the coefficient and noise:
        \begin{align*}
        \label{supp:eq:mc:solve_re}
          \{\br_t, \be_t \} =& \argmin_{\br, \be, \Vert \br \Vert_2^2 \leq 1} \tilde{\ell}(\bz_t, L_{t-1}, \br, \be)\\
          =& \argmin_{\br, \be, \Vert \br \Vert_2^2 \leq 1} \left( \frac{1}{2} \Vert \bz_t - L_{t-1} \br - \be \Vert_2^2 + \Vert \bm_t \circ \be \Vert_1 \right).
        \end{align*}
      \STATE Compute the accumulation matrices $A_t$ and $B_t$:
        \begin{align*}
          A_t &\leftarrow A_{t-1} + \br_t \br_t\trans,\\
          B_t &\leftarrow B_{t-1} + \left(\bz_t - \be_t \right)\br_t\trans.
        \end{align*}

      \STATE Compute the basis $L_t$ by optimizing the surrogate function~\eqref{eq:g_t(L)}:
        \begin{align*}
          L_t &= \argmin_L \frac{1}{t}\sum_{i=1}^t \tilde{\ell}(\bz_i, L, \br_i, \be_i) + \lambdat \twoinfnorm{L}^2 \\
          &= \argmin_L \frac{1}{t}\sum_{i=1}^t \( \frac{1}{2} \lV \bz_i - L \br_i - \be_i \rV_2^2 + \lV \bm_i \circ \be_i \rV_1 \) + \lambdat \twoinfnorm{L}^2 \\
          &= \argmin_L \frac{1}{t}\sum_{i=1}^t \( \frac{1}{2} \lV \bz_i - L \br_i - \be_i \rV_2^2\) + \lambdat \twoinfnorm{L}^2 \\
          &= \argmin_L \frac{1}{t} \(\frac{1}{2} \tr \( L\trans L A_t \) - \tr \( L\trans B_t \) \) + \lambdat \twoinfnorm{L}^2.
        \end{align*}
    \ENDFOR
\end{algorithmic}
\end{algorithm}

Since we have clarified the algorithm for OMRMC, we move to the theoretical analysis. We argue that {\em all} the results for OMRMD apply to OMRMC, which can be trivially justified.

\section{Experiments}
\label{sec:exp}
In this section, we report numerical results on synthetic data to demonstrate the effectiveness and robustness of our online max-norm regularized matrix decomposition (OMRMD) algorithm. Some experimental settings are used throughout this section, as elaborated below.

\vspace{0.05in}
\noindent{\bf Data Generation.} \ 
The simulation data are generated  by following a similar procedure in~\cite{candes2011rpca}. The clean data matrix $X$ is produced by  $X = UV\trans$, where $U\in \mathbb{R}^{ p \times d}$ and $V\in  \mathbb{R}^{ n \times d}$. The entries of $U$ and $V$ are i.i.d. sampled from the normal distribution $\mathcal{N}(0, 1)$. We choose sparse corruption in the experiments, and introduce a parameter $\rho$ to control the sparsity of the corruption matrix $E$, \emph{i.e.},  a $\rho$-fraction of the entries  are non-zero and following an i.i.d. uniform distribution over  $[-1000, 1000]$. Finally, the observation matrix $Z$ is produced by  $Z=X+E$.

\vspace{0.05in}
\noindent{\bf Baselines.} \ 
We mainly compare with two methods: Principal Component Pursuit~(PCP) and online robust PCA~(OR-PCA). PCP is the state-of-the-art batch method for subspace recovery, which was presented as a robust formulation of PCA in~\cite{candes2011rpca}. OR-PCA is an online implementation of PCP,\footnote{Strictly speaking, OR-PCA is an online version of stable PCP~\cite{zhou2010stable}.} which also achieves state-of-the-art performance over the online subspace recovery algorithms. Sometimes, to show the robustness, we will also report the results of online PCA~\cite{artac2002incremental}, which incrementally learns the principal components without taking the noise into account.

\vspace{0.05in}
\noindent{\bf Evaluation Metric.} \ 
Our goal is to estimate the correct subspace for the underlying data. Here, we evaluate the fitness of our estimated subspace basis $L$ and the ground truth basis $U$ by the Expressed Variance (EV)~\cite{xu2010principal}:
\begin{equation}
\text{EV}(U, L) \defeq \frac{\tr(L\trans UU\trans L)}{\tr(UU\trans)}.
\end{equation}
The values of EV range in $[0, 1]$ and a higher value indicates a more accurate recovery.

\vspace{0.05in}
\noindent{\bf Other Settings.} \
Throughout the experiments, we set the ambient dimension $p=400$ and the total number of samples $n=5000$ unless otherwise specified. We fix the tunable parameter $\lambda_1 = \lambda_2 = 1 / \sqrt{p}$, and use default parameters for all baselines we compare with. Each experiment is repeated 10 times  and we report the averaged EV as the result.

\begin{figure*}[!htb]
	\centering
	\subfloat[OMRMD]{
		\includegraphics[width=0.33\linewidth]{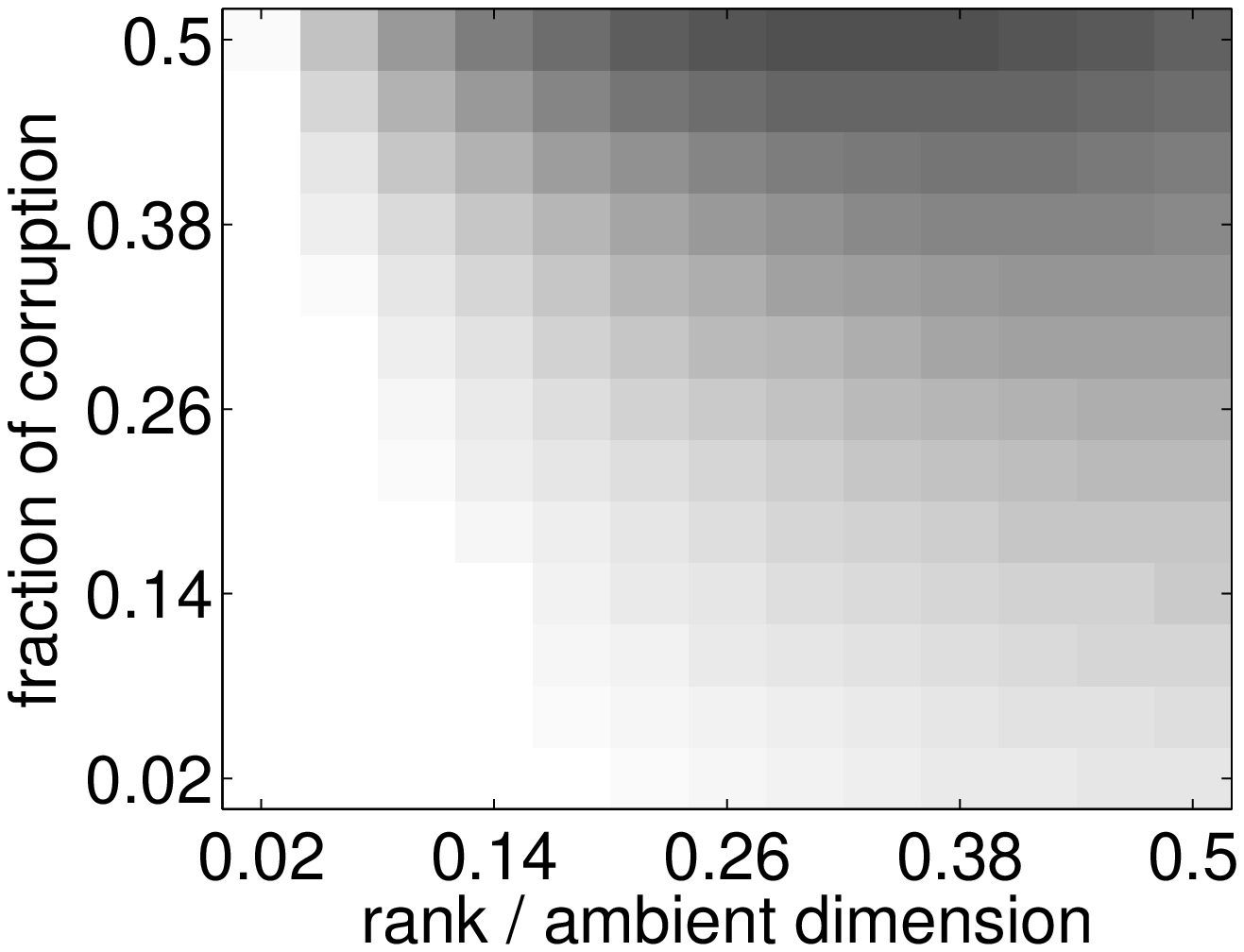}
	}
	\subfloat[OR-PCA]{
		\includegraphics[width=0.33\linewidth]{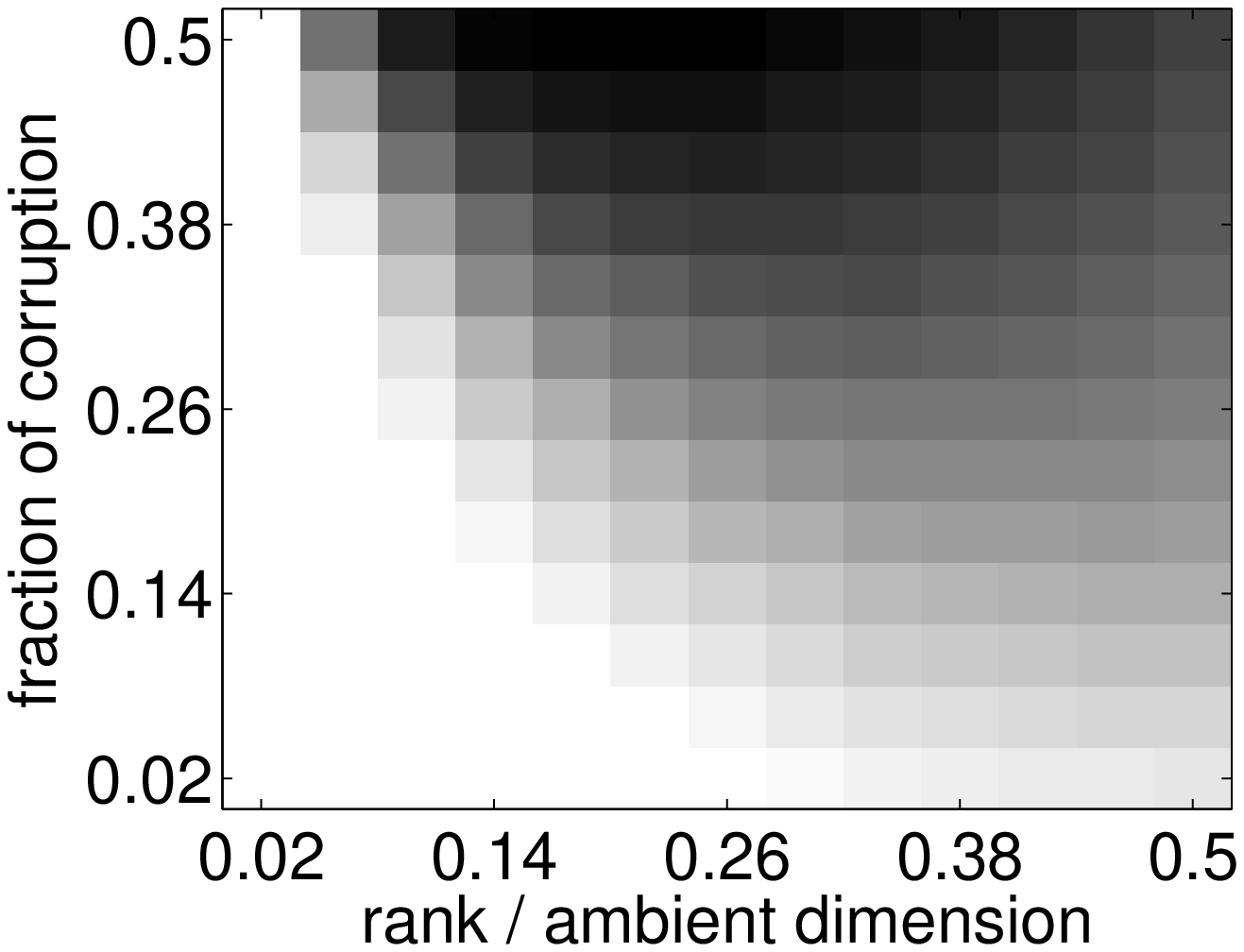}
	}
	\subfloat[PCP]{
		\includegraphics[width=0.33\linewidth]{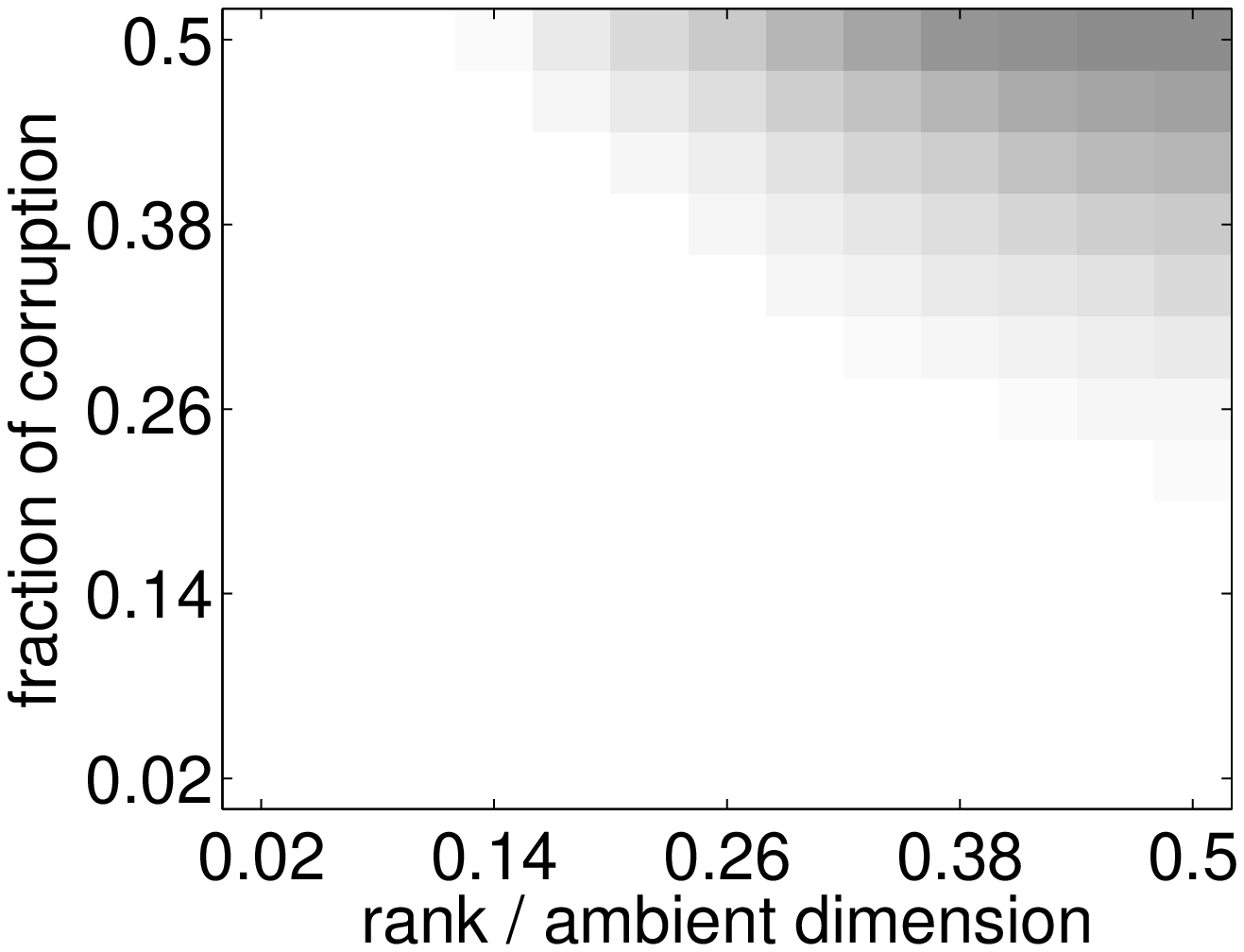}
	}
	\caption{Performance of subspace recovery under different rank and corruption fraction. Brighter color means better performance. As observed, the max-norm based algorithm OMRMD always performs comparably or outperforms OR-PCA which is based on nuclear norm formulation. Since PCP is a batch method, it always achieves the best recovery performance.
	}
	\label{fig:diff_rank_rho}
\end{figure*}

\begin{figure*}[!htb]
	\centering
	\subfloat[]{
		\includegraphics[width=0.32\linewidth]{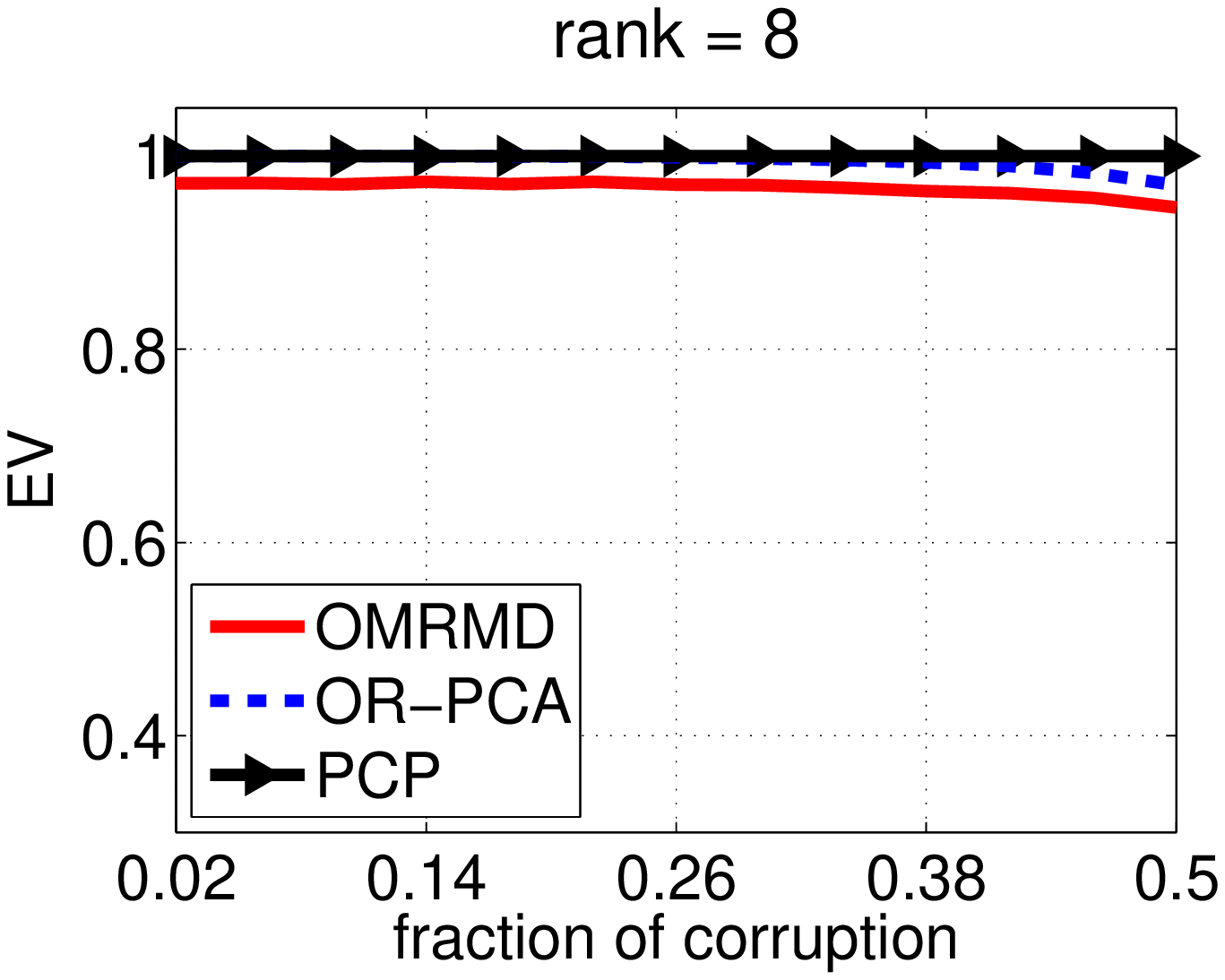}
		\label{fig:rank 8}
	}
	\subfloat[]{
		\includegraphics[width=0.32\linewidth]{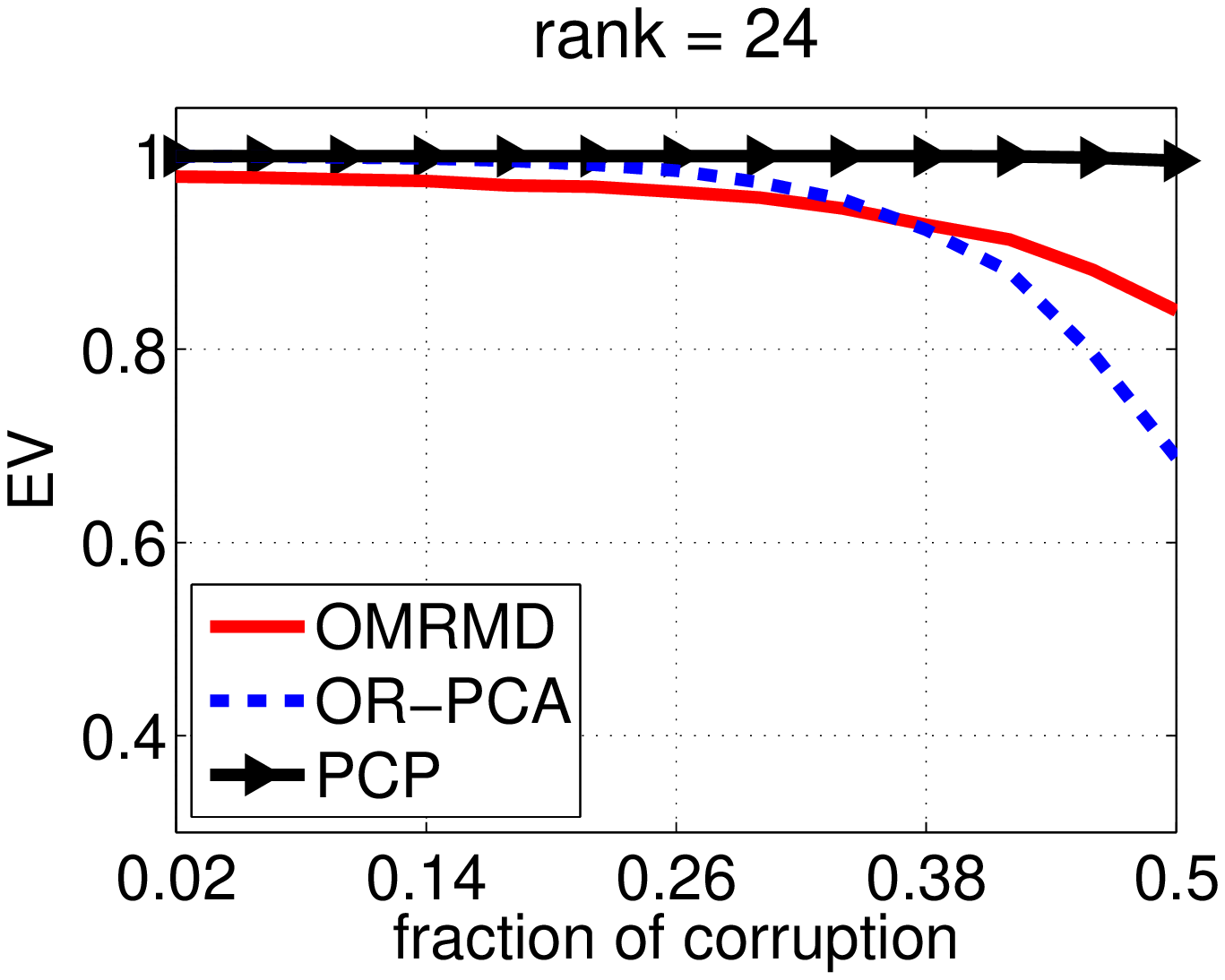}
		\label{fig:rank 24}
	}
	\subfloat[]{
		\includegraphics[width=0.32\linewidth]{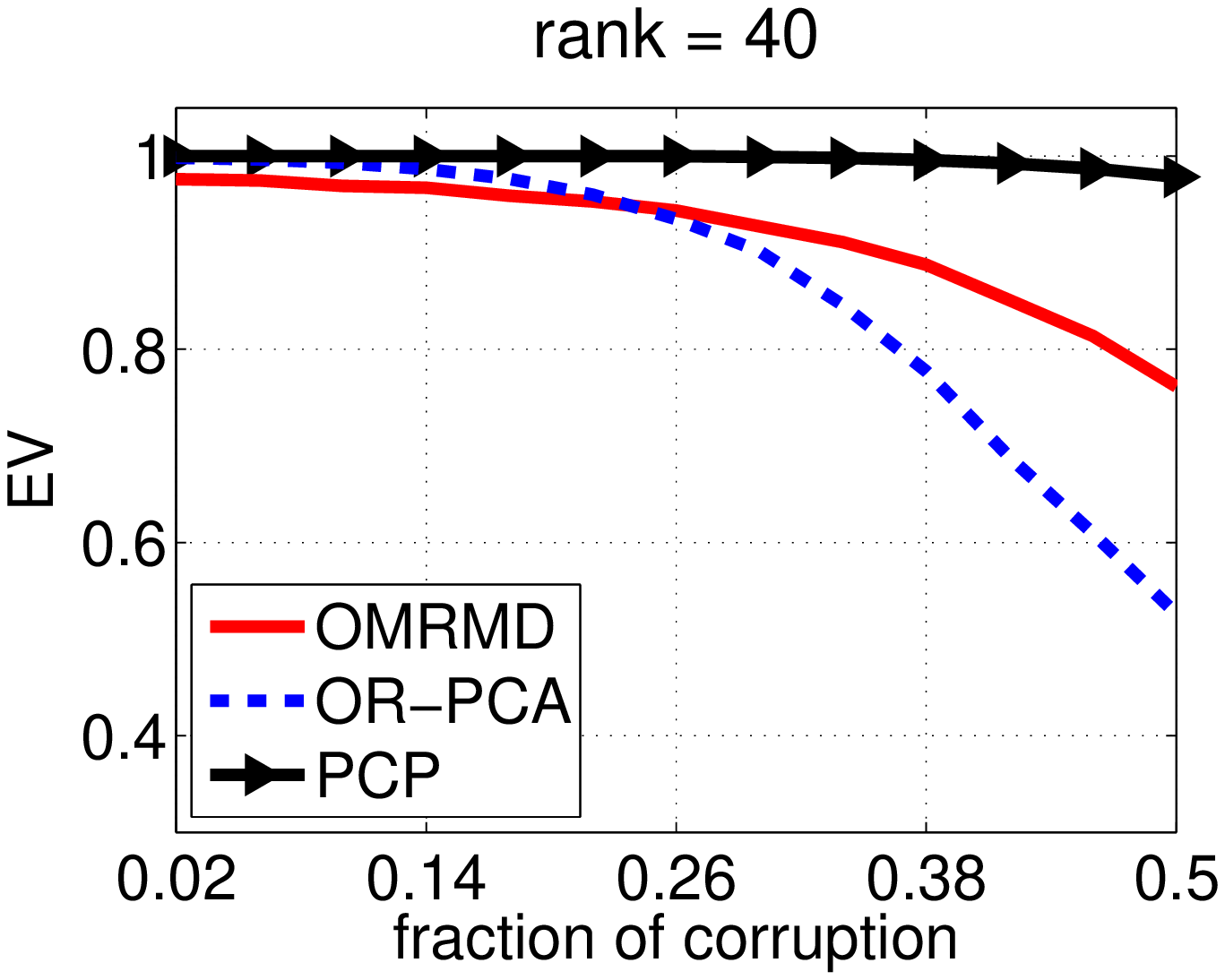}
		\label{fig:rank 40}
	}
	
	\subfloat[]{
		\includegraphics[width=0.32\linewidth]{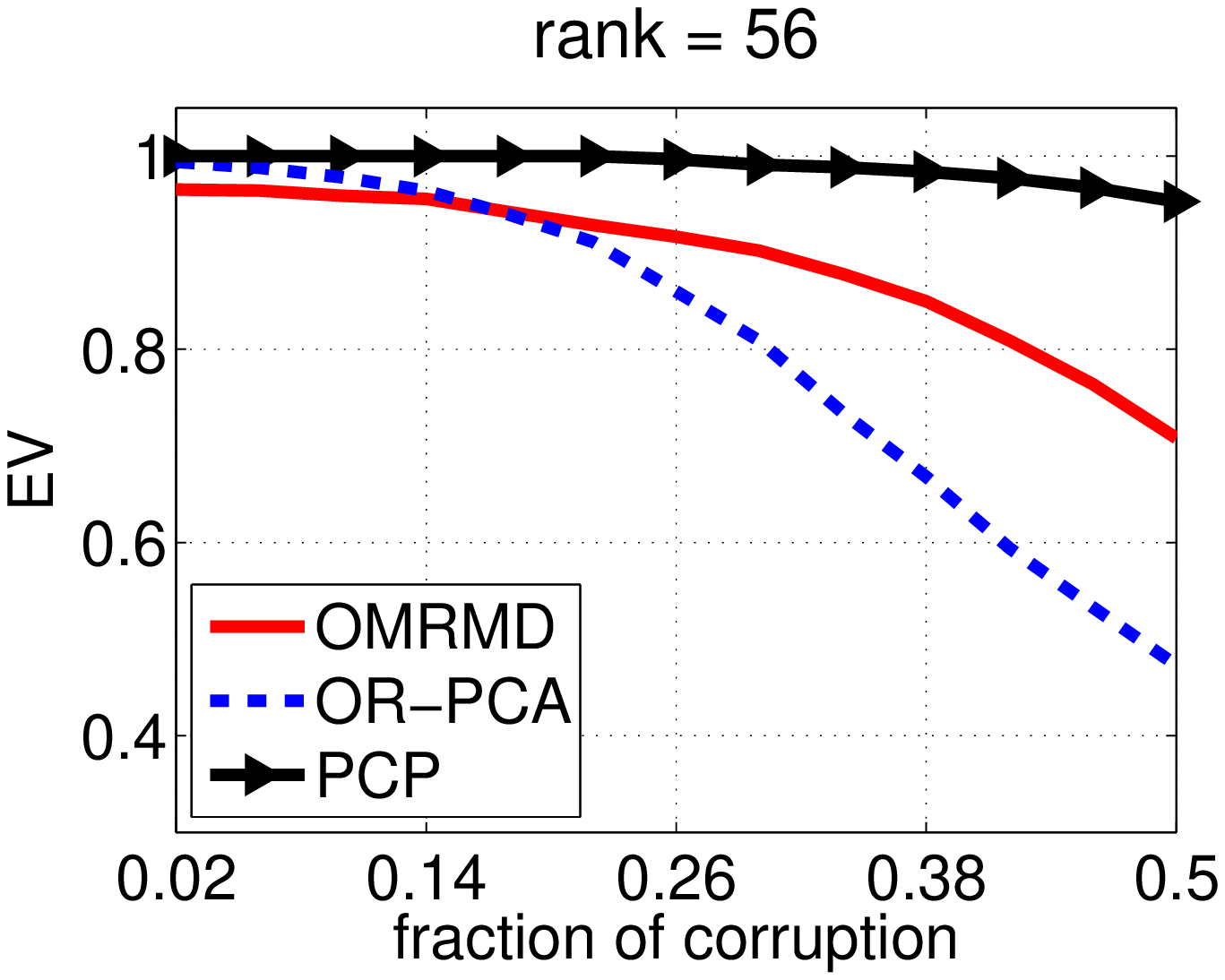}
		\label{fig:rank 56}
	}
	\subfloat[]{
		\includegraphics[width=0.32\linewidth]{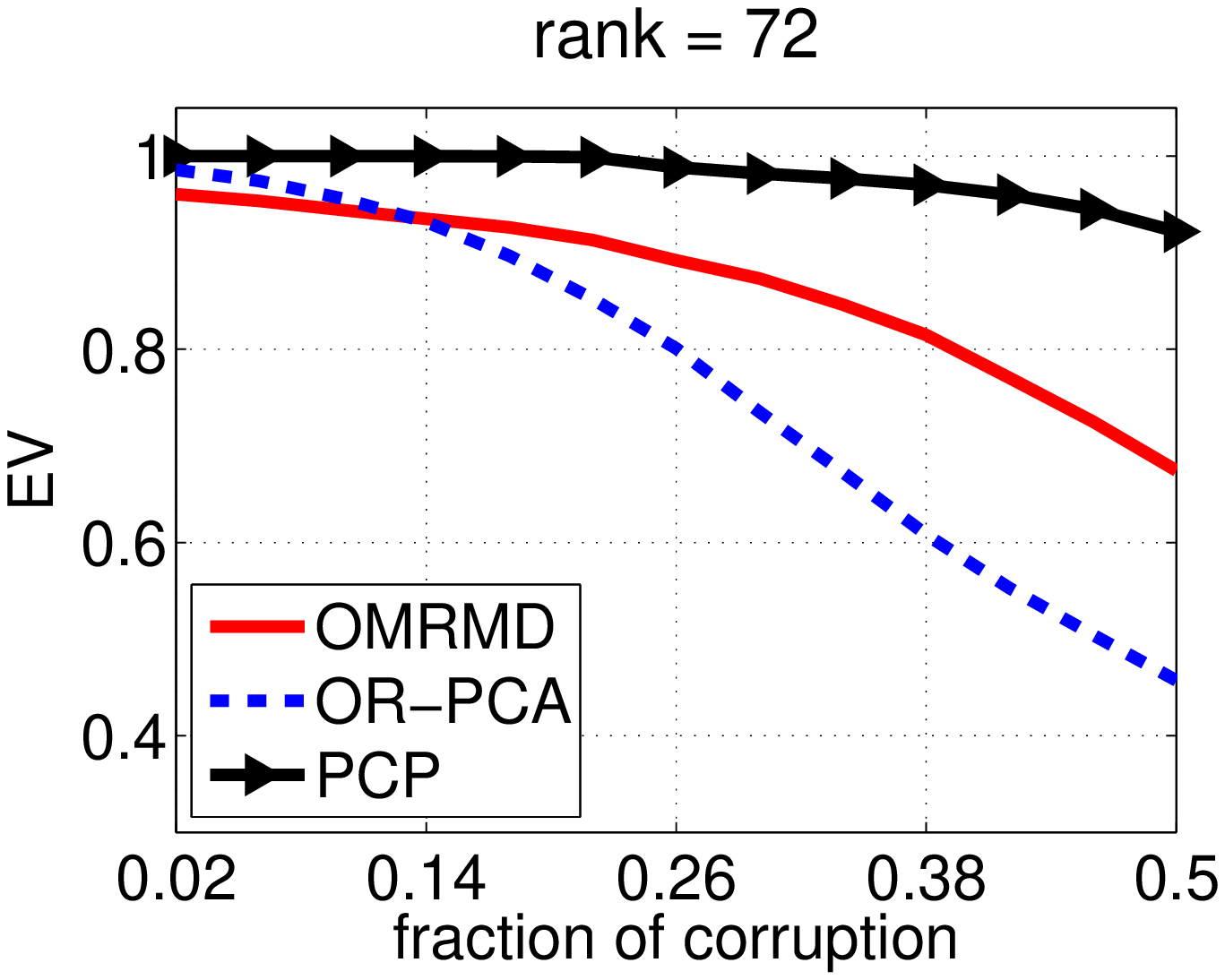}
		\label{fig:rank 72}
	}
	\subfloat[]{
		\includegraphics[width=0.32\linewidth]{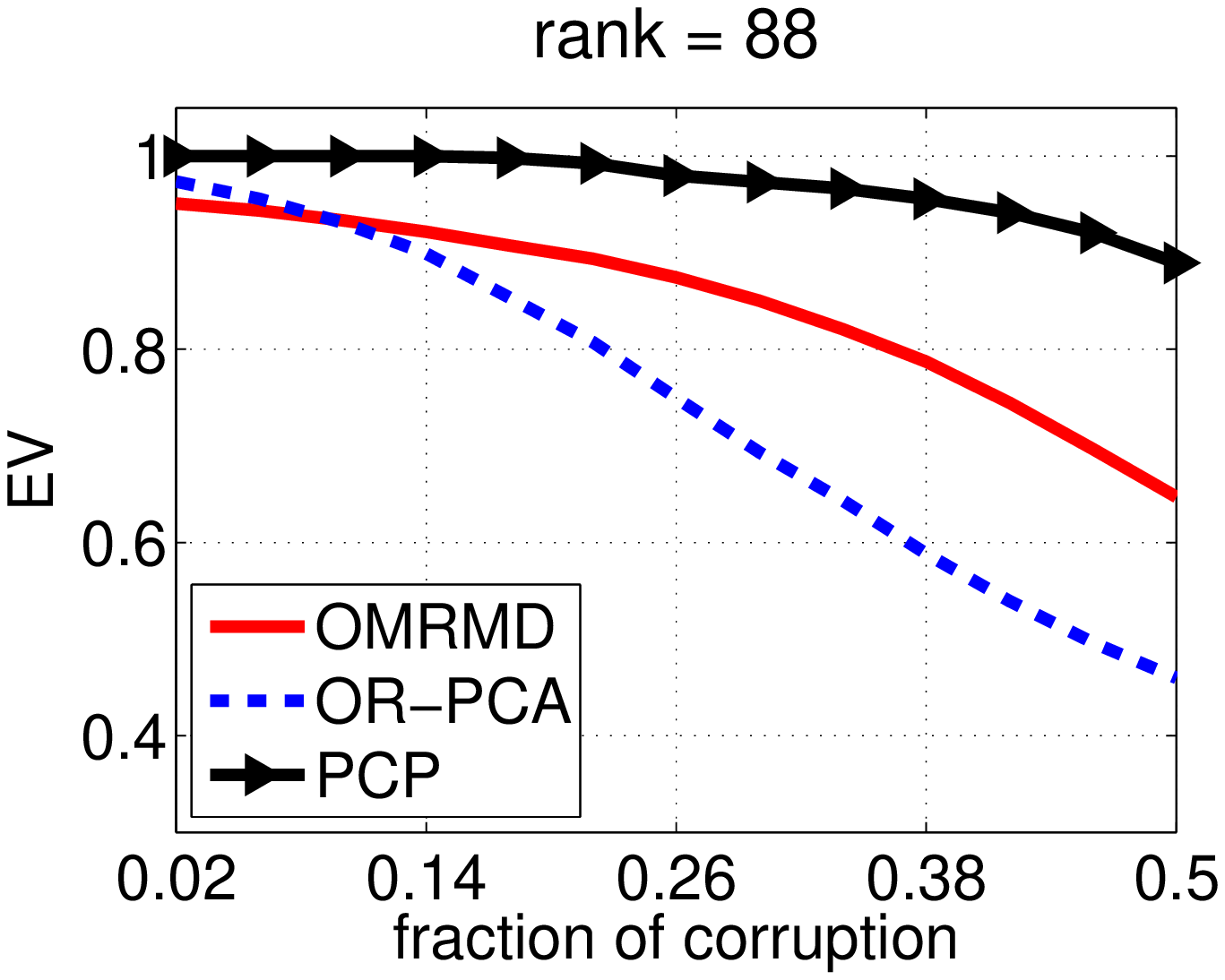}
		\label{fig:rank 88}
	}
	\caption{EV value against corruption fractions when the matrix has a relatively low rank (note that the ambient dimension $p$ is 400). The EV value is computed by the obtained basis after accessing the last sample. When the rank is extremely low (rank = 8), OMRMD and OR-PCA works comparably. In other cases, OMRMD is always better than OR-PCA addressing a large fraction of corruption.}
	\label{fig:diff_rank_rho_lowrank}
\end{figure*}

\begin{figure*}[!htb]
	\centering
	\subfloat[]{
		\includegraphics[width=0.32\linewidth]{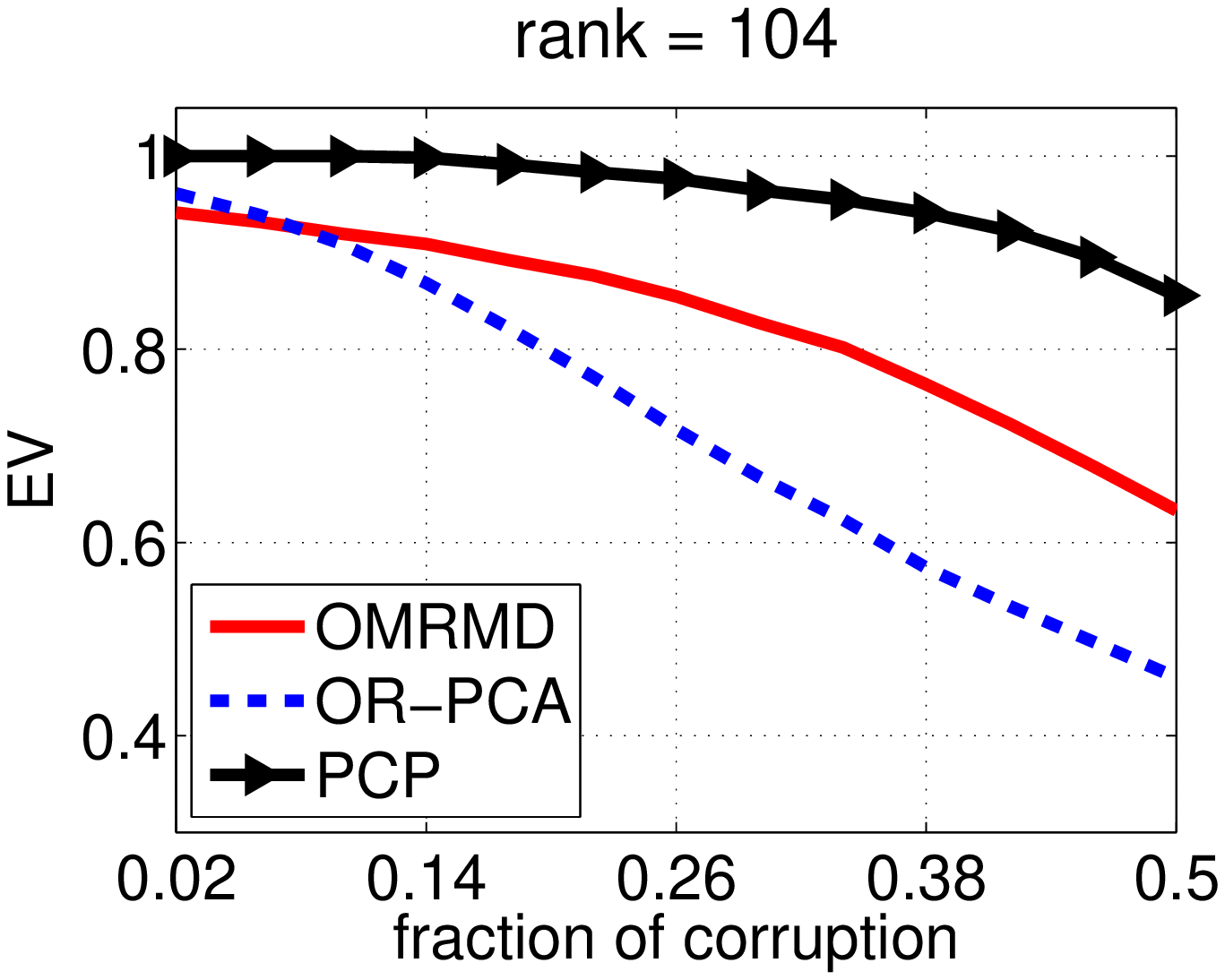}
		\label{fig:rank 104}
	}
	\subfloat[]{
		\includegraphics[width=0.32\linewidth]{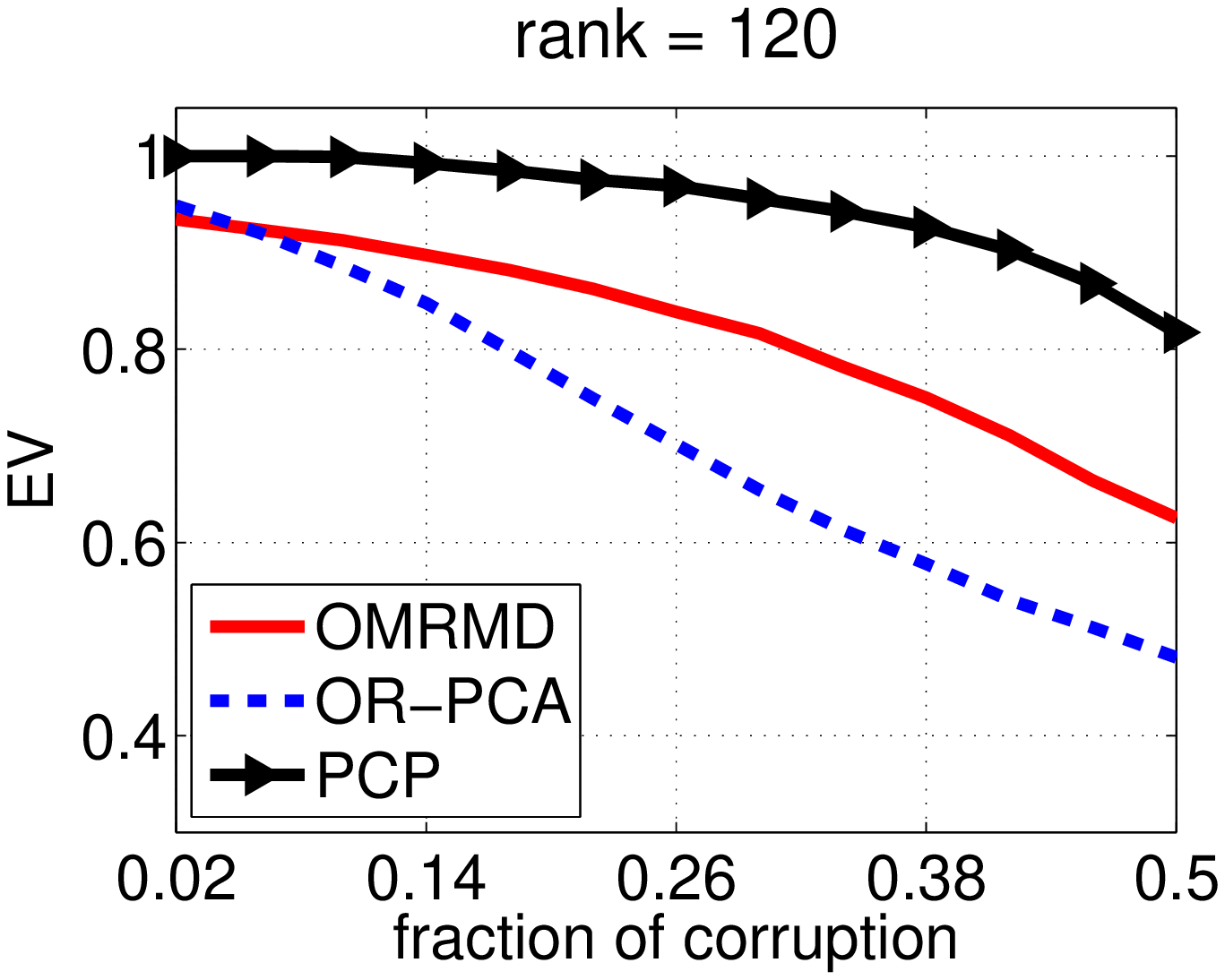}
		\label{fig:rank 120}
	}
	\subfloat[]{
		\includegraphics[width=0.32\linewidth]{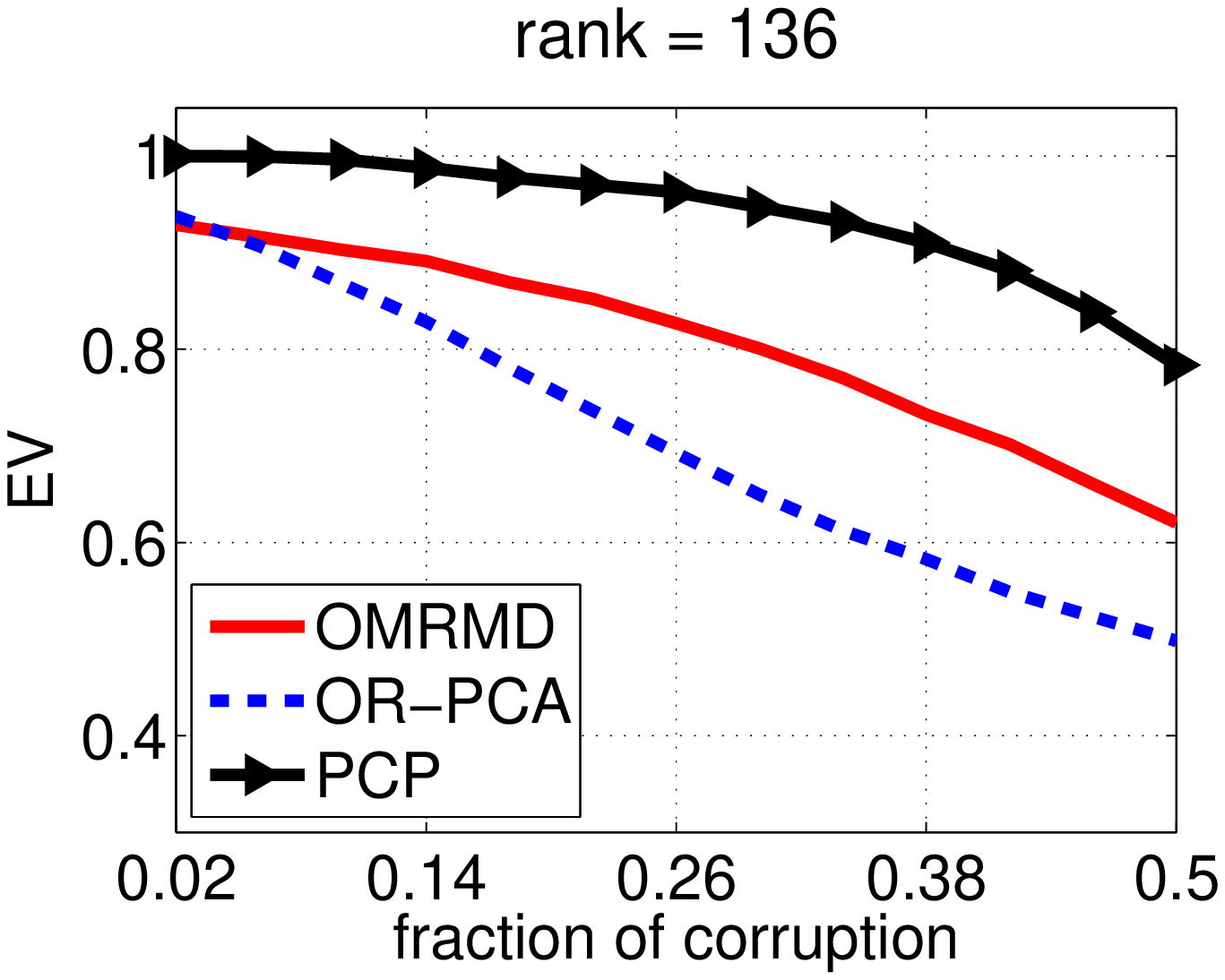}
		\label{fig:rank 136}
	}
	
	\subfloat[]{
		\includegraphics[width=0.32\linewidth]{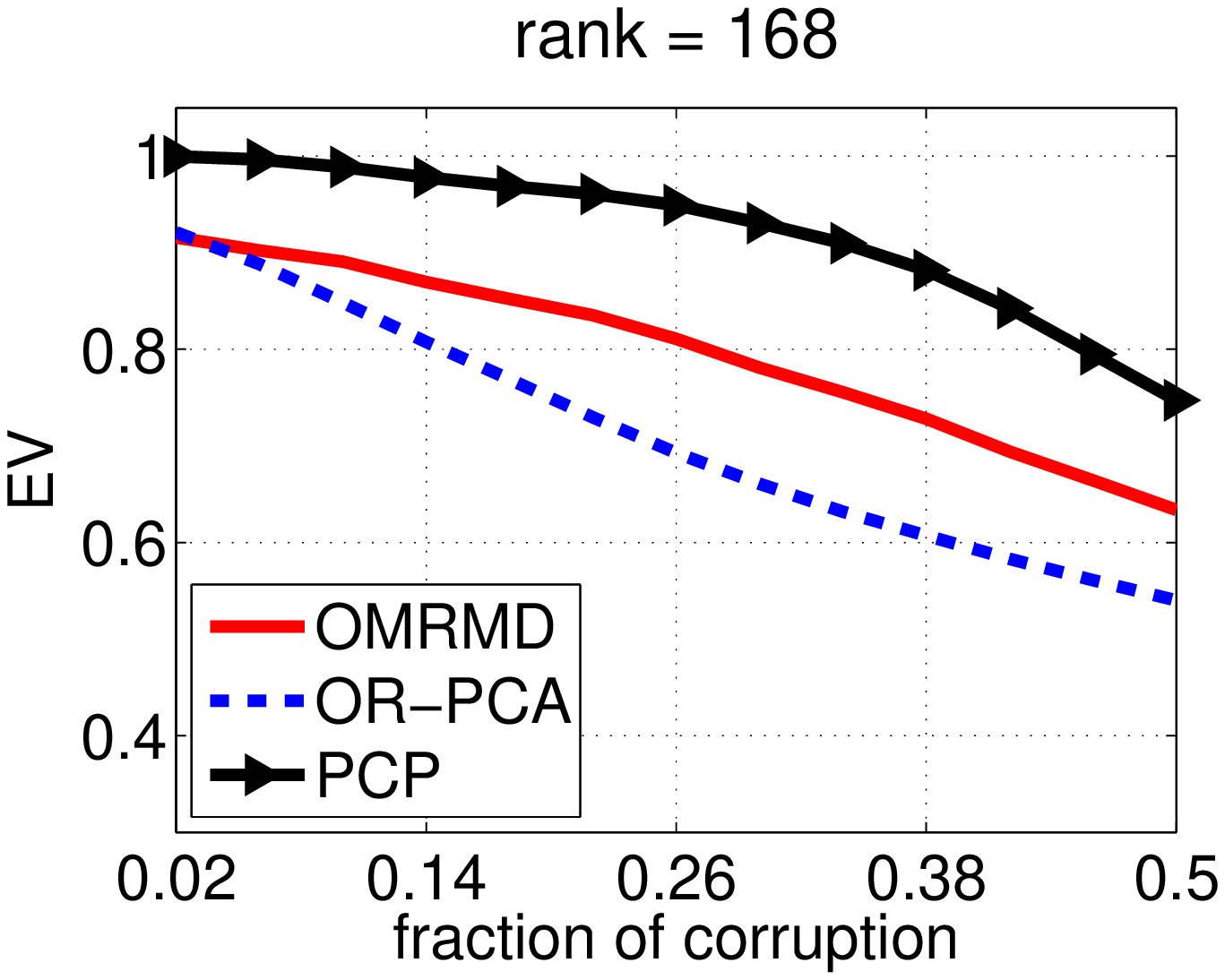}
		\label{fig:rank 168}
	}
	\subfloat[]{
		\includegraphics[width=0.32\linewidth]{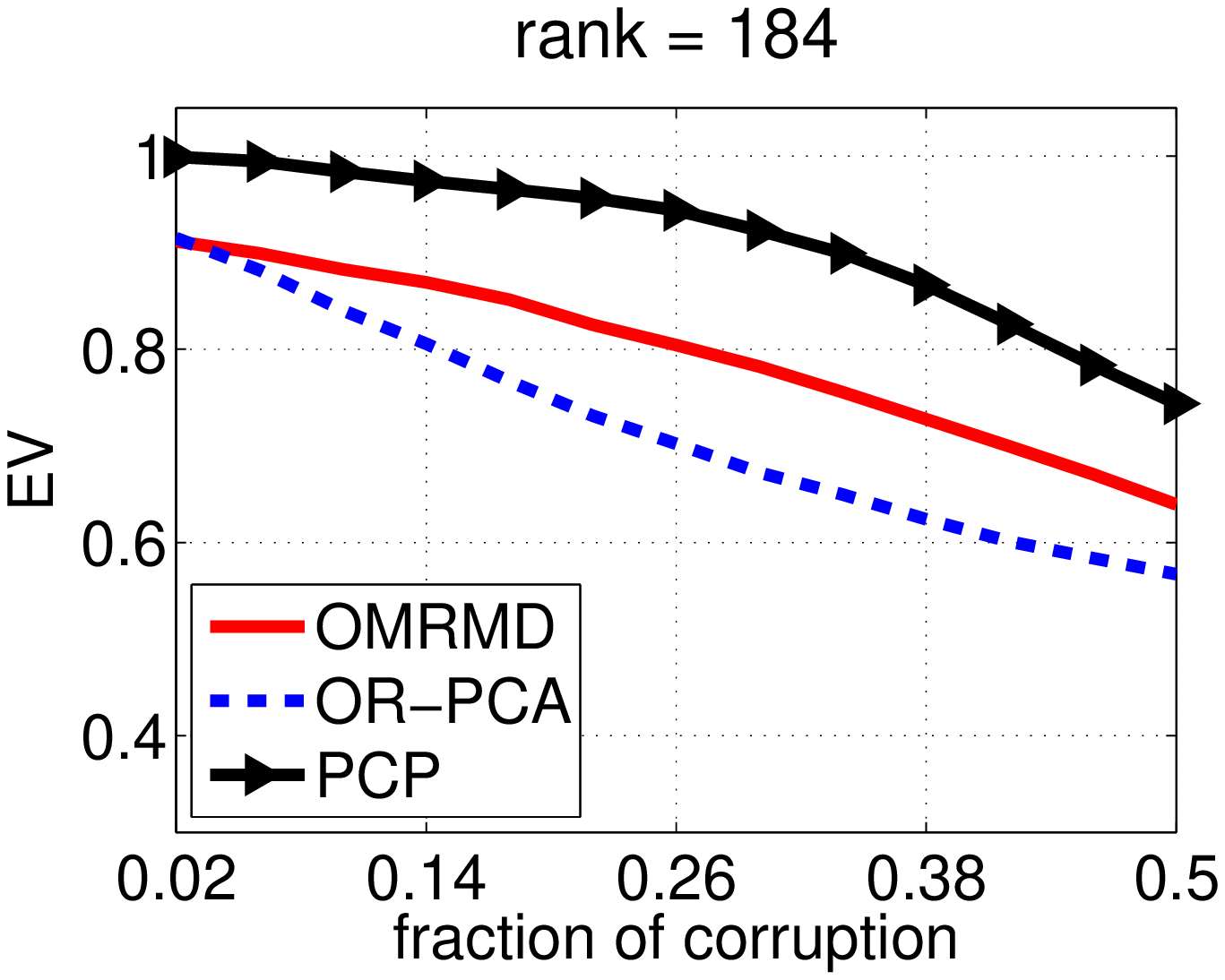}
		\label{fig:rank 184}
	}
	\subfloat[]{
		\includegraphics[width=0.32\linewidth]{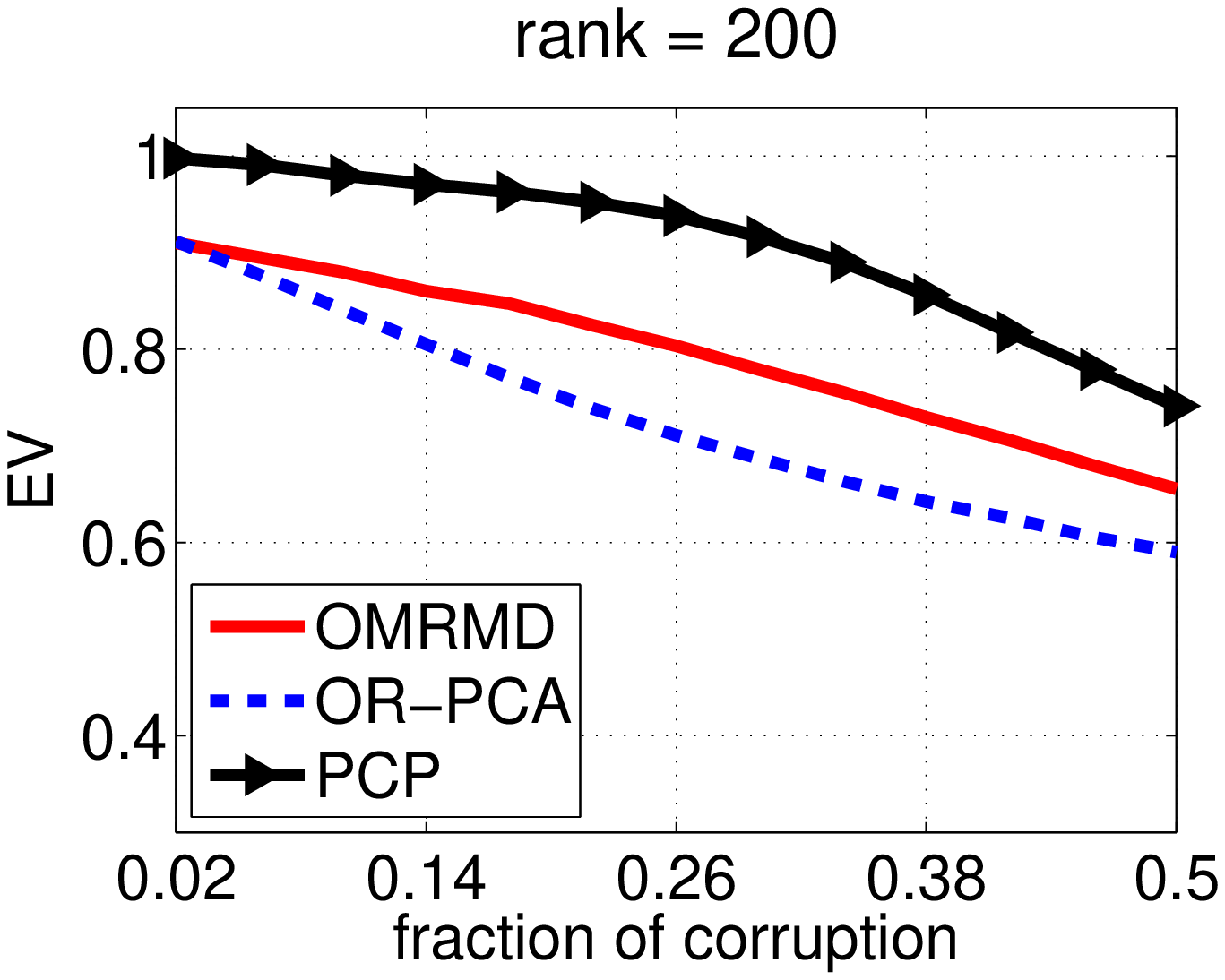}
		\label{fig:rank 200}
	}
	\caption{EV value against corruption fractions when the matrix has a middle level of rank (note that the ambient dimension $p$ is 400). The EV value is computed by the basis after accessing the last sample. In these cases, OR-PCA degrades as soon as the corruption is tuned to be higher than 0.02.}
	\label{fig:diff_rank_rho_midrank}
\end{figure*}

\subsection{Robustness}
We first study the robustness of OMRMD, measured by the EV value of its output after accessing the last sample, and compare it to the nuclear norm based OR-PCA and the batch algorithm PCP. In order to make a detailed examination, we vary the intrinsic dimension $d$ from  $0.02p$ to  $0.5p$, with a step size $0.04p$, and the corruption fraction $\rho$ from  $0.02$ to $0.5$, with a step size $0.04$. 

The general results are reported in Figure~\ref{fig:diff_rank_rho} where a brighter color means a higher EV (hence better performance). We observe that for easy tasks (\emph{i.e.}, when corruption and rank are low), both OMRMD and OR-PCA perform comparably. On the other hand, for more difficult cases, OMRMD outperforms OR-PCA. In order to further investigate this phenomenon, we plot the EV curve against the fraction of corruption under a given matrix rank. In particular, we group the results into two parts, one with relatively low rank (Figure~\ref{fig:diff_rank_rho_lowrank}) and the other with middle level of rank (Figure~\ref{fig:diff_rank_rho_midrank}). Figure~\ref{fig:diff_rank_rho_lowrank} indicates that when manipulating a low-rank matrix, OR-PCA works as well as OMRMD under a low level of noise. For instance, the EV produced by OR-PCA is as close as that of OMRMD for rank less than 40 and $\rho$ no more than 0.26. However, when the rank becomes larger, OR-PCA degrades quickly compared to OMRMD. This is possibly because the max-norm is a tighter approximation to the matrix rank. Since PCP is a batch formulation and accesses all the data in each iteration, it always achieves the best recovery performance.

\begin{figure*}[!htb]
	\centering
	\subfloat[$\rho = 0.01$]{
		\includegraphics[width=0.4\linewidth]{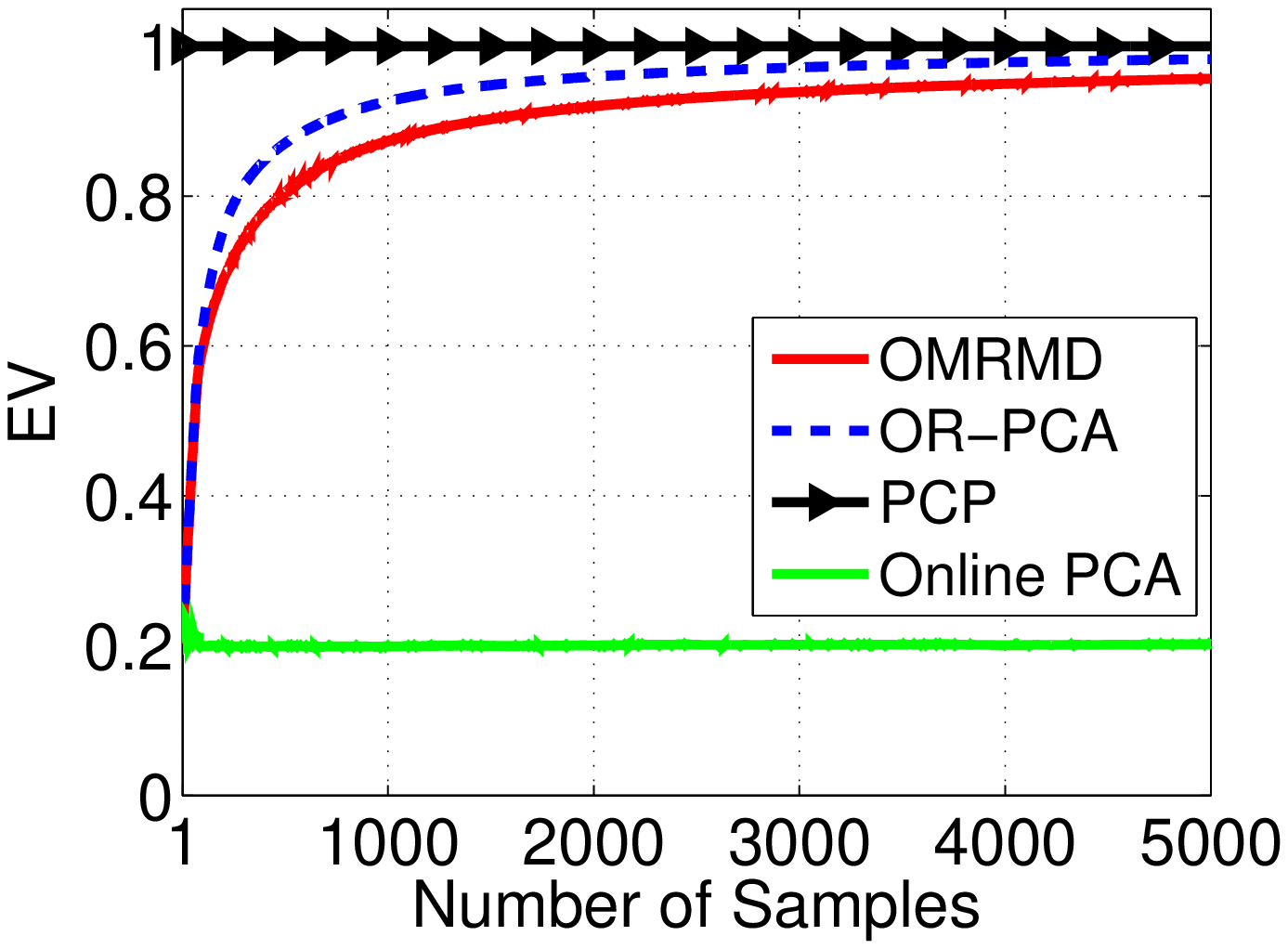}
		\label{fig:ev:0.01}
	}
	\subfloat[$\rho = 0.1$]{
		\includegraphics[width=0.4\linewidth]{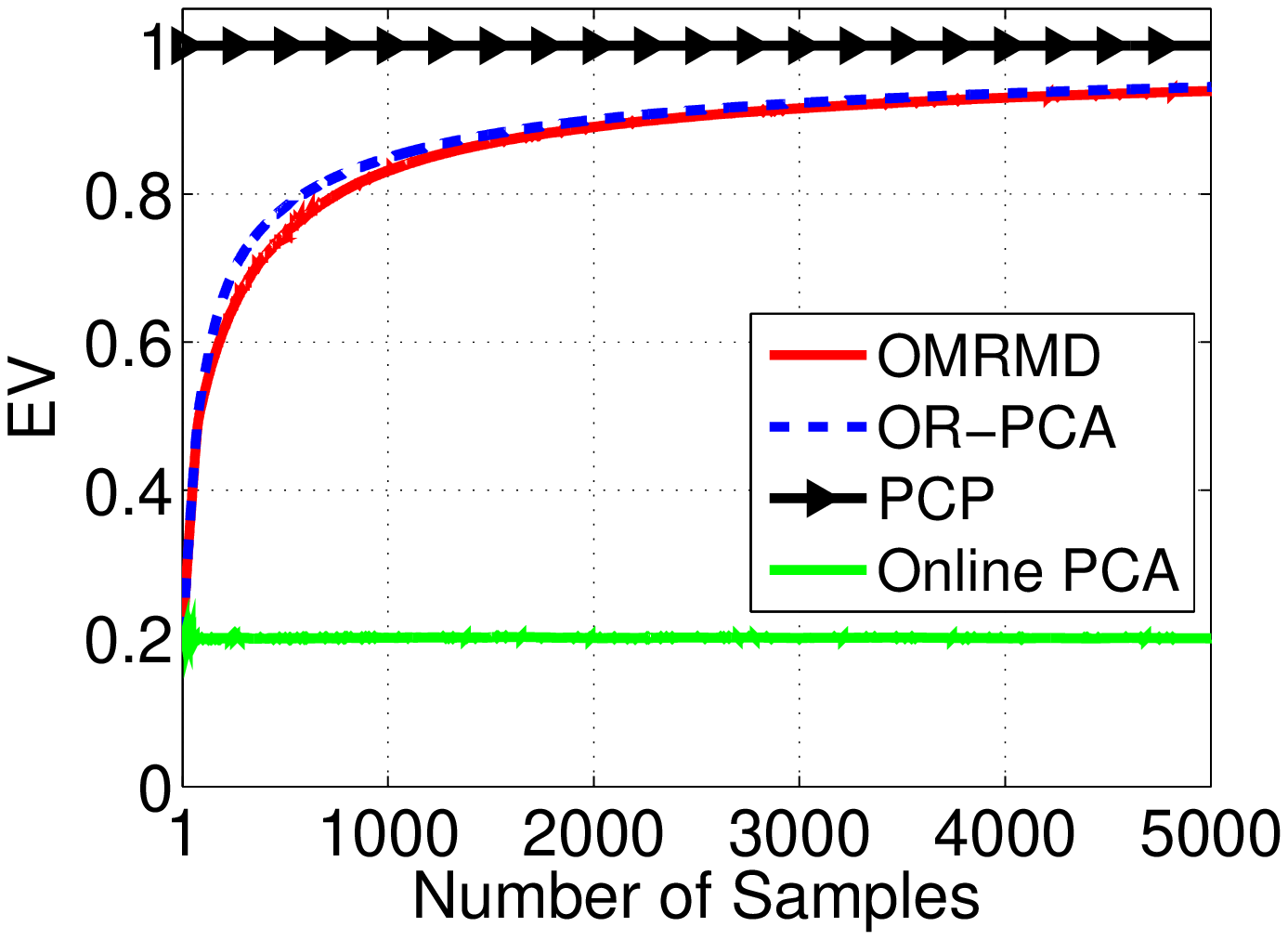}
		\label{fig:ev:0.1}
	}
	
	\subfloat[$\rho = 0.3$]{
		\includegraphics[width=0.4\linewidth]{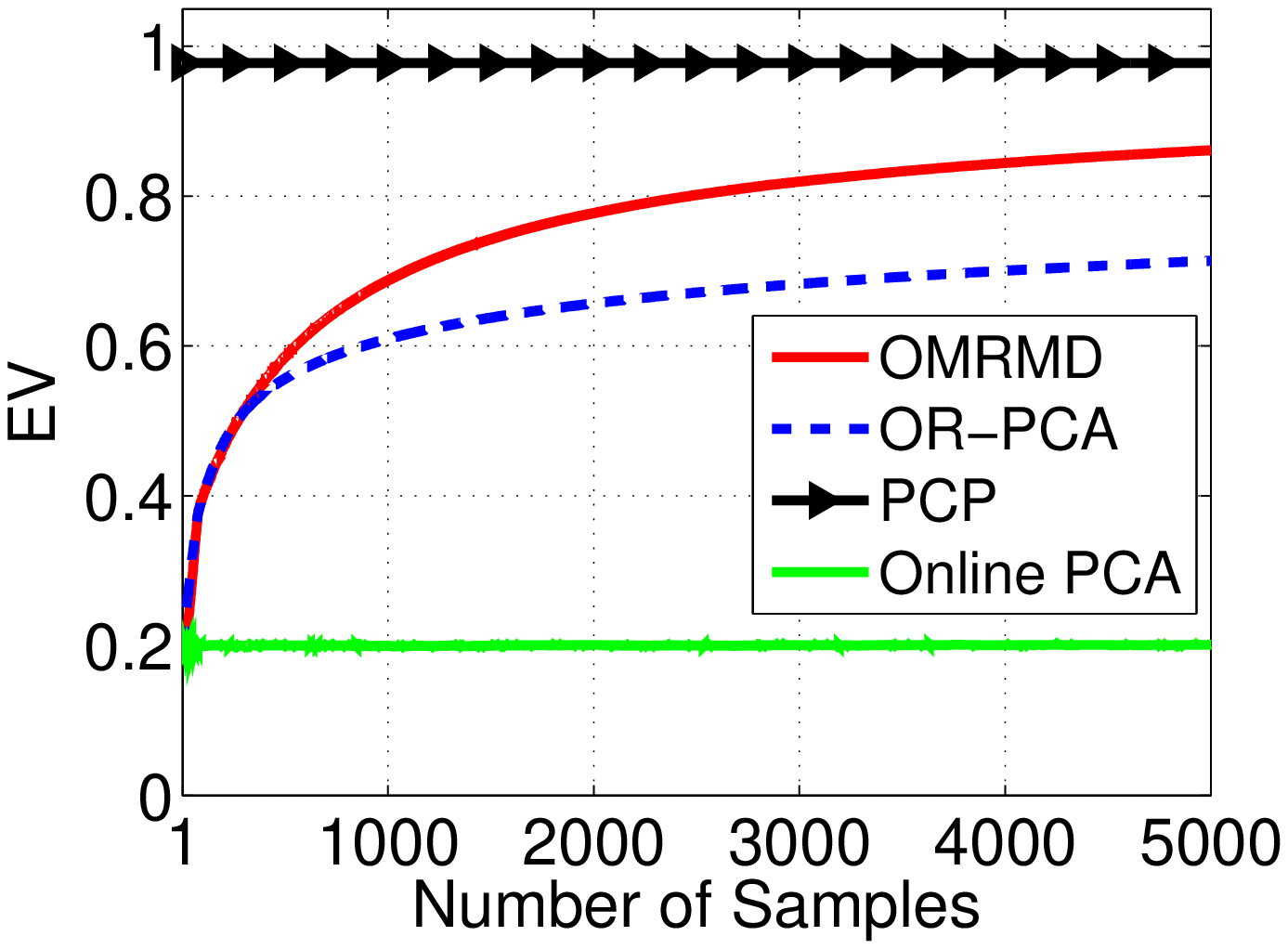}
		\label{fig:ev:0.3}
	}
	\subfloat[$\rho = 0.5$]{
		\includegraphics[width=0.4\linewidth]{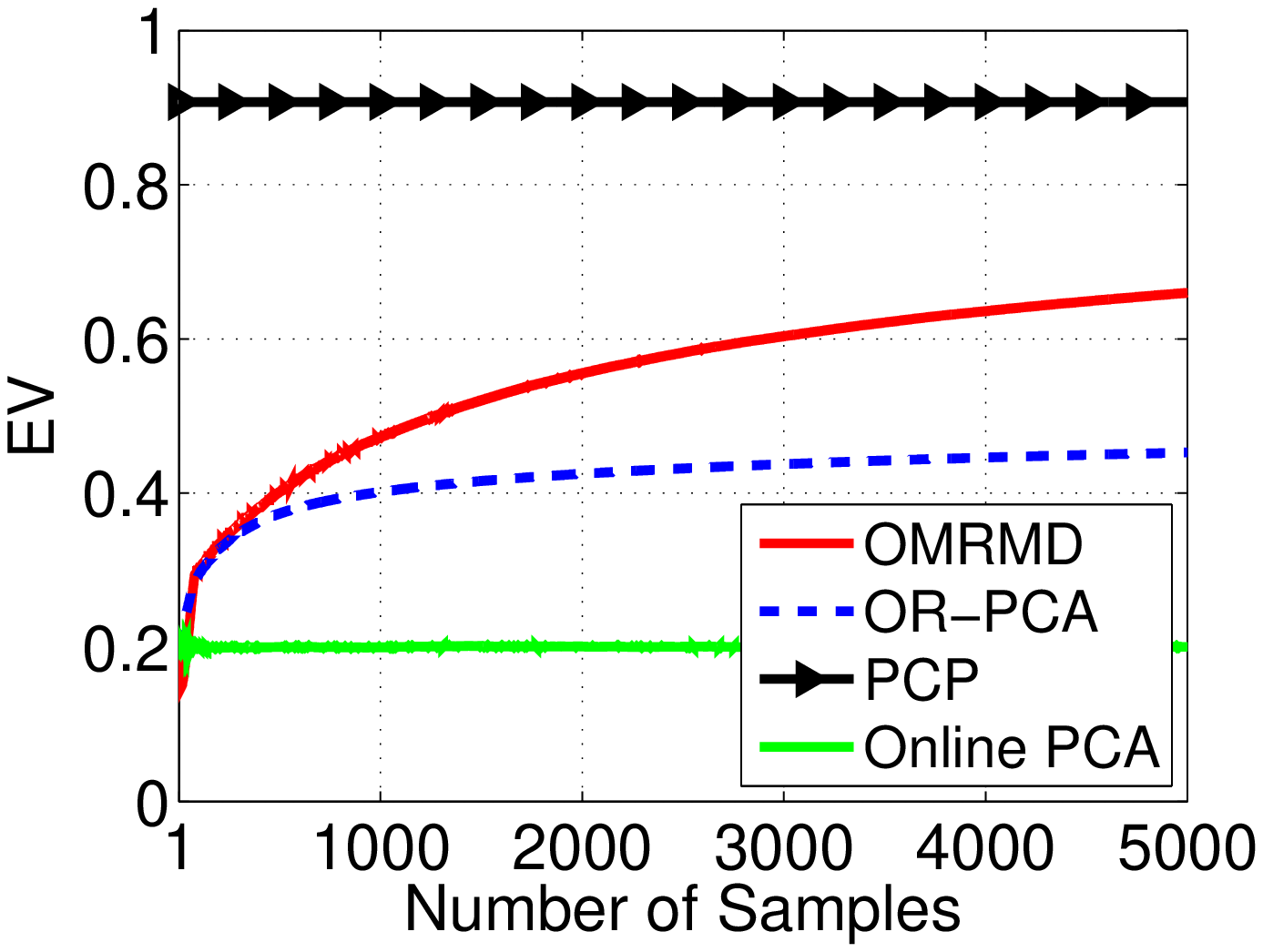}
		\label{fig:ev:0.5}
	}
	\caption{ EV value against number of samples under different corruption fractions. PCP outperforms all the online algorithms before they converge since PCP accesses all the data to estimate the basis. The performance of Online PCA is significantly degraded even when there is little corruption. For hard tasks ($\rho$ equal to 0.3 or higher), we again observe the superiority of the max-norm over the nuclear norm.}
	\label{fig:ev_samples}
\end{figure*}

\begin{figure*}[!htb]
	\centering
	\subfloat[$p = 400$]{
		\includegraphics[width=0.33\linewidth]{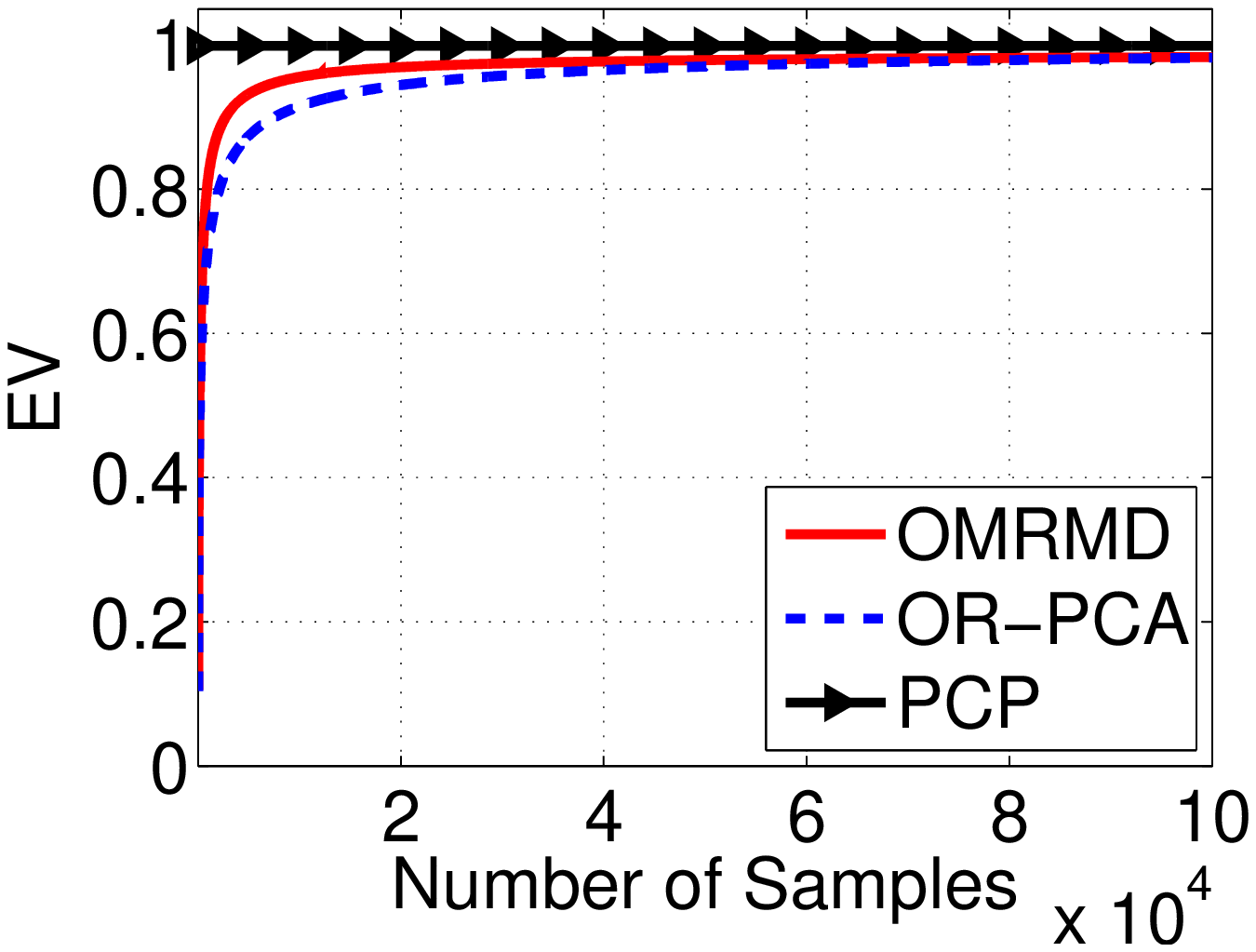}
		\label{fig:large_sample400}
	}
	\subfloat[$p = 1000$]{
		\includegraphics[width=0.33\linewidth]{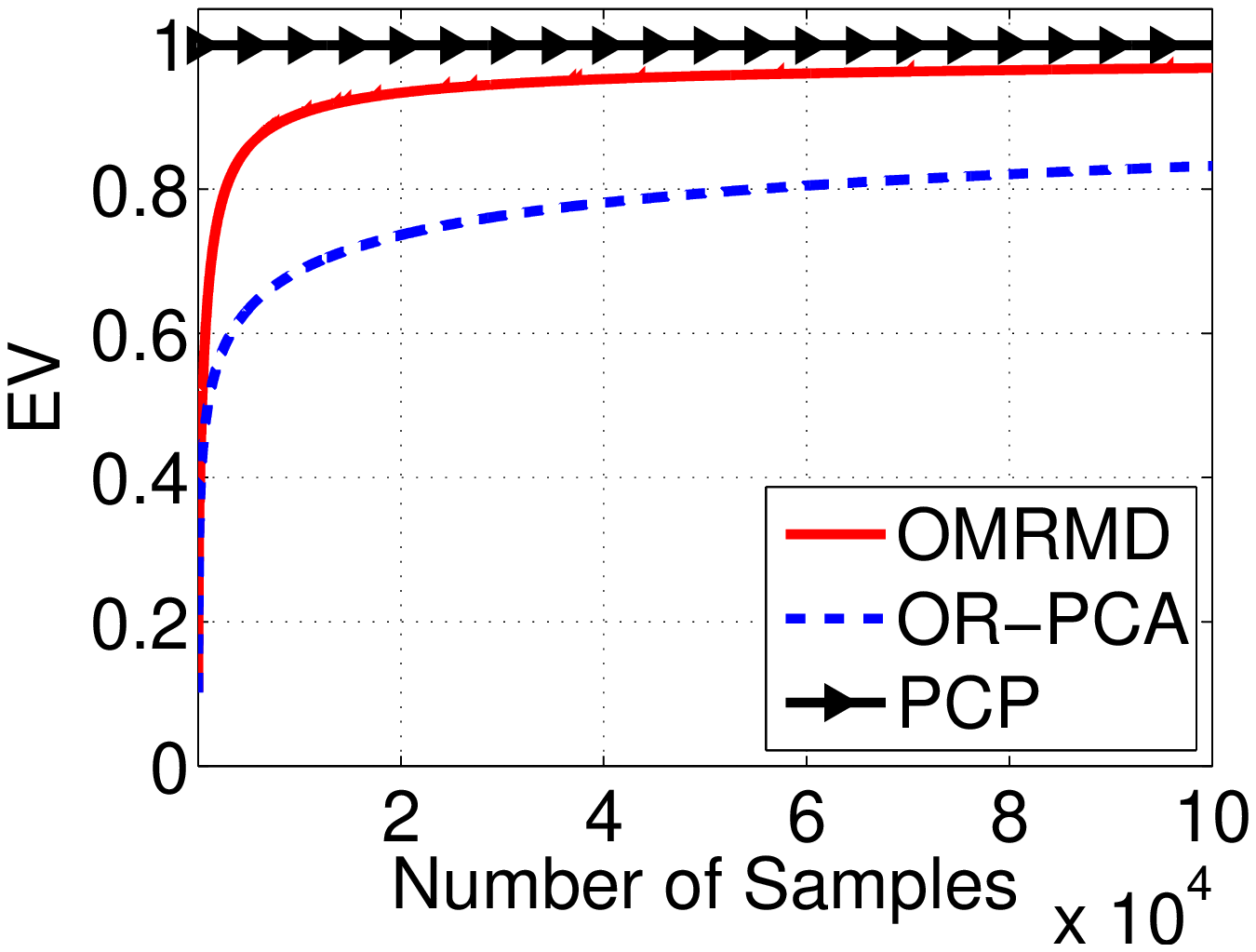}
		\label{fig:large_sample1000}
	}
	\subfloat[$p = 3000$]{
		\includegraphics[width=0.33\linewidth]{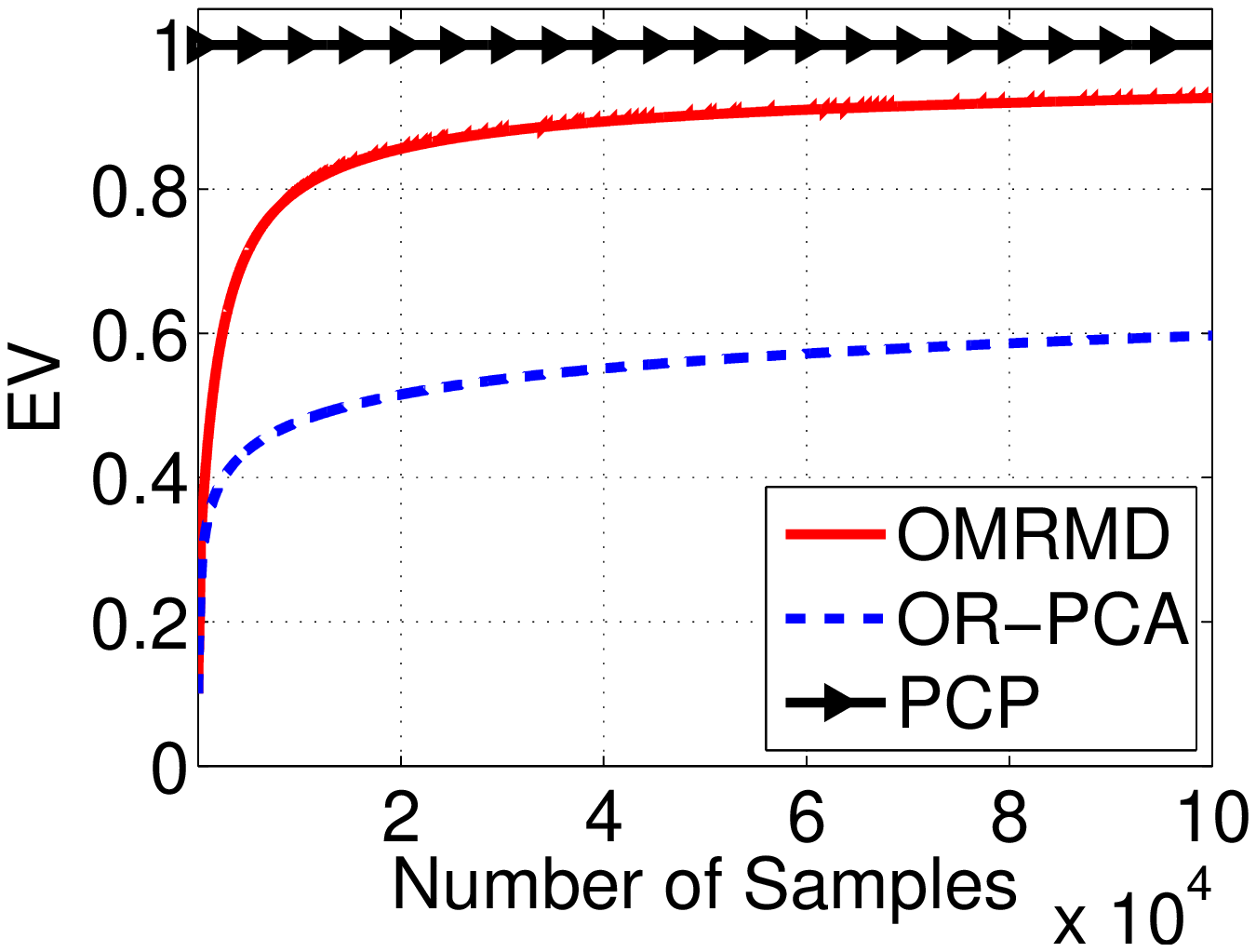}
		\label{fig:large_sample3000}
	}
	\caption{EV value against number of samples under different ambient dimensions. The intrinsic dimension $d = 0.1p$ and the corruption fraction $\rho = 0.3$.
	}
	\label{fig:large_sample}
\end{figure*}

\subsection{Convergence Rate}
We next study the convergence of OMRMD by plotting the EV curve against the number of samples. Besides OR-PCA and PCP, we also add online PCA~\cite{artac2002incremental} as a baseline algorithm. The results are illustrated in Figure~\ref{fig:ev_samples}. As expected, PCP achieves the best performance since it is a batch method and needs to access all the data throughout the algorithm. Online PCA degrades significantly even with low corruption (Figure~\ref{fig:ev:0.01}).  OMRMD is comparable to OR-PCA when the corruption is low (Figure~\ref{fig:ev:0.01}), and converges significantly faster when the corruption is high (Figure~\ref{fig:ev:0.3} and~\ref{fig:ev:0.5}). This observation agrees with Figure~\ref{fig:diff_rank_rho}, and again suggests that for large corruption, max-norm may be a better fit than the nuclear norm.

Indeed, it is true that OMRMD converges much faster even in large scale problems. In Figure~\ref{fig:large_sample}, we compare the convergence rate of OMRMD and OR-PCA under different ambient dimensions. The intrinsic dimensions are set with $0.1p$, indicating a low-rank structure of the underlying data. The error corruption $\rho$ is fixed with $0.3$~--~a difficult task for recovery. We observe that for high dimensional cases ($p = 1000$ and $p = 3000$), OMRMD significantly outperforms OR-PCA. For example, in Figure~\ref{fig:large_sample1000}, OMRMD achieves the EV value of $0.8$ only with accessing about 2000 samples, while OR-PCA needs to access $60, 000$ samples to obtain the same accuracy!

\begin{figure*}[!htb]
	\centering
	\subfloat[$p = 400$]{
		\includegraphics[width=0.33\linewidth]{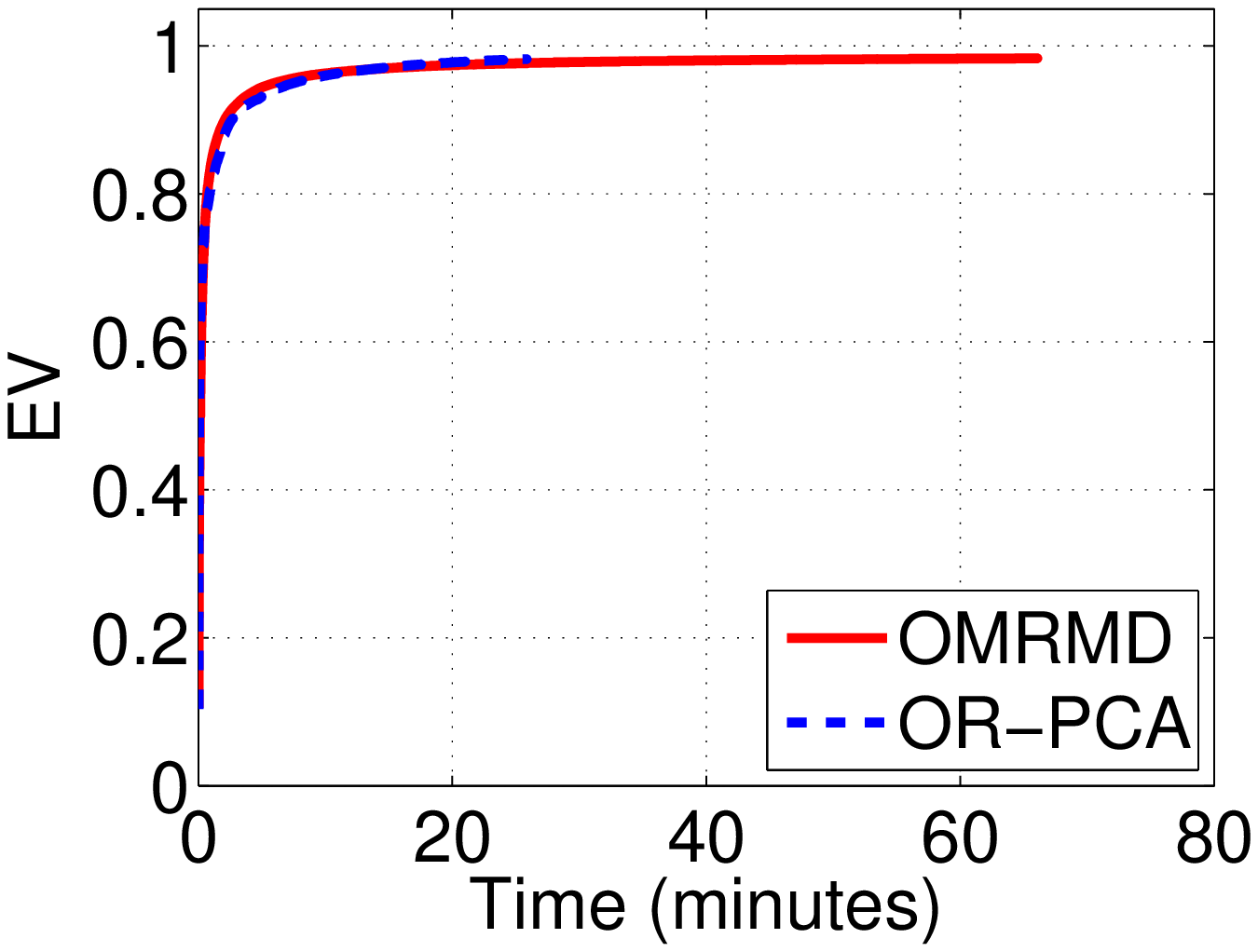}
		\label{fig:large_time400}
	}
	\subfloat[$p = 1000$]{
		\includegraphics[width=0.33\linewidth]{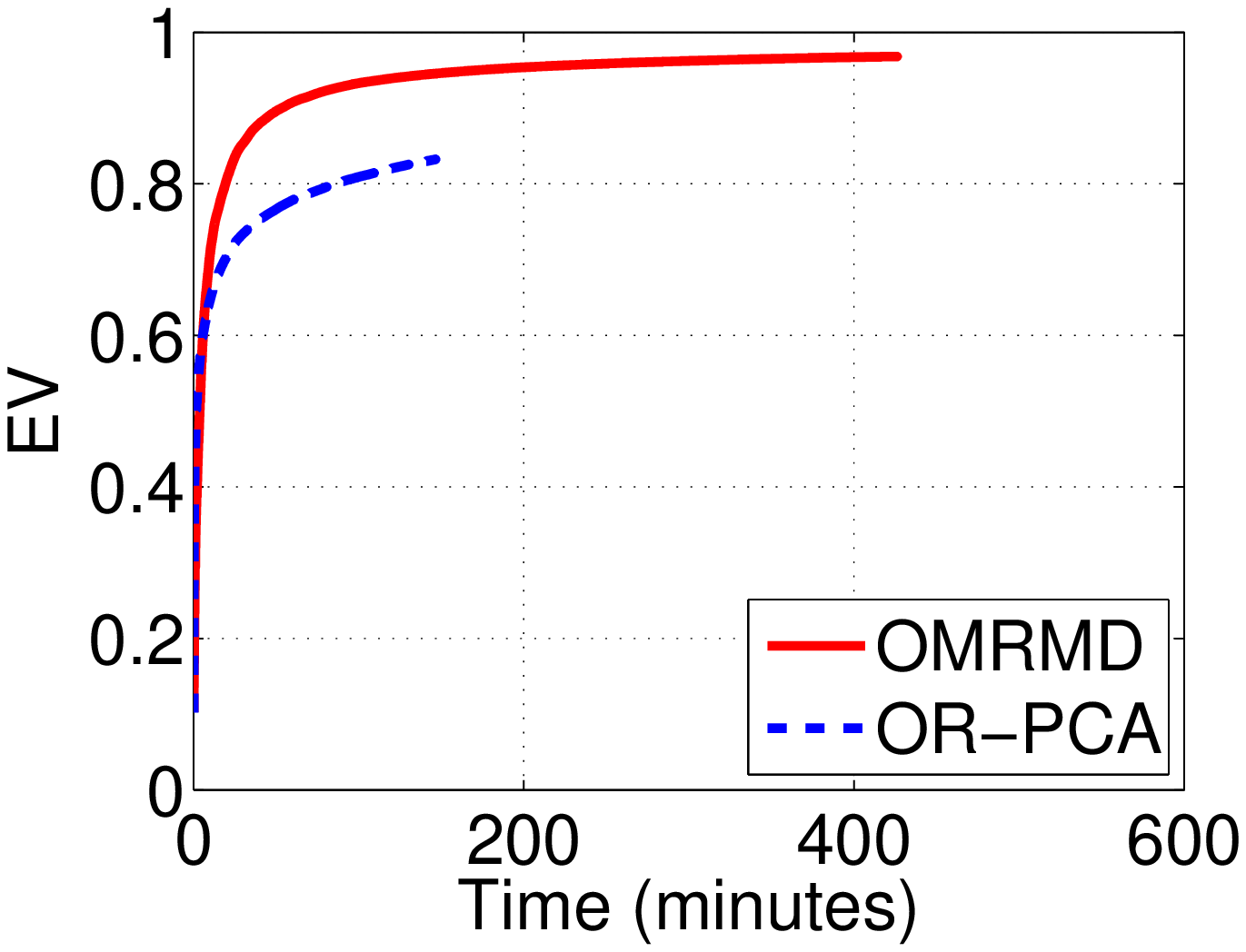}
		\label{fig:large_time1000}
	}
	\subfloat[$p = 3000$]{
		\includegraphics[width=0.33\linewidth]{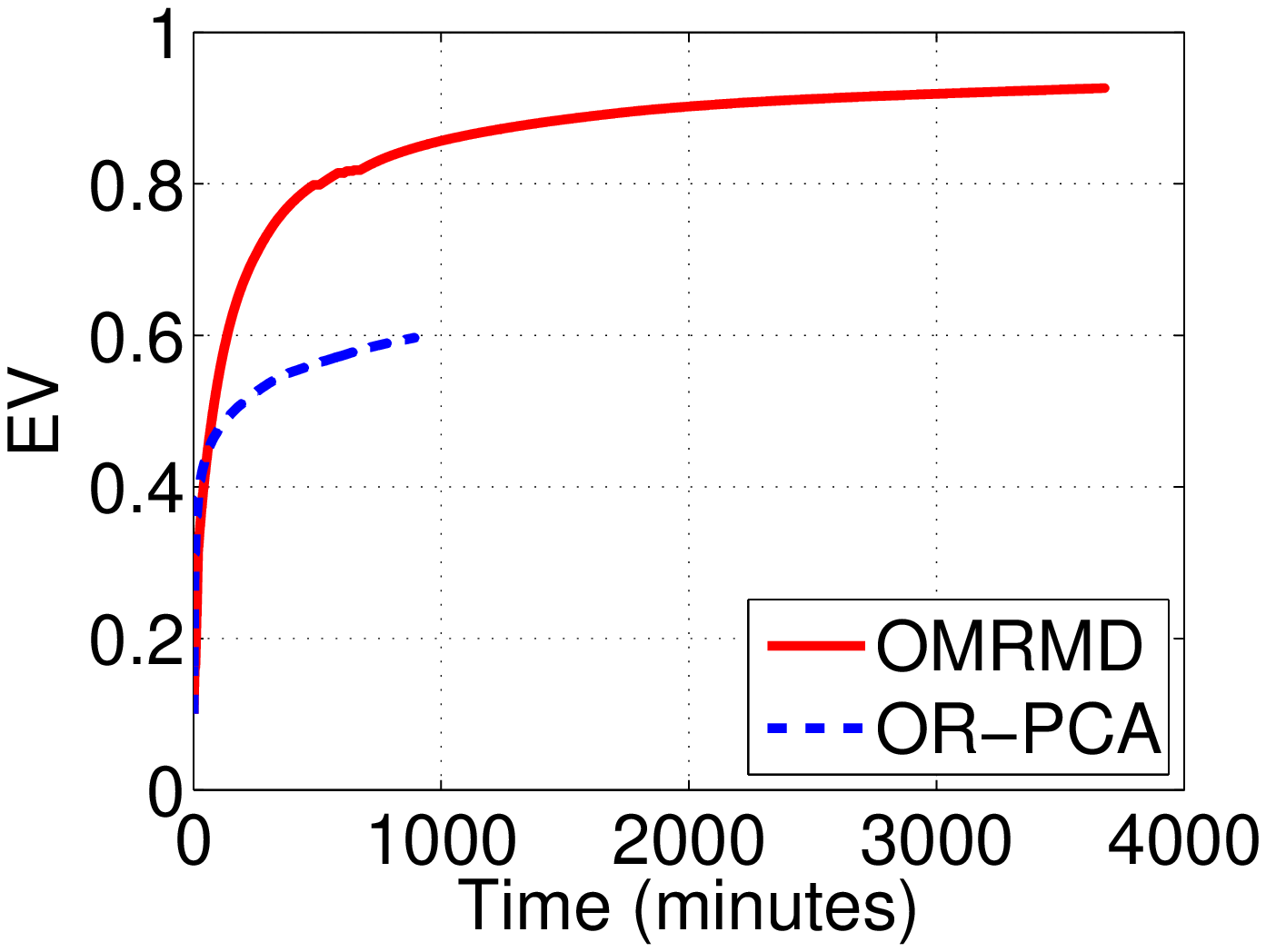}
		\label{fig:large_time3000}
	}
	\caption{EV value against time under different ambient dimensions. The intrinsic dimension $d$ is set as $0.1p$ and the corruption fraction $\rho$ equals 0.3.}
	\label{fig:large_time}
\end{figure*}

\subsection{Computational Complexity}
We note that our OMRMD is a bit inferior to OR-PCA in terms of computation in each iteration, as our algorithm may solve a dual problem to optimize $\br$~(see Algorithm~\ref{alg:re}). Therefore, our algorithm will spend more time to process an instance if the initial solution $\br_0$ violates the constraint. We plot the EV curve with respect to the running time in Figure~\ref{fig:large_time}. It shows that basically, OR-PCA is about $3$ times faster than OMRMD per sample. However, we point out here that we mainly emphasize on the convergence rate. That is, given an EV value, how much time the algorithm will cost to achieve it. In Figure~\ref{fig:large_time3000}, for example, OMRMD takes $50$ minutes to achieve the EV value of 0.6, while OR-PCA uses nearly $900$ minutes. From Figure~\ref{fig:large_sample} and Figure~\ref{fig:large_time}, it is safe to say that OMRMD is superior to OR-PCA in terms of convergence rate in the price of a little more computation per sample.

\section{Conclusion}
\label{sec:conclude}
In this paper, we have developed an online algorithm for {max-norm} regularized matrix decomposition problem. Using the matrix factorization form of the max-norm, we converted the original problem to a constrained one which facilitates an online implementation for solving the batch  problem. We have established theoretical guarantees that the solutions will converge to a stationary point of the expected loss function asymptotically.  Moreover, we empirically compared our proposed algorithm with OR-PCA, which is a recently proposed online algorithm for nuclear-norm based matrix decomposition. The simulation results have suggested that the proposed algorithm is more robust than OR-PCA, in particular for hard tasks (i.e., when a large fraction of entries are corrupted). We also have investigated the convergence rate for both OMRMD and OR-PCA, and have shown that OMRMD converges much faster than OR-PCA even in large-scale problems. When acquiring sufficient samples, we observed that our algorithm converges to the batch method PCP, which is a state-of-the-art formulation for subspace recovery. Our experiments, to an extent,  suggest that the max-norm might be a tighter relaxation of the rank function compared to the nuclear norm.

\clearpage
\appendix

\section{Proof Details}
\label{supp:sec:proof}
\subsection{Proof for Stage \textbf{I}}
First we prove that all the stochastic variables are uniformly bounded.
\begin{proposition}
Let $\br_t$, $\be_t$ and $L_t$ be the optimal solutions produced by Algorithm~\ref{alg:all}. Then,

\begin{enumerate}
\item The optimal solutions $\br_t$ and $\be_t$  are uniformly bounded.

\item The matrices $\frac{1}{t}A_t$ and $\frac{1}{t}B_t$ are uniformly bounded.

\item There exists a compact set $\mathcal{L}$, such that for all $L_t$ produced by Algorithm~\ref{alg:all}, $L_t \in \mathcal{L}$. Namely, there exists a positive constant $L_{\max}$ that is uniform over $t$, such that for all $t > 0$,
\begin{equation*}
\fronorm{L_t} \leq L_{\max}.
\end{equation*}
\end{enumerate}
\end{proposition}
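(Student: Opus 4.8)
The plan is to establish the three claims in the order stated, since each boundedness result feeds into the next. \textbf{Claim 1} is the easiest. The coefficient $\br_t$ satisfies $\twonorm{\br_t} \leq 1$ directly by the feasibility constraint in Problem~\eqref{eq:solve_re}, so it is bounded outright. To bound $\be_t$, I would compare the optimal objective value of~\eqref{eq:solve_re} against the trivial feasible point $(\br, \be) = (\mathbf{0}, \mathbf{0})$. Since $\th(\mathbf 0) = 0$ for every admissible regularizer, optimality of $(\br_t, \be_t)$ yields
\begin{equation*}
\lambda_2 \th(\be_t) \leq \fractwo{1}\twonorm{\bz_t - L_{t-1}\br_t - \be_t}^2 + \lambda_2\th(\be_t) \leq \fractwo{1}\twonorm{\bz_t}^2 .
\end{equation*}
Because $\th$ is a norm and $\twonorm{\bz_t}$ is controlled by the compact-support Assumption~\ref{as:z}, finite-dimensional norm equivalence then produces a uniform bound on $\twonorm{\be_t}$.

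\textbf{Claim 2} follows by averaging. Writing $\frac{1}{t}A_t = \frac{1}{t}\sum_{i=1}^t \br_i\br_i\trans$ and using $\twonorm{\br_i} \leq 1$, subadditivity gives $\fronorm{\frac{1}{t}A_t} \leq \frac{1}{t}\sum_{i=1}^t \twonorm{\br_i}^2 \leq 1$. Similarly $\fronorm{\frac{1}{t}B_t} \leq \frac{1}{t}\sum_{i=1}^t \twonorm{\bz_i - \be_i}\,\twonorm{\br_i}$, and each summand is uniformly bounded since $\bz_i$ is bounded (Assumption~\ref{as:z}) and $\be_i$ is bounded by Claim 1. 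Hence both averaged accumulation matrices are uniformly bounded.

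\textbf{Claim 3} is the crux, and the handling of the nonsmooth squared max-norm penalty is the main obstacle. I would start from the first-order optimality condition for the surrogate~\eqref{eq:alg:all:solve L} satisfied by $L_t$: there exists a subgradient matrix $Q_t$ of the form in~\eqref{eq:subgradient}, i.e.\ positive semi-definite, diagonal, with $\tr(Q_t)=1$, such that
\begin{equation*}
L_t \(\frac{1}{t}A_t\) + \frac{\lambda_1}{t} Q_t L_t = \frac{1}{t}B_t .
\end{equation*}
Rather than inverting $\frac{1}{t}A_t + \frac{\lambda_1}{t}Q_t$ directly, I would take the Frobenius inner product of both sides with $L_t$, giving
\begin{equation*}
\tr\(L_t\trans L_t \tfrac{1}{t}A_t\) + \frac{\lambda_1}{t}\tr\(L_t\trans Q_t L_t\) = \tr\(L_t\trans \tfrac{1}{t}B_t\) .
\end{equation*}
The second left-hand term is nonnegative since $Q_t \succeq 0$. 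For the first, Assumption~\ref{as:g_t(L)} guarantees $\frac{1}{t}A_t \succeq \beta_1 I$, so $\tr(L_t\trans L_t \tfrac{1}{t}A_t) \geq \beta_1 \fronorm{L_t}^2$. Bounding the right-hand side by Cauchy--Schwarz then yields $\beta_1\fronorm{L_t}^2 \leq \fronorm{\tfrac{1}{t}B_t}\,\fronorm{L_t}$, whence
\begin{equation*}
\fronorm{L_t} \leq \frac{1}{\beta_1}\fronorm{\tfrac{1}{t}B_t} ,
\end{equation*}
which is uniform in $t$ by Claim 2. Setting $L_{\max}$ to be this uniform bound and $\mathcal{L}$ the corresponding Frobenius ball finishes the argument.

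The genuinely delicate point is the use of the max-norm subgradient: one must verify that the characterization~\eqref{eq:subgradient} supplies a positive semi-definite $Q_t$ of unit trace, so that the cross term $\frac{\lambda_1}{t}\tr(L_t\trans Q_t L_t)$ is nonnegative and drops out of the lower bound, and that Assumption~\ref{as:g_t(L)} can be invoked to extract the coercive quantity $\beta_1\fronorm{L_t}^2$. The inner-product step is precisely what avoids the circular dependence on $\fronorm{L_t}$ that a direct inversion would introduce (the subgradient term $Q_t L_t$ still carries $L_t$ on the right), and it is the reason the argument is cleaner than the inversion-based route hinted at in the proof sketch.
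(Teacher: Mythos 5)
Your proofs of Claims 1 and 2 coincide with the paper's: the same comparison against the trivial feasible point $(\mathbf{0},\mathbf{0})$ bounds $\be_t$, and the same averaging argument bounds $\frac{1}{t}A_t$ and $\frac{1}{t}B_t$. For Claim 3, however, you take a genuinely different and in fact cleaner route. The paper starts from the same stationarity condition $L_t\widetilde{A}_t - \widetilde{B}_t + \frac{\lambda_1}{t}U_t = 0$ (with $\widetilde{A}_t = \frac{1}{t}A_t$, $\widetilde{B}_t = \frac{1}{t}B_t$, $U_t = Q_tL_t$), but then \emph{inverts} $\widetilde{A}_t$, writes $L_t = (\widetilde{B}_t - \frac{\lambda_1}{t}U_t)\widetilde{A}_t^{-1}$, and bounds $\fronorm{U_t}\leq\fronorm{L_t}$ to obtain $\bigl(1-\frac{\lambda_1}{t}a_2\bigr)\fronorm{L_t}\leq a_2 b$ where $a_2$ bounds $\fronorm{\widetilde{A}_t^{-1}}$. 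Because the prefactor $1-\frac{\lambda_1}{t}a_2$ is only bounded away from zero for $t\geq t_0 \defeq 2\lambda_1 a_2$, the paper must run a second, separate argument for $0<t<t_0$ (comparing $\tilde g_t(L_t)\leq\tilde g_t(0)$) and then take the maximum of the two bounds. Your inner-product argument sidesteps all of this: pairing the stationarity identity with $L_t$ gives $\tr(L_t\trans L_t\widetilde{A}_t) + \frac{\lambda_1}{t}\tr(L_t\trans Q_tL_t) = \tr(L_t\trans\widetilde{B}_t)$, the cross term drops out by $Q_t\succeq 0$, Assumption~\ref{as:g_t(L)} gives the coercive lower bound $\beta_1\fronorm{L_t}^2 \leq \tr(L_t\trans L_t\widetilde{A}_t)$ (valid since $\tr(M(\widetilde{A}_t-\beta_1 I))\geq 0$ for $M = L_t\trans L_t\succeq 0$), and Cauchy--Schwarz on the right yields $\fronorm{L_t}\leq\frac{1}{\beta_1}\fronorm{\widetilde{B}_t}$ uniformly for \emph{all} $t>0$ in one stroke. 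What your approach buys: no matrix inversion, no norm-equivalence constants $a_0,a_2$, no case split at $t_0$, and a single explicit constant $L_{\max} = \frac{1}{\beta_1}\sup_t\fronorm{\widetilde{B}_t}$; what the paper's buys in exchange is essentially nothing beyond an explicit closed-form expression for $L_t$, which is not needed for the boundedness claim. Both routes rest equally on the subgradient characterization~\eqref{eq:subgradient} (existence of a diagonal, PSD, unit-trace $Q_t$ with $U_t = Q_tL_t$) and on Assumption~\ref{as:g_t(L)}, so your argument is a valid drop-in replacement.
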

\begin{proof}
Note that for each $t>0$, $\twonorm{\br_t} \leq 1$. Thus $\br_t$ is uniformly bounded. Let us consider the optimization problem~\eqref{eq:solve_re}. As the trivial solution $\br_t = \mathbf{0}$ and $\be_t = \mathbf{0}$ are feasible, we have
\begin{equation*}
\tilde{\ell}(\bz_t, L_{t-1}, \mathbf{0}, \mathbf{0}) = \frac{1}{2} \twonorm{\bz_t}^2.
\end{equation*}
Therefore, the optimal solution should satisfy:
\begin{equation*}
\frac{1}{2} \twonorm{\bz_t - L_{t-1}\br_t - \be_t}^2 + \lambda_2 \onenorm{\be_t} \leq \frac{1}{2} \twonorm{\bz_t}^2,
\end{equation*}
which implies
\begin{equation*}
\onenorm{\be_t} \leq \frac{1}{2 \lambda_2} \twonorm{\bz_t}^2.
\end{equation*}
Since $\bz_t$ is uniformly bounded (Assumption~\ref{as:z}), $\be_t$ is uniformly bounded.

To examine the uniform bound for $\frac{1}{t}A_t$ and $\frac{1}{t}B_t$, note that
\begin{equation*}
\begin{split}
&\frac{1}{t}A_t = \frac{1}{t} \sum_{i=1}^t \br_i \br_i\trans,\\
&\frac{1}{t}B_t = \frac{1}{t} \sum_{i=1}^t \(\bz_i - \be_i\) \br_i\trans.
\end{split}
\end{equation*}
Since for each $i$, $\br_i$, $\be_i$ and $\bz_i$ are uniformly bounded, $\frac{1}{t} A_t $ and $\frac{1}{t}B_t$ are uniformly bounded. 

Based on Claim 1 and Claim 2, we prove that $L_t$ can be uniformly bounded.
First let us denote $\frac{1}{t}A_t$ and $\frac{1}{t}B_t$ by $\widetilde{A}_t$ and $\widetilde{B}_t$ respectively.

\textbf{Step 1:}
According to Claim 2, there exist constants $a_1$ and $b$ that are uniform over $t$, such that
\begin{equation*}
\label{eq:prop:bound:L:A<a}
\begin{split}
\fronorm{\widetilde{A}_t} &\leq a_1,\\
\fronorm{\widetilde{B}_t} &\leq b.
\end{split}
\end{equation*}

On the other hand, from Assumption~\ref{as:g_t(L)}, the eigenvalues of $\widetilde{A}_t$ is lower bounded by a positive constant $\beta_1$ that is uniform over $t$, implying the trace norm (sum of the singular values) of $\widetilde{A}_t$ is uniformly lower bounded by a positive constant. As all norms are equivalent, we can show that
\begin{equation*}
\fronorm{\widetilde{A}_t} \geq a_0 > 0,
\end{equation*}
where $a_0$ is a positive constant which is uniform over $t$.

Recall that $L_t$ is the optimal basis for Eq.~\eqref{eq:alg:all:solve L}. Thus, the subgradient of the objective function taken on $L_t$ should contain zero, that is,
\begin{equation*}
\begin{split}
L_t \widetilde{A}_t - \widetilde{B}_t + \frac{\lambda_1}{t} U_t = 0,\\
L_t \widetilde{A}_t = \widetilde{B}_t - \frac{\lambda_1}{t} U_t,
\end{split}
\end{equation*}
where $U_t$ is the subgradient of $\frac{1}{2}\| L_t \|_{\tinf}^2$ produced by Eq.\eqref{eq:subgradient}. Note that, as all of the eigenvalues of $\widetilde{A}_t$ are lower bounded by a positive constant, $\widetilde{A}_t$ is invertible. Thus,
\begin{equation*}
L_t = \(\widetilde{B}_t - \frac{\lambda_1}{t} U_t\) \widetilde{A}_t^{-1},
\end{equation*}
where $\widetilde{A}_t^{-1}$ is the inverse of $\widetilde{A}_t$.
Now we derive the bound for $L_t$:
\begin{equation*}
\begin{split}
\fronorm{L_t} =& \fronorm{\(\widetilde{B}_t - \frac{\lambda_1}{t} U_t\) \widetilde{A}_t^{-1}}\\
\leq& \fronorm{\widetilde{B}_t - \frac{\lambda_1}{t} U_t \Vert_F \cdot \Vert \widetilde{A}_t^{-1}}\\
\leq& \( \fronorm{\widetilde{B}_t}  + \frac{\lambda_1}{t} \fronorm{U_t} \) \fronorm{\widetilde{A}_t^{-1}}\\
=& \fronorm{\widetilde{A}_t^{-1}} \fronorm{\widetilde{B}_t} + \frac{\lambda_1}{t} \fronorm{\widetilde{A}_t^{-1}} \fronorm{U_t} \\
\leq& \fronorm{\widetilde{A}_t^{-1}} \fronorm{\widetilde{B}_t} + \frac{\lambda_1}{t} \fronorm{\widetilde{A}_t^{-1}} \fronorm{L_t}.
\end{split}
\end{equation*}
It follows that
\begin{equation*}
\(1- \frac{\lambda_1}{t}\fronorm{\widetilde{A}_t^{-1}} \) \fronorm{L_t} \leq  \fronorm{\widetilde{A}_t^{-1}} \fronorm{\widetilde{B}_t}.
\end{equation*}
As all of the eigenvalues of $\widetilde{A}_t$ are uniformly lower bounded, those of $\widetilde{A}_t^{-1}$ are uniformly upper bounded. Thus the trace norm of $\widetilde{A}_t^{-1}$ are uniformly upper bounded. As all norms are equivalent, $\Vert \widetilde{A}_t^{-1} \Vert_F$ is also uniformly upper bounded by a constant, say $a_2$. Thus,
\begin{equation*}
\(1 - \frac{\lambda_1}{t}a_2 \) \fronorm{L_t} \leq \(1 - \frac{\lambda_1}{t}\fronorm{\widetilde{A}_t^{-1}}  \) \fronorm{L_t} \leq \fronorm{\widetilde{A}_t^{-1}} \fronorm{\widetilde{B}_t} \leq a_2 b
\end{equation*}
Particularly, let 
\begin{equation*}
t_0 = \min_t\left\{t \geq 2\lambda_1 a_2, t\ \mathrm{is\ an\ integer}\right\}.
\end{equation*}
Then, for all $t \geq t_0$,
\begin{equation}
\lV L_t \rV_F \leq 2 a_2b.
\end{equation}

\textbf{Step 2:}
Now let us consider a uniform bound for $L_t$, with $0<t < t_0$.
Recall that $L_t$ is the minimizer for $g_t(L)$, that is
\begin{equation*}
\begin{split}
L_t =& \argmin_L g_t(L) \\
=& \argmin_L \frac{1}{t} \sum_{i=1}^t \( \tildeli \) + \frac{\lambda_1}{2t} \twoinfnorm{L}^2 \\
=& \argmin_L \sum_{i=1}^t \frac{1}{2} \twonorm{\bz_i - L \br_i -\be_i}^2 + \frac{\lambda_1}{2} \twoinfnorm{L}^2 \\
\defeq& \argmin_L \tilde{g}_t(L).
\end{split}
\end{equation*}
Consider a trivial but feasible solution with $L = 0$,
\begin{equation*}
\tilde{g}_t(0) = \sum_{i=1}^t \frac{1}{2} \twonorm{\bz_i -\be_i}^2.
\end{equation*}
Thus,
\begin{equation*}
\begin{split}
&\tilde{g}_t(L_t) \leq \tilde{g}_t(0),\\
\Rightarrow&\sum_{i=1}^t \frac{1}{2} \twonorm{\bz_i - L_t \br_i -\be_i}^2 + \frac{\lambda_1}{2} \twoinfnorm{L_t}^2 \leq \sum_{i=1}^t \frac{1}{2} \twonorm{\bz_i -\be_i}^2 \\
\Rightarrow& \frac{\lambda_1}{2} \twoinfnorm{L_t}^2 \leq \sum_{i=1}^t \frac{1}{2} \twonorm{\bz_i -\be_i}^2 \\
\Rightarrow& \twoinfnorm{L_t}^2 \leq \frac{1}{\lambda_1}\sum_{i=1}^t \twonorm{\bz_i -\be_i}^2 \\
\Rightarrow& \fronorm{L_t}^2 \leq p \twoinfnorm{L_t}^2 \leq \frac{p}{\lambda_1}\sum_{i=1}^t \twonorm{\bz_i -\be_i}^2 \\
\Rightarrow& \fronorm{L_t} \leq  \sqrt{\frac{p}{\lambda_1} \sum_{i=1}^t \twonorm{\bz_i -\be_i}^2}.
\end{split}
\end{equation*}
For all $0 < t < t_0$,
\begin{equation}
\fronorm{L_t} \leq  \sqrt{\frac{p}{\lambda_1} \sum_{i=1}^t \twonorm{\bz_i -\be_i}^2} \leq  \sqrt{\frac{p}{\lambda_1} \sum_{i=1}^{t_0} \twonorm{\bz_i -\be_i}^2}.
\end{equation}
Note that each term, particularly $t_0$, can be uniformly upper bounded, thus $\sqrt{\frac{p}{\lambda_1} \sum_{i=1}^{t_0} \twonorm{\bz_i -\be_i}^2}$ can also be uniformly upper bounded. Namely, for all $0 <t <t_0$, $L_t$ is also uniformly upper bounded.

\textbf{Step 3:}
Now let us define
\begin{equation*}
L_{\max} = \max\left\{ 2a_2b, \sqrt{\frac{p}{\lambda_1} \sum_{i=1}^{t_0} \twonorm{\bz_i -\be_i}^2}\right\}.
\end{equation*}

Then, for all $t > 0$,
\begin{equation*}
\lV L_t \rV_F \leq L_{\max}.
\end{equation*}

\end{proof}

\begin{remark}
We remark some critical points in the third claim of Proposition~\ref{prop:bound:reABL}. All the constants, $a_0$, $a_1$, $a_2$ and $b$ are independent from $t$, making them uniformly bounded. Also, $t_0$ is a constant that is uniform over $t$. Thus, $L_t$ can be uniformly bounded.
\end{remark}

\begin{corollary}
Let $\br_t$, $\be_t$ and $L_t$ be the optimal solutions produced by Algorithm~\ref{alg:all}. We show some uniform boundedness property here.

\begin{enumerate}
\item  $\tilde{\ell}\(\bz_t, L_t, \br_t, \be_t\)$ defined in Eq.~\eqref{eq:tildel} and $\ell\(\bz_t, L_t\)$ defined in Eq.~\eqref{eq:l(z,L)} are both uniformly bounded.

\item The surrogate function, \emph{i.e.}, $g_t(L_t)$ defined in Eq.~\eqref{eq:g_t(L)} is uniformly bounded.

\item Moreover, $g_t(L)$ is uniformly Lipschitz over the compact set $\mathcal{L}$.
\end{enumerate}
\end{corollary}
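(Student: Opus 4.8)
The plan is to obtain all three claims as direct consequences of the uniform boundedness already established in Proposition~\ref{prop:bound:reABL}, together with the compact-support Assumption~\ref{as:z}; no new machinery is needed beyond elementary norm inequalities and the explicit subgradient formula~\eqref{eq:subgradient}. Throughout I would abbreviate $\widetilde{A}_t = \frac{1}{t}A_t$ and $\widetilde{B}_t = \frac{1}{t}B_t$, and take $\mathcal{L}$ to be the closed Frobenius ball $\{L : \fronorm{L} \le L_{\max}\}$, which is compact \emph{and} convex; convexity will matter only for the Lipschitz claim.

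For the two boundedness claims I would rely on minimality arguments and the triangle inequality. For the first claim, since $\bz_t$ (Assumption~\ref{as:z}), $L_t$, $\br_t$ and $\be_t$ are all uniformly bounded, the triangle inequality bounds $\twonorm{\bz_t - L_t\br_t - \be_t}$ uniformly, while $\th(\be_t)$ is bounded as a norm of a bounded vector; hence $\tl(\bz_t, L_t, \br_t, \be_t)$ is uniformly bounded. For $\ell(\bz_t, L_t)$ I would instead exploit that it is a minimum: feeding the feasible point $(\br,\be)=(\mathbf{0},\mathbf{0})$ into~\eqref{eq:l(z,L)} gives $\ell(\bz_t, L_t) \le \frac{1}{2}\twonorm{\bz_t}^2$, and $\ell \ge 0$ closes the bound. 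The second claim follows the same template one level up: since $L_t$ minimizes $g_t$, we have $g_t(L_t) \le g_t(\mathbf{0}) = \frac{1}{t}\sum_{i=1}^t \left(\frac{1}{2}\twonorm{\bz_i - \be_i}^2 + \lambda_2 \th(\be_i)\right)$, every summand of which is uniformly bounded, and $g_t \ge 0$.

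For the third claim I would bound the subgradient of $g_t$ uniformly over $\mathcal{L}$. Dropping the $L$-independent constants, $g_t(L) = \frac{1}{2}\tr(L\trans L \widetilde{A}_t) - \tr(L\trans \widetilde{B}_t) + \frac{\lambda_1}{2t}\twoinfnorm{L}^2 + \mathrm{const}$, whose subgradient at $L$ is $L\widetilde{A}_t - \widetilde{B}_t + \frac{\lambda_1}{t}U$ with $U = QL$ given by~\eqref{eq:subgradient}. On $\mathcal{L}$ we have $\fronorm{L} \le L_{\max}$; since $Q$ is PSD with $\tr(Q)=1$ its operator norm is at most one, so $\fronorm{U} \le L_{\max}$. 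Invoking the uniform bounds $\fronorm{\widetilde{A}_t} \le a_1$ and $\fronorm{\widetilde{B}_t} \le b$ from Proposition~\ref{prop:bound:reABL}, the subgradient norm is at most $a_1 L_{\max} + b + \lambda_1 L_{\max}$, a constant independent of both $t$ and $L$. Because $\mathcal{L}$ is convex, a uniform subgradient bound yields uniform Lipschitz continuity.

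The main obstacle lies entirely in the third claim, and specifically in securing \emph{uniformity in $t$} of the Lipschitz constant rather than Lipschitz continuity itself. This hinges on two facts I would be careful to invoke: that $\widetilde{A}_t$ and $\widetilde{B}_t$ are bounded uniformly over $t$ (already supplied by Proposition~\ref{prop:bound:reABL}), and that the factor $\frac{\lambda_1}{t}$ multiplying the $\ell_{2,\infty}$ subgradient is at most $\lambda_1$ for all $t \ge 1$, so the non-smooth term cannot inflate the constant as $t$ varies. The one genuinely delicate point is the bound $\fronorm{U} \le L_{\max}$, which relies on the trace-one normalization of $Q$ in~\eqref{eq:subgradient}; the remainder is routine.
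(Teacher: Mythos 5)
Your proposal is correct and takes essentially the same route as the paper: the substantive claim (uniform Lipschitz of $g_t$ over $\mathcal{L}$) is proved identically, by bounding the subgradient $L\widetilde{A}_t - \widetilde{B}_t + \frac{\lambda_1}{t}U$ with the uniform bounds on $\fronorm{\widetilde{A}_t}$, $\fronorm{\widetilde{B}_t}$ from Proposition~\ref{prop:bound:reABL} and the estimate $\fronorm{U} \leq \fronorm{L} \leq L_{\max}$. The only differences are cosmetic and harmless: for claims 1--2 you bound $\ell(\bz_t,L_t)$ and $g_t(L_t)$ by comparing against the zero solutions (mirroring the paper's own trick in the proof of Proposition~\ref{prop:bound:reABL}) instead of directly substituting the uniformly bounded variables as the paper does, and your explicit choice of $\mathcal{L}$ as a convex Frobenius ball makes rigorous the subgradient-bound-implies-Lipschitz step that the paper leaves implicit.
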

\begin{proof}
The uniform bound of $\br_t$, $\be_t$ and $\bz_t$, combined with the uniform bound of $L_t$, implies the uniform boundedness for $\tilde{\ell}\(\bz_t, L_t, \br_t, \be_t\)$ and $\ell\(\bz_t, L_t\)$. Thus, $g_t(L_t)$ and $f_t(L_t)$ are also uniformly bounded. 

To show that $g_t(L)$ is uniformly Lipschitz, we compute its subgradient at any $L \in \mathcal{L}$:
\begin{equation*}
\begin{split}
\fronorm{\nabla^{}_L g_t(L)} =& \fronorm{\frac{1}{t}(LA_t - B_t) + \frac{\lambda_1}{t}U}\\
\leq& \fronorm{\frac{1}{t}(LA_t - B_t)} + \frac{\lambda_1}{t} \fronorm{L} \\
\leq& \fronorm{\frac{1}{t}(LA_t - B_t)} + \lambda_1 \fronorm{L} \\
\end{split}
\end{equation*}
where $U \in \partial \frac{1}{2} \twoinfnorm{L}$. Since $L$, $\frac{1}{t}A_t$ and $\frac{1}{t}B_t$ are all uniformly bounded, the subgradient of $g_t(L)$ is uniformly bounded. This implies that $g_t(L)$ is uniformly Lipschitz.

\end{proof}

\subsection{Proof for Stage \textbf{II}}

\begin{lemma}[A corollary of Donsker theorem~\cite{van2000asymptotic}]
\label{lem:donsker}
Let $F = \{f_{\theta}: \mathcal{X} \rightarrow \mathbb{R}, \theta \in \Theta\}$ be a set of measurable functions indexed by a bounded subset $\Theta$ of $\mathbb{R}^d$. Suppose that there exists a constant $K$ such that
\begin{equation*}
\lv f_{\theta_1}(x) - f_{\theta_2}(x) \rv \leq K \twonorm{\theta_1 - \theta_2},
\end{equation*}

for every $\theta_1$ and $\theta_2$ in $\Theta$ and $x$ in $\mathcal{X}$. Then, $F$ is P-Donsker. For any $f$ in $F$, let us define $\mathbb{P}_nf$, $\mathbb{P}f$ and $\mathbb{G}_nf$ as
\begin{equation*}
\mathbb{P}_nf = \frac{1}{n}\sum_{i=1}^n f(X_i),\ \mathbb{P}f = \EXP[f(X)],\ \mathbb{G}_nf = \sqrt{n}(\mathbb{P}_nf-\mathbb{P}f).
\end{equation*}

Let us also suppose that for all $f$, $\mathbb{P}f^2 < \delta^2$ and $\lV f \rV_{\infty} < M$ and that the random elements $X_1, X_2, \cdots$ are Borel-measurable. Then, we have
\begin{equation*}
\EXP \fronorm{\mathbb{G}} = O(1),
\end{equation*}
where $\fronorm{\mathbb{G}} = \sup_{f\in F} \lv \mathbb{G}_nf \rv$.
\end{lemma}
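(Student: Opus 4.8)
The plan is to reduce both conclusions to a single structural fact: the Lipschitz parametrization of $F$ by a bounded subset of $\mathbb{R}^d$ forces the covering numbers of $F$ to grow only polynomially, which is exactly the input required by the classical empirical-process machinery of~\cite{van2000asymptotic}. First I would bound the covering numbers. Since $\Theta$ is a bounded subset of $\mathbb{R}^d$, for every $\epsilon > 0$ it admits an $\epsilon$-net of cardinality at most $C(\mathrm{diam}(\Theta)/\epsilon)^d$, where $C$ depends only on $d$. The pointwise hypothesis $\lv f_{\theta_1}(x) - f_{\theta_2}(x) \rv \leq K \twonorm{\theta_1 - \theta_2}$ holds for \emph{every} $x$, so whenever $\{\theta_j\}$ is an $\epsilon$-net of $\Theta$ the functions $\{f_{\theta_j}\}$ form a $K\epsilon$-net of $F$ in the supremum norm, hence also in the $L_2(P)$ norm for any underlying law $P$. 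Consequently $N(\epsilon, F, \lV \cdot \rV_{L_2(P)}) = O(\epsilon^{-d})$ uniformly in $P$.

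With this polynomial covering bound in hand, the uniform entropy integral is finite: $\int_0^1 \sqrt{\log N(\epsilon, F, \lV \cdot \rV_{L_2(P)})}\, d\epsilon \lesssim \int_0^1 \sqrt{d \log(1/\epsilon)}\, d\epsilon < \infty$. The condition $\lV f \rV_\infty < M$ supplies a constant envelope $M$, which is trivially square-integrable and Borel-measurable under the measurability hypothesis on $X_1, X_2, \dots$. Finiteness of the uniform entropy integral together with a square-integrable envelope is precisely the hypothesis of the Donsker theorem for classes of polynomial uniform entropy, so I would conclude that $F$ is $P$-Donsker, which is the first assertion of the lemma.

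For the quantitative claim $\EXP \fronorm{\mathbb{G}} = O(1)$, I would invoke the maximal inequality that bounds the expected supremum of the empirical process by Dudley's entropy integral, namely $\EXP \sup_{f \in F} \lv \mathbb{G}_n f \rv \lesssim \int_0^{2\delta} \sqrt{1 + \log N(\epsilon, F, \lV \cdot \rV_{L_2(P)})}\, d\epsilon$, with the remainder governed by the envelope $M$ and vanishing in $n$. Here the hypotheses $\mathbb{P}f^2 < \delta^2$ and $\lV f \rV_\infty < M$ fix the $L_2$ radius of the class and the envelope, so the right-hand side is a finite constant depending only on $K$, $d$, $\delta$, and $M$, and in particular independent of $n$. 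This yields $\EXP \fronorm{\mathbb{G}} = O(1)$.

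The main obstacle is the first step: converting the Lipschitz parametrization into a genuine, measure-uniform bound on the $L_2(P)$ covering numbers, and verifying the measurability and envelope conditions needed to legitimately apply the abstract Donsker theorem and the entropy-integral maximal inequality. Once the polynomial covering estimate is established, both conclusions follow by quoting the corresponding results of~\cite{van2000asymptotic}; the $O(1)$ bound in particular is a direct application of Dudley's maximal inequality, with all constants absorbing $K$, $d$, $\delta$, and $M$.
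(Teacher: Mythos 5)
The paper never proves this lemma: it is imported as a black-box tool (following the same usage in~\cite{mairal2010online}), citing van der Vaart's \emph{Asymptotic Statistics}~\cite{van2000asymptotic}, where it follows from Example 19.7 (a class Lipschitz in a bounded finite-dimensional parameter has polynomially growing \emph{bracketing} numbers), the bracketing Donsker theorem, and the bracketing maximal inequality (Lemma 19.36 there). Your proposal is a correct self-contained sketch of the same fact, but it travels through the uniform-entropy side of the theory instead: you turn the sup-norm Lipschitz parametrization into measure-uniform covering bounds $N(\epsilon, F, L_2(Q)) = O(\epsilon^{-d})$, then invoke the uniform-entropy Donsker theorem and a chaining bound for $\EXP\sup_f\lv\mathbb{G}_nf\rv$. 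Both routes are legitimate, and in fact your sup-norm net gives the bracketing bounds for free (a sup-norm ball of radius $K\epsilon$ around $f_{\theta_j}$ sits inside the bracket $[f_{\theta_j}-K\epsilon,\, f_{\theta_j}+K\epsilon]$ of $L_2(P)$-size $2K\epsilon$), which is exactly the reference's Example 19.7. Two caveats on your side of the ledger. First, the uniform-entropy Donsker theorem and the symmetrization behind the maximal inequality require the class to be suitably measurable (e.g., pointwise separable); here this is automatic --- take a countable dense subset of $\Theta$ and use Lipschitz continuity in $\theta$ --- but you should state it rather than only gesture at it, since it is precisely the hypothesis that the bracketing route avoids. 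Second, $\mathbb{G}_n$ is not a sub-Gaussian process, so Dudley's entropy integral does not apply to it directly; you need symmetrization plus conditional sub-Gaussianity of the Rademacher-randomized process (valid here because the envelope is the constant $M$), or you can simply quote the bracketing maximal inequality and skip that detour. With either repair, the $O(1)$ bound follows with constants depending only on $K$, $d$, $\mathrm{diam}(\Theta)$, $\delta$ and $M$, independent of $n$, as you claim.
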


Now let us verify that the set of functions $\{ \ell(\bz, L), L \in \mathcal{L} \}$ indexed by $L$ suffices the hypotheses in the corollary of Donsker Theorem. In particular, we should verify that:
\begin{itemize}
\item The index set $\mathcal{L}$ is uniformly bounded (see Proposition~\ref{prop:bound:reABL}).
\item Each $\ell(\bz, L)$ can be uniformly bounded (see Corollary~\ref{coro:bound l lip gt}).
\item Any of the functions $\ell(\bz, L)$ in the family is uniformly Lipschitz (see Proposition~\ref{prop:l:Lipschtiz}).
\end{itemize}

Next, we show that the family of functions $\ell(\bz, L)$ is uniformly Lipschitz w.r.t. $L$. We introduce the following lemma as it will be useful for our discussion.

\begin{lemma}[Corollary of Theorem 4.1 from~\cite{bonnans1998optimization}] 
\label{lem:1}
Let $f: \Rp \times \Rq \rightarrow \mathbb{R}$. Suppose that for all $\bx \in \Rp$ the function $f(\bx, \cdot)$ is differentiable, and that $f$ and $\nabla_{\bu}f(\bx, \bu)$ are continuous on $\Rp \times \Rq$. Let $\bv(\bu)$ be the optimal value function $\bv(\bu) = \min_{\bx \in \mathcal{C}}f(\bx, \bu)$, where $\mathcal{C}$ is a compact subset of $\Rp$. Then $\bv(\bu)$ is directionally differentiable. Furthermore, if for $\bu_0 \in \Rq$, $f(\cdot, \bu_0)$ has unique minimizer $\bx_0$ then $\bv(\bu)$ is differentiable in $\bu_0$ and $\nabla_{\bu} \bv(\bu_0) = \nabla_{\bu} f(\bx_0, \bu_0)$.
\end{lemma}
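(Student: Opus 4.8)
The plan is to recognize this as a version of Danskin's (envelope) theorem for parametric minimization, and to prove the directional-derivative formula directly from difference quotients before specializing to the unique-minimizer case. Note first that compactness of $\mathcal{C}$ together with continuity of $f$ guarantees that the infimum defining $\bv(\bu)$ is attained, so $\bv$ is well-defined everywhere; a routine estimate using continuity of $f$ on the compact set also shows that $\bv$ is locally Lipschitz, which I will need at the end.

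Fix a direction $\boldsymbol{d} \in \Rq$ and study $[\bv(\bu_0 + t\boldsymbol{d}) - \bv(\bu_0)]/t$ as $t \downarrow 0$. For the \emph{upper} bound, since $\bv(\bu) \le f(\bx, \bu)$ for every fixed $\bx$ and $\bv(\bu_0) = f(\bx_0, \bu_0)$, taking $\bx = \bx_0$ yields
\[
\frac{\bv(\bu_0 + t\boldsymbol{d}) - \bv(\bu_0)}{t} \le \frac{f(\bx_0, \bu_0 + t\boldsymbol{d}) - f(\bx_0, \bu_0)}{t} \longrightarrow \nabla_{\bu} f(\bx_0, \bu_0)\trans \boldsymbol{d},
\]
using differentiability of $f(\bx_0, \cdot)$. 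For the \emph{lower} bound, let $\bx_t$ minimize $f(\cdot, \bu_0 + t\boldsymbol{d})$; then $\bv(\bu_0) \le f(\bx_t, \bu_0)$, so applying the mean value theorem to $s \mapsto f(\bx_t, \bu_0 + s\boldsymbol{d})$ gives some $\theta_t \in (0,1)$ with
\[
\frac{\bv(\bu_0 + t\boldsymbol{d}) - \bv(\bu_0)}{t} \ge \frac{f(\bx_t, \bu_0 + t\boldsymbol{d}) - f(\bx_t, \bu_0)}{t} = \nabla_{\bu} f(\bx_t, \bu_0 + \theta_t t \boldsymbol{d})\trans \boldsymbol{d}.
\]

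The crux is to show $\bx_t \to \bx_0$ as $t \downarrow 0$. By compactness of $\mathcal{C}$, every sequence $t_k \downarrow 0$ admits a subsequence along which $\bx_{t_k} \to \bar{\bx}$; continuity of $f$ forces $\bar{\bx}$ to minimize $f(\cdot, \bu_0)$, hence $\bar{\bx} = \bx_0$ by the uniqueness hypothesis. Since every cluster point equals $\bx_0$, the whole family converges, and continuity of $\nabla_{\bu} f$ then sends the lower bound to $\nabla_{\bu} f(\bx_0, \bu_0)\trans \boldsymbol{d}$ as well. The two bounds coincide, so the one-sided directional derivative of $\bv$ at $\bu_0$ exists and equals $\nabla_{\bu} f(\bx_0, \bu_0)\trans \boldsymbol{d}$ for every $\boldsymbol{d}$.

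This limit is linear in $\boldsymbol{d}$, which gives Gâteaux differentiability with candidate gradient $\nabla_{\bu} f(\bx_0, \bu_0)$; combined with the local Lipschitz property of $\bv$ (via a unit-sphere compactness argument) this upgrades to full differentiability, establishing $\nabla_{\bu} \bv(\bu_0) = \nabla_{\bu} f(\bx_0, \bu_0)$. Without uniqueness the same difference-quotient bookkeeping still yields directional differentiability, now with derivative $\min_{\bx \in S(\bu_0)} \nabla_{\bu} f(\bx, \bu_0)\trans \boldsymbol{d}$ over the solution set $S(\bu_0)$. I expect the continuity-of-the-argmin step, namely proving $\bx_t \to \bx_0$, to be the main obstacle, and it is precisely here that compactness of $\mathcal{C}$ and uniqueness of the minimizer are both indispensable.
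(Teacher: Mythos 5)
Your proof is correct, but it is worth noting that the paper never proves this lemma at all: it is imported verbatim as a corollary of Theorem 4.1 of Bonnans and Shapiro's perturbation-analysis survey, and the paper's subsequent results (e.g.\ the differentiability of $\ell(\bz,\cdot)$ in Proposition~\ref{prop:l:Lipschtiz}) simply invoke it as a black box. What you have supplied is a self-contained, elementary Danskin-type argument: sandwiching the difference quotient $[\bv(\bu_0+t\boldsymbol{d})-\bv(\bu_0)]/t$ between $\nabla_{\bu}f(\bx_0,\cdot)$-terms from above (by suboptimality of $\bx_0$ at the perturbed parameter) and $\nabla_{\bu}f(\bx_t,\cdot)$-terms from below (by suboptimality of $\bx_t$ at $\bu_0$ plus the mean value theorem), then closing the gap via the argmin-convergence step $\bx_t \to \bx_0$, which correctly uses compactness of $\mathcal{C}$ to extract cluster points and uniqueness of the minimizer to identify them. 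The upgrade from linear one-sided directional derivatives to full differentiability via local Lipschitzness of $\bv$ and compactness of the unit sphere is also the standard and correct route in finite dimensions, and your remark that without uniqueness one still gets directional differentiability with derivative $\min_{\bx\in S(\bu_0)}\nabla_{\bu}f(\bx,\bu_0)\trans\boldsymbol{d}$ matches the general form of the cited theorem. One small imprecision: the local Lipschitz property of $\bv$ does not follow from continuity of $f$ alone (that only gives uniform continuity); it follows from the estimate $\lv \bv(\bu)-\bv(\bu')\rv \le \sup_{\bx\in\mathcal{C}}\lv f(\bx,\bu)-f(\bx,\bu')\rv$ combined with a mean-value bound using the fact that $\nabla_{\bu}f$ is continuous, hence bounded, on the compact set $\mathcal{C}\times \bar{B}(\bu_0,r)$. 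Since joint continuity of $\nabla_{\bu}f$ is among the hypotheses, this is a wording slip rather than a gap, but the final differentiability upgrade genuinely depends on it, so the attribution should be fixed.
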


\begin{proposition}
Let $L \in \mathcal{L}$ and denote the minimizer of $\tilde{\ell}(\bz, L, \br, \be)$ defined in \eqref{eq:l(z,L)} as:
\begin{equation*}
\{\br^*, \be^*\} = \argmin_{\br, \be, \twonorm{\br} \leq 1} \tildel.
\end{equation*}
 Then, the function $\ell(\bz, L)$ defined in Problem~\eqref{eq:l(z,L)} is continuously differentiable and
\begin{equation*}
\nabla_L \ell(\bz, L) = (L\br^* + \be^* - \bz) \br^{*\top}.
\end{equation*}
Furthermore, $\ell(\bz, \cdot)$ is uniformly Lipschitz.
\end{proposition}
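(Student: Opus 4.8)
The plan is to view $\ell(\bz, L)$ as a parametric optimal-value function and apply Lemma~\ref{lem:1} (the Bonnans--Shapiro differentiability result), treating the pair $(\br, \be)$ as the inner optimization variable and $L$ as the parameter. Concretely, I would set $\bx = (\br, \be)$ and identify $\tilde{\ell}(\bz, L, \br, \be)$ with the function $f(\bx, \bu)$ of Lemma~\ref{lem:1}, where $\bu$ is the vectorization of $L$, so that $\ell(\bz, L)$ becomes the value function. The derivative formula then drops out as $\nabla_L \ell(\bz, L) = \nabla_L \tilde{\ell}$ evaluated at the minimizer, and the Lipschitz claim reduces to a uniform bound on this gradient.

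To invoke Lemma~\ref{lem:1} I must first check its hypotheses. The lemma requires a compact feasible set, but in~\eqref{eq:l(z,L)} only $\br$ is confined to the unit ball while $\be$ ranges freely. I would repair this using the uniform boundedness of $\be^*$ from Proposition~\ref{prop:bound:reABL}: since $\tilde{\ell}$ is coercive in $\be$, the minimizer lies in a fixed ball $\mathcal{B}$ uniformly over $\bz \in \mathcal{Z}$ and $L \in \mathcal{L}$, so restricting the minimization to the compact set $\{(\br, \be): \twonorm{\br} \leq 1,\ \be \in \mathcal{B}\}$ leaves both the value and the minimizer unchanged. Next I would verify the smoothness requirements: the map $L \mapsto \tilde{\ell}(\bz, L, \br, \be)$ is differentiable with
\[
\nabla_L \tilde{\ell}(\bz, L, \br, \be) = (L\br + \be - \bz)\br\trans,
\]
and both $\tilde{\ell}$ and this gradient are jointly continuous in $((\br, \be), L)$. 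Crucially, the term $\lambda_2 \th(\be)$ does not depend on $L$, so its possible non-smoothness in $\be$ is harmless; Lemma~\ref{lem:1} only demands differentiability in the parameter. Finally, Assumption~\ref{as:unique} guarantees a unique minimizer $(\br^*, \be^*)$, so the lemma yields differentiability together with $\nabla_L \ell(\bz, L) = (L\br^* + \be^* - \bz)\br^{*\top}$. To upgrade this to \emph{continuous} differentiability I would argue that the minimizer map $L \mapsto (\br^*, \be^*)$ is continuous (Berge's maximum theorem makes the argmin correspondence upper hemicontinuous, and uniqueness promotes it to a continuous function), whence the gradient expression is continuous in $L$.

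For the uniform Lipschitz property I would bound the gradient directly. Using $\twonorm{\br^*} \leq 1$,
\[
\fronorm{\nabla_L \ell(\bz, L)} = \fronorm{(L\br^* + \be^* - \bz)\br^{*\top}} \leq \twonorm{L\br^* + \be^* - \bz},
\]
and since $\fronorm{L} \leq L_{\max}$ on $\mathcal{L}$, $\be^*$ is uniformly bounded by Proposition~\ref{prop:bound:reABL}, and $\twonorm{\bz}$ is bounded by the compact-support Assumption~\ref{as:z}, the right-hand side is dominated by a constant uniform over both $L \in \mathcal{L}$ and $\bz \in \mathcal{Z}$. Enclosing $\mathcal{L}$ in a convex compact ball of radius $L_{\max}$ (on which the same gradient bound still holds, since it used only these uniform norm estimates) and applying the mean value inequality then gives the uniform Lipschitz constant. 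The main obstacle I anticipate is not the final norm estimate, which is routine, but the two structural preliminaries for Lemma~\ref{lem:1}: justifying the compactness reduction for the unconstrained variable $\be$, and, more delicately, establishing continuity of the minimizer map so that pointwise differentiability can be strengthened to the claimed continuous differentiability.
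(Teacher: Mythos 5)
Your proposal is correct and takes essentially the same route as the paper: both treat $\ell(\bz,\cdot)$ as a parametric optimal-value function, invoke Lemma~\ref{lem:1} with $(\br,\be)$ as the inner variable and $L$ as the parameter to obtain the gradient formula $\nabla_L \ell(\bz,L) = (L\br^*+\be^*-\bz)\br^{*\top}$, and then deduce the uniform Lipschitz property by bounding this gradient through the uniform bounds of Proposition~\ref{prop:bound:reABL} and Assumption~\ref{as:z}. If anything, you are more careful than the paper on three technical points it glosses over: the compactness of the feasible set demanded by Lemma~\ref{lem:1} (the paper applies the lemma with $\be$ ranging over all of $\mathbb{R}^p$, whereas you restrict $\be$ to a uniform ball containing the minimizer), the continuity of the minimizer map needed to upgrade plain differentiability to the claimed \emph{continuous} differentiability, and the convex enclosure of $\mathcal{L}$ needed to pass from a bounded gradient to a Lipschitz estimate via the mean value inequality.
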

\begin{proof}
By fixing the variable $\bz$, the function $\tilde{\ell}$ can be seen as a mapping: 
\begin{equation*}
\begin{split}
\mathbb{R}^{d+p} \times \mathcal{L} \rightarrow \mathbb{R}\\
\( [\br;\ \be], L \) \mapsto \tilde{\ell}(\bz, L, \br, \be)
\end{split}
\end{equation*}
It is easy to show that $\forall [\br;\ \be] \in \mathbb{R}^{d+p}$, $\tilde{\ell}(\bz, \cdot, \br, \be)$ is differentiable. Also $\tilde{\ell}(\bz, \cdot, \cdot, \cdot)$ is continuous on $\mathbb{R}^{d+p} \times \mathcal{L}$. $\nabla^{}_L \tilde{\ell}(\bz, L, \br, \be) = (L\br + \be - \bz)\br\trans$ is continuous on $\mathbb{R}^{d+p} \times \mathcal{L}$. $\forall L \in \mathcal{L}$, according to Assumption~\ref{as:unique}, $\tilde{\ell}(\bz, L, \cdot, \cdot)$ has a unique minimizer. Thus Lemma~\ref{lem:1} applies and we prove that $\ell(\bz, L)$ is differentiable in $L$ and
\begin{equation*}
\nabla_L \ell(\bz, L) = (L\br^* + \be^* - \bz) \br^{*\top}.
\end{equation*}
Since every term in $\nabla_L \ell(\bz, L)$ is uniformly bounded (Assumption~\ref{as:z} and Proposition~\ref{prop:bound:reABL}), we conclude that the gradient of $\ell(\bz, \cdot)$ is uniformly bounded, implying that $\ell(\bz, L)$ is uniformly Lipschitz w.r.t. $L$.

\end{proof}

\begin{corollary}
Let $f_t(L)$ be the empirical loss function defined in Eq.~\eqref{eq:f_n(L)}. Then $f_t(L)$ is uniformly bounded and Lipschitz.
\end{corollary}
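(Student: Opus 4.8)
The plan is to observe that $f_t(L)$ shares exactly the structure already analyzed for the surrogate $g_t(L)$ and the per-sample loss $\ell(\bz, L)$, so both claims reduce to averaging estimates already in hand. Recall from~\eqref{eq:f_n(L)} that
\begin{equation*}
f_t(L) = \frac{1}{t}\sum_{i=1}^t \ell(\bz_i, L) + \frac{\lambda_1}{2t}\twoinfnorm{L}^2.
\end{equation*}

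First I would establish the uniform bound. By Corollary~\ref{coro:bound l lip gt}, each term $\ell(\bz_i, L)$ is uniformly bounded over $\mathcal{L}$ by a constant independent of $i$; hence their average $\frac{1}{t}\sum_{i=1}^t \ell(\bz_i, L)$ inherits the same bound for every $t$. For the regularization term, since $L \in \mathcal{L}$ and $\mathcal{L}$ is compact (Proposition~\ref{prop:bound:reABL}), $\twoinfnorm{L}$ is bounded by a constant, and $\frac{\lambda_1}{2t} \leq \frac{\lambda_1}{2}$ for all $t \geq 1$; thus this term is also uniformly bounded. Adding the two contributions gives uniform boundedness of $f_t(L)$ over $\mathcal{L}$ and $t$.

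For the Lipschitz property, I would bound the subgradient of $f_t$ uniformly over $\mathcal{L}$ and $t$. Differentiating term by term,
\begin{equation*}
\nabla_L f_t(L) = \frac{1}{t}\sum_{i=1}^t \nabla_L \ell(\bz_i, L) + \frac{\lambda_1}{t} U,
\end{equation*}
where $U \in \partial\(\frac{1}{2}\twoinfnorm{L}^2\)$ has the form $U = QL$ from~\eqref{eq:subgradient} with $Q$ a positive semi-definite diagonal matrix of unit trace, so that $\fronorm{U} \leq \fronorm{L} \leq L_{\max}$. By Proposition~\ref{prop:l:Lipschtiz}, each $\nabla_L \ell(\bz_i, L) = (L\br_i^* + \be_i^* - \bz_i)\br_i^{*\top}$ is uniformly bounded in Frobenius norm, its factors being uniformly bounded by Assumption~\ref{as:z} and Proposition~\ref{prop:bound:reABL}, so the average is bounded by the same constant. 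Combining this with $\frac{\lambda_1}{t}\fronorm{U} \leq \lambda_1 L_{\max}$ for $t \geq 1$ and applying the triangle inequality shows $\fronorm{\nabla_L f_t(L)}$ is bounded uniformly in both $L \in \mathcal{L}$ and $t$, which yields the uniform Lipschitz property exactly as in the corresponding argument for $g_t(L)$.

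I do not expect a genuine obstacle here: this corollary is a direct consequence of the uniform bounds on $\ell(\bz, L)$ and $\nabla_L \ell(\bz, L)$ established in the preceding results, together with the elementary facts $\frac{1}{t} \leq 1$ and $\fronorm{QL} \leq \fronorm{L}$. The only point requiring mild care is to ensure \emph{uniformity in $t$}: every bound must be taken with a constant that does not depend on $t$, which is guaranteed because averaging preserves a per-term bound and the factor $\frac{1}{t}$ multiplying the regularization subgradient is controlled by its value at $t = 1$.
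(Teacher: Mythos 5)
Your proposal is correct and follows essentially the same route as the paper: uniform boundedness of $f_t(L)$ from the uniform bound on $\ell(\bz, L)$ in Corollary~\ref{coro:bound l lip gt}, and uniform Lipschitzness by bounding $\fronorm{\nabla_L f_t(L)}$ via the gradient formula $(L\br_i^* + \be_i^* - \bz_i)\br_i^{*\top}$ of Proposition~\ref{prop:l:Lipschtiz} together with the uniform bounds of Proposition~\ref{prop:bound:reABL} and the subgradient estimate $\fronorm{QL} \leq \fronorm{L}$. Your treatment is in fact slightly more careful than the paper's on two minor points: you explicitly bound the regularization term $\frac{\lambda_1}{2t}\twoinfnorm{L}^2$ in the boundedness claim, and you justify $\fronorm{U} \leq L_{\max}$ via the structure of~\eqref{eq:subgradient}, both of which the paper leaves implicit.
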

\begin{proof}
As $\ell(\bz, L)$ can be uniformly bounded (Corollary~\ref{coro:bound l lip gt}), we derive the uniform boundedness of $f_t(L)$. Let $ U \in \frac{1}{2}\twoinfnorm{L}$. By computing the subgradient of $f_t(L)$ at $L$, we have
\begin{equation*}
\begin{split}
\fronorm{\nabla^{}_L f_t(L)} =& \fronorm{\frac{1}{t}  \sum_{i=1}^{t}  \nabla^{}_L \ell(\bz_i, L) + \frac{\lambda_1}{t} U}\\
\leq& \frac{1}{t} \sum_{i=1}^{t} \fronorm{(L\br_i + \be_i - \bz_i) \br\trans_i} + \frac{\lambda_1}{t} \fronorm{L}\\
=& \frac{1}{t} \sum_{i=1}^{t} \fronorm{L\br_i\br\trans_i + (\be_i - \bz_i)\br\trans_i} + \frac{\lambda_1}{t} \fronorm{L}\\
\leq& \frac{1}{t} \sum_{i=1}^{t} \( \fronorm{L} \cdot \fronorm{\br_i\br\trans_i} + \fronorm{(\be_i - \bz_i)\br\trans_i} \) + \frac{\lambda_1}{t} \fronorm{L}.
\end{split}
\end{equation*}
Note that all the terms (i.e. $\bz_i$, L, $\br_i$, $\be_i$) in the right hand inequality are uniformly bounded. Thus, we say that the subgradient of $f_t(L)$ is uniformly bounded and $f_t(L)$ is uniformly Lipschitz.

\end{proof}

\begin{proposition}
Let $f_t(L)$ and $f(L)$ be the empirical and expected loss functions we defined in Eq.~\eqref{eq:f_n(L)} and Eq.~\eqref{eq:f(L)}. Then we have
\begin{equation*}
\EXP[\sqrt{t}\lV f_t - f \rV_{\infty}] = O(1).
\end{equation*}
\end{proposition}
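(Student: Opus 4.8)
The plan is to recognize $\sqrt{t}\,(f_t-f)$ as an empirical process indexed by the compact set $\mathcal{L}$ and to apply the Donsker-type bound of Lemma~\ref{lem:donsker} with sample size $n=t$. First I would write the difference out explicitly. Recalling the definitions~\eqref{eq:f_n(L)} and~\eqref{eq:f(L)}, for every $L\in\mathcal{L}$ we have
\begin{equation*}
f_t(L) - f(L) = \(\frac{1}{t}\sum_{i=1}^t \ell(\bz_i, L) - \EXP_{\bz}[\ell(\bz, L)]\) + \frac{\lambda_1}{2t}\twoinfnorm{L}^2.
\end{equation*}
In the notation of Lemma~\ref{lem:donsker}, applied to the function class $F=\{\ell(\bz,L):L\in\mathcal{L}\}$, the bracketed term equals $t^{-1/2}\,\mathbb{G}_t\,\ell(\cdot,L)$, so that
\begin{equation*}
\sqrt{t}\,\(f_t(L) - f(L)\) = \mathbb{G}_t\,\ell(\cdot, L) + \frac{\lambda_1}{2\sqrt{t}}\twoinfnorm{L}^2 .
\end{equation*}

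Next I would take the supremum over $L\in\mathcal{L}$ of the absolute value and then the expectation. Writing $\lV\mathbb{G}_t\rV_F = \sup_{L\in\mathcal{L}}\lv\mathbb{G}_t\,\ell(\cdot,L)\rv$, the triangle inequality yields
\begin{equation*}
\EXP\big[\sqrt{t}\,\lV f_t - f\rV_{\infty}\big] \leq \EXP\big[\lV\mathbb{G}_t\rV_F\big] + \frac{\lambda_1}{2\sqrt{t}}\sup_{L\in\mathcal{L}}\twoinfnorm{L}^2 .
\end{equation*}
The second term is immediately controlled by compactness of $\mathcal{L}$: by Proposition~\ref{prop:bound:reABL} there is a constant $L_{\max}$ with $\fronorm{L}\leq L_{\max}$ for all $L\in\mathcal{L}$, so $\sup_{L}\twoinfnorm{L}^2\leq L_{\max}^2$ and this term is $O(1/\sqrt{t})=O(1)$, indeed vanishing as $t\to\infty$. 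Thus everything reduces to showing $\EXP[\lV\mathbb{G}_t\rV_F]=O(1)$.

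That reduction is the main work, and it amounts to verifying the hypotheses of Lemma~\ref{lem:donsker} for the class $F$; each is already supplied by a prior result. The index set $\mathcal{L}$, regarded as a bounded subset of $\mathbb{R}^{pd}$, is bounded by Proposition~\ref{prop:bound:reABL}; the uniform Lipschitz condition $\lv\ell(\bz,L_1)-\ell(\bz,L_2)\rv\leq K\fronorm{L_1-L_2}$ holds by Proposition~\ref{prop:l:Lipschtiz} (equivalently, $F$ is P-Donsker by Corollary~\ref{coro:l donsker}); and the moment and sup-norm requirements $\mathbb{P}f^2<\delta^2$ and $\lV f\rV_{\infty}<M$ follow because $\ell(\bz,L)$ is non-negative and uniformly upper bounded over $\mathcal{Z}\times\mathcal{L}$ (Corollary~\ref{coro:bound l lip gt}), so its square, and hence its expectation, are uniformly bounded; finally, Borel-measurability of the samples is granted by Assumption~\ref{as:z}. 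With all hypotheses in place, Lemma~\ref{lem:donsker} gives $\EXP[\lV\mathbb{G}_t\rV_F]=O(1)$, and combining this with the $O(1/\sqrt{t})$ control of the regularization term completes the argument. The only subtlety I anticipate is ensuring the Lipschitz constant $K$ is uniform in $\bz$ rather than depending on the sample; but this is precisely what Proposition~\ref{prop:l:Lipschtiz} delivers through the uniform bound on $\nabla_L\ell(\bz,L)=(L\br^*+\be^*-\bz)\br^{*\top}$, so no fresh estimate is required.
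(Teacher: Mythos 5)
Your proposal is correct and follows essentially the same route as the paper: verify that the class $\{\ell(\bz,L):\ L\in\mathcal{L}\}$ satisfies the hypotheses of Lemma~\ref{lem:donsker} (bounded index set via Proposition~\ref{prop:bound:reABL}, uniform Lipschitz via Proposition~\ref{prop:l:Lipschtiz}, and the moment/sup-norm conditions via non-negativity together with the uniform bound of Corollary~\ref{coro:bound l lip gt}), then invoke the Donsker-type bound. The only difference is that you explicitly split off and control the deterministic term $\frac{\lambda_1}{2t}\twoinfnorm{L}^2$ using compactness of $\mathcal{L}$, a step the paper silently omits (its argument really bounds the unregularized average $\frac{1}{t}\sum_{i=1}^t \ell(\bz_i,L)$, which is what it later uses as $f'_t$), so your version is, if anything, slightly more complete.
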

\begin{proof}
Based on Proposition~\ref{prop:bound:reABL} and Proposition~\ref{prop:l:Lipschtiz}, we argue that the set of measurable functions $\{ \ell(\bz, L), L \in \mathcal{L} \}$ is P-Donsker (defined in Lemma~\ref{lem:donsker}). From Corollary~\ref{coro:bound l lip gt}, we know that $\ell(\bz, L)$ can be uniformly bounded by a constant, say $\kappa_c$. Also note that from the definition of $\ell(\bz, L)$ (see Eq.\eqref{eq:l(z,L)}), it is always non-negative. Thus, we have
\begin{equation*}
\ell^2(\bz, L) \leq \kappa_c^2,
\end{equation*}
implying the uniform boundedness of $\EXP[\ell^2(\bz, L)]$. Thus, Lemma~\ref{lem:donsker} applies and we have
\begin{equation*}
\EXP[\sup_{\ell}\vert \sqrt{t}(f_t - f) \vert] = O(1).
\end{equation*}

\end{proof}

Now we are ready to prove the convergence of $g_t(L_t)$, which requires to justify that the stochastic process $\{ g_t(L_t) \}_{t=1}^{\infty}$ is a quasi-martingale, defined as follows:

\begin{lemma}[Sufficient condition of convergence for a stochastic process~\cite{bottou1998online}]
\label{lem:bottou}
Let $\( \Omega, \mathcal{F}, P \)$ be a measurable probability space, $u_t$, for $t \geq 0$, be the realization of a stochastic process and $\mathcal{F}_t$ be the filtration by the past information at time $t$. Let
\begin{equation*}
\delta_t = 
\begin{cases}
1\quad if\ \EXP[u_{t+1}-u_t \mid \mathcal{F}_t] > 0,\\
0\quad otherwise.
\end{cases}
\end{equation*}
If for all $t$, $u_t \geq 0 $ and $\sum_{t=1}^{\infty} \EXP[\delta_t(u_{t+1}-u_t)] < \infty$, then $u_t$ is a quasi-martingale and converges almost surely. Moreover,
\begin{equation*}
\sum_{t=1}^{\infty} \lv \EXP[u_{t+1} - u_t \mid \mathcal{F}_t] \rv < +\infty\ a.s.
\end{equation*}
\end{lemma}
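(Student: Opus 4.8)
The plan is to recognize this as the classical quasi-martingale convergence theorem and to prove it through the Doob decomposition together with the martingale convergence theorem. First I would abbreviate the conditional increment as $a_t \defeq \EXP[u_{t+1} - u_t \mid \mathcal{F}_t]$, so that $\delta_t = 1$ precisely when $a_t > 0$. Because $\delta_t$ is $\mathcal{F}_t$-measurable, I can pull it inside the conditional expectation: $\EXP[\delta_t(u_{t+1}-u_t)] = \EXP[\delta_t\, a_t] = \EXP[a_t^+]$, where $a_t^+ \defeq \max\{a_t,0\}$. Hence the hypothesis $\sum_t \EXP[\delta_t(u_{t+1}-u_t)] < \infty$ is exactly $\sum_t \EXP[a_t^+] < \infty$.

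Next I would exploit the nonnegativity of $u_t$ to control the negative parts $a_t^- \defeq \max\{-a_t,0\}$ as well. Taking full expectations in the tower identity $\EXP[u_{t+1}] = \EXP[u_t] + \EXP[a_t]$ and telescoping gives $0 \leq \EXP[u_{T+1}] = \EXP[u_1] + \sum_{t=1}^{T}\big(\EXP[a_t^+] - \EXP[a_t^-]\big)$, whence $\sum_{t=1}^{T}\EXP[a_t^-] \leq \EXP[u_1] + \sum_{t=1}^{\infty}\EXP[a_t^+] < \infty$. Combining the two bounds yields $\sum_t \EXP\lv a_t \rv < \infty$, and by Tonelli's theorem this forces $\sum_t \lv a_t \rv < \infty$ almost surely, which is precisely the ``Moreover'' claim. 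The same bound also shows $\sup_t \EXP[u_t] < \infty$, so that $u_t$ is a quasi-martingale in the sense of the lemma.

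Finally, to obtain almost-sure convergence of $u_t$ itself, I would invoke the Doob decomposition $u_t = u_1 + M_t + A_t$, where $A_t = \sum_{k=1}^{t-1} a_k$ is the predictable compensator and $M_t = u_t - u_1 - A_t$ is a martingale with $M_1 = 0$. The absolute summability established above guarantees that $A_t$ converges almost surely to a finite limit. It then remains to show that $M_t$ converges, for which I would verify that $M_t$ is bounded in $L^1$: indeed $\EXP\lv M_t \rv \leq \EXP[u_t] + \EXP[u_1] + \EXP\lv A_t \rv$, and each term is uniformly bounded in $t$ because $\sup_t \EXP[u_t] < \infty$ and $\EXP\lv A_t \rv \leq \sum_k \EXP\lv a_k \rv < \infty$. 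Doob's martingale convergence theorem then yields an almost-sure finite limit for $M_t$, and adding the two limits shows that $u_t$ converges almost surely.

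I expect the main subtlety to lie in this last step, namely in justifying the $L^1$-boundedness of the martingale part $M_t$ directly from the hypotheses, rather than in the algebraic manipulations, which become routine once the $\mathcal{F}_t$-measurability of $\delta_t$ is used to identify the hypothesis with $\sum_t \EXP[a_t^+] < \infty$.
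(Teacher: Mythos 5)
Your proof is correct. Note, however, that the paper does not prove this lemma at all: it is imported verbatim as a black box from the stochastic approximation literature (Bottou, 1998), where it is a specialization of the classical quasi-martingale convergence theorem of Fisk/M\'etivier to nonnegative processes. So there is no in-paper argument to compare against; what you have produced is a self-contained reconstruction of the cited result, and it follows the standard classical route: (i) use the $\mathcal{F}_t$-measurability of $\delta_t$ to identify the hypothesis with $\sum_t \EXP[a_t^{+}] < \infty$ where $a_t \defeq \EXP[u_{t+1}-u_t \mid \mathcal{F}_t]$; (ii) exploit $u_t \geq 0$ and the telescoping identity $\EXP[u_{T+1}] = \EXP[u_1] + \sum_{t=1}^{T}\EXP[a_t]$ to bound $\sum_t \EXP[a_t^{-}]$, hence $\sum_t \EXP\lv a_t \rv < \infty$, which gives both the ``Moreover'' claim (via Tonelli) and $\sup_t \EXP[u_t] < \infty$; (iii) split $u_t$ by the Doob decomposition into a predictable part with a.s.\ absolutely summable increments and an $L^1$-bounded martingale, each of which converges a.s. All three steps are sound. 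The only point you should state explicitly rather than leave implicit is the integrability of each $u_t$: without $\EXP[u_t] < \infty$ the conditional increments $a_t$, the interchange $\EXP[\delta_t(u_{t+1}-u_t)] = \EXP[\delta_t a_t]$, and the telescoping identity are not well defined. This is a standing assumption in the cited statement (it is part of what ``realization of a stochastic process'' with the stated conditional expectations means), so it is a presentational gap rather than a mathematical one. In exchange for being longer than the paper's one-line citation, your argument makes the lemma self-contained, resting only on Doob's decomposition and Doob's martingale convergence theorem.
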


\begin{theorem}[Convergence of the surrogate function $g_t(L_t)$]
The surrogate function $g_t(L_t)$ we defined in Eq.~\eqref{eq:g_t(L)} converges almost surely, where $L_t$ is the solution produced by Algorithm~\ref{alg:all}.
\end{theorem}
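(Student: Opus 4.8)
The plan is to verify the hypotheses of the quasi-martingale convergence criterion, Lemma~\ref{lem:bottou}, applied to the nonnegative process $u_t = g_t(L_t)$. Two things must be checked: that $u_t \geq 0$, and that $\sum_{t=1}^{\infty}\EXP[\delta_t(u_{t+1}-u_t)] < \infty$. Nonnegativity is immediate, since $g_t(L_t)$ is a sum of squared residuals, the convex penalty $\lambda_2\,\th(\cdot)\geq 0$, and the regularizer $\frac{\lambda_1}{2t}\twoinfnorm{L_t}^2\geq 0$. The substance lies in the second condition, for which I need a sharp bound on the conditional expected increment $\EXP[g_{t+1}(L_{t+1}) - g_t(L_t)\mid \mathcal{F}_t]$.

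First I would establish an \emph{exact} recursion for the surrogate evaluated at the previous iterate. Writing $t\,g_t(L_t) = \sum_{i=1}^t \tl(\bz_i, L_t, \br_i, \be_i) + \frac{\lambda_1}{2}\twoinfnorm{L_t}^2$ and recalling that at step $t+1$ the pair $\{\br_{t+1},\be_{t+1}\}$ is chosen to minimize $\tl(\bz_{t+1}, L_t, \cdot, \cdot)$, so that $\tl(\bz_{t+1}, L_t, \br_{t+1}, \be_{t+1}) = \ell(\bz_{t+1}, L_t)$, a direct computation gives
\begin{equation*}
g_{t+1}(L_t) - g_t(L_t) = \frac{\ell(\bz_{t+1}, L_t) - g_t(L_t)}{t+1}.
\end{equation*}
Next I would combine two monotonicity facts. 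Since $L_{t+1}$ minimizes $g_{t+1}$, we have $g_{t+1}(L_{t+1}) \leq g_{t+1}(L_t)$; and since for every fixed $L$ the suboptimal coefficients satisfy $\tl(\bz_i, L, \br_i, \be_i) \geq \min_{\br,\be}\tl(\bz_i,L,\br,\be)= \ell(\bz_i, L)$, we obtain $g_t(L) \geq f_t(L)$ pointwise, in particular $g_t(L_t) \geq f_t(L_t)$. Chaining these,
\begin{equation*}
g_{t+1}(L_{t+1}) - g_t(L_t) \leq \frac{\ell(\bz_{t+1}, L_t) - f_t(L_t)}{t+1}.
\end{equation*}

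Taking the conditional expectation and using Assumption~\ref{as:z} (the samples are i.i.d.) together with the $\mathcal{F}_t$-measurability of $L_t$, the fresh sample satisfies $\EXP[\ell(\bz_{t+1}, L_t)\mid\mathcal{F}_t] = f(L_t)$, whence
\begin{equation*}
\EXP[g_{t+1}(L_{t+1}) - g_t(L_t)\mid\mathcal{F}_t] \leq \frac{f(L_t) - f_t(L_t)}{t+1} \leq \frac{\lV f_t - f\rV_{\infty}}{t+1}.
\end{equation*}
Because $\delta_t$ is $\mathcal{F}_t$-measurable, the tower rule gives $\EXP[\delta_t(u_{t+1}-u_t)] = \EXP[\delta_t\,\EXP[u_{t+1}-u_t\mid\mathcal{F}_t]] \leq \EXP[\lV f_t - f\rV_{\infty}]/(t+1)$, and Proposition~\ref{prop:concentration of ft} bounds $\EXP[\lV f_t - f\rV_{\infty}]$ by $O(1/\sqrt{t})$. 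Hence each summand is $O\big(1/(\sqrt{t}\,(t+1))\big)$, which is summable, so $\sum_t \EXP[\delta_t(u_{t+1}-u_t)] < \infty$ and Lemma~\ref{lem:bottou} delivers almost sure convergence of $g_t(L_t)$ together with $\sum_t |\EXP[u_{t+1}-u_t\mid\mathcal{F}_t]| < \infty$ a.s.

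The main obstacle is the increment recursion together with the conditional-expectation step: one must keep the measurability bookkeeping straight (that $L_t$ and $\delta_t$ are determined by $\mathcal{F}_t$ while $\bz_{t+1}$ is independent of it) and correctly exploit the two-sided comparison $f_t \leq g_t$ and $g_{t+1}(L_{t+1}) \leq g_{t+1}(L_t)$. Everything else is bookkeeping, but the $O(1/\sqrt{t})$ decay furnished by the P-Donsker property (Corollary~\ref{coro:l donsker} and Proposition~\ref{prop:concentration of ft}) is the analytic engine that makes the series converge and is what ultimately powers the quasi-martingale argument.
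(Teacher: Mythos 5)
Your proposal is correct and follows essentially the same route as the paper's proof: the same increment decomposition $u_{t+1}-u_t$ via the exact recursion $g_{t+1}(L_t)-g_t(L_t) = \bigl(\ell(\bz_{t+1},L_t)-g_t(L_t)\bigr)/(t+1)$, the same two monotonicity facts ($L_{t+1}$ minimizes $g_{t+1}$, and $g_t \geq f_t$ pointwise), the same conditional-expectation bound $\EXP[\ell(\bz_{t+1},L_t)\mid\mathcal{F}_t]=f(L_t)$, the concentration estimate of Proposition~\ref{prop:concentration of ft}, and finally Lemma~\ref{lem:bottou}. The only cosmetic difference is that you bound $f(L_t)-f_t(L_t)$ directly by $\lV f_t - f\rV_{\infty}$, whereas the paper first splits off the vanishing regularization term $\frac{\lambda_1}{2t}\twoinfnorm{L_t}^2$ and applies concentration to the unregularized empirical loss; both are valid.
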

\begin{proof}
For convenience, let us first define the stochastic positive process
\begin{equation*}
u_t = g_t(L_t) \geq 0.
\end{equation*}

We consider the difference between $u_{t+1}$ and $u_t$:
\begin{equation}
\label{eq:diff_ut}
\begin{split}
u_{t+1} - u_t &= g_{t+1}(L_{t+1}) - g_t(L_t)\\
&= g_{t+1}(L_{t+1}) - g_{t+1}(L_t) + g_{t+1}(L_t) - g_t(L_t)\\
&= g_{t+1}(L_{t+1}) - g_{t+1}(L_t) + \frac{1}{t+1}\ell(\bz_{t+1}, L_t) - \frac{1}{t+1}g_t(L_t)\\
&= g_{t+1}(L_{t+1}) - g_{t+1}(L_t) + \frac{f_t(L_t)-g_t(L_t)}{t+1} + \frac{\ell(\bz_{t+1}, L_t) - f_t(L_t)}{t+1}.\\
\end{split}
\end{equation}
As $L_{t+1}$ minimizes $g_{t+1}(L)$, we have
\begin{equation*}
g_{t+1}(L_{t+1}) - g_{t+1}(L_t) \leq 0.
\end{equation*}

As $g_t(L_t)$ is the surrogate function of $f_t(L_t)$, we have
\begin{equation*}
f_t(L_t)-g_t(L_t) \leq 0.
\end{equation*}

Thus,
\begin{equation}
\label{eq:diff_u_t}
u_{t+1}- u_t \leq \frac{\ell(\bz_{t+1}, L_t) - f_t(L_t)}{t+1}.
\end{equation}

Let us consider the filtration of the past information $\mathcal{F}_t$ and take the expectation of Eq.~\eqref{eq:diff_u_t} conditioned on $\mathcal{F}_t$:
\begin{equation}
\label{eq:E(diff_u)}
\begin{split}
\EXP[u_{t+1}-u_t \mid \mathcal{F}_t] & \leq \frac{\EXP[\ell(\bz_{t+1}, L_t) \mid \mathcal{F}_t] - f_t(L_t)}{t+1}\\
& \leq \frac{f(L_t) - f_t(L_t)}{t+1}\\
& =\frac{f(L_t) - f'_t(L_t) -{\lambdat} \Vert L_t \Vert_{\tinf}^2}{t+1}\\
& \leq \frac{\Vert f - f'_t \Vert_{\infty}}{t+1} - \frac{\lambda_1}{2t(t+1)}\twoinfnorm{L_t}^2\\
& \leq \frac{\Vert f - f'_t \Vert_{\infty}}{t+1},
\end{split}
\end{equation}
where
\begin{equation*}
f'_t(L) = \frac{1}{t} \sum_{i=1}^t \ell(\bz_i, L).
\end{equation*}
Note that
\begin{equation*}
f'(L) = \lim_{t \rightarrow \infty} f'_t(L) = \EXP_{\bz}[\ell(\bz, L)] = f(L).
\end{equation*}

From Proposition~\ref{prop:concentration of ft}, we have
\begin{equation*}
\EXP[\Vert \sqrt{t}(f'_t - f') \Vert_{\infty}] = O(1).
\end{equation*}
Also note that according to Proposition~\ref{prop:bound:reABL}, we have $\fronorm{L_t} \leq L_{\max}$. Thus, considering the positive part of $\EXP[u_{t+1}-u_t \mid \mathcal{F}_t]$ in Eq.~\eqref{eq:E(diff_u)} and taking the expectation, we have
\begin{equation*}
\EXP[\EXP[u_{t+1}-u_t \mid \mathcal{F}_t]^+] = \EXP[\max\{\EXP[u_{t+1}-u_t \mid \mathcal{F}_t], 0\}] \leq \frac{\kappa}{\sqrt{t}(t+1)},
\end{equation*}
where $\kappa$ is a constant.

Therefore, defining the set $\mathcal{T} = \{t \mid \EXP[u_{t+1}-u_t \mid \mathcal{F}_t] >0\}$ and 
\begin{equation*}
\delta_t = 
\begin{cases}
1\quad \textrm{if}\ t \in \mathcal{T},\\
0\quad \textrm{otherwise},
\end{cases}
\end{equation*}
we have
\begin{equation*}
\begin{split}
\sum_{t=1}^{\infty} \EXP[\delta_t(u_{t+1}-u_t)] &= \sum_{t \in \mathcal{T}} \EXP[(u_{t+1}-u_t)]\\
&= \sum_{t \in \mathcal{T}} \EXP[\EXP[u_{t+1}-u_t \mid \mathcal{F}_t]]\\
&= \sum_{t=1}^{\infty}\EXP[\EXP[u_{t+1}-u_t \mid \mathcal{F}_t]^+]\\
&< +\infty
\end{split}
\end{equation*}

According to Lemma~\ref{lem:bottou}, we conclude that $g_t(L_t)$ is a quasi-martingale and converges almost surely. Moreover,
\begin{equation}
\label{eq:sum diff_ut}
\sum_{t=1}^{\infty} \lvert \EXP[u_{t+1} - u_t \mid \mathcal{F}_t] \rvert < +\infty \ a.s.
\end{equation}

\end{proof}

\subsection{Moving to Stage III}
We now show that $g_t(L_t)$ and $f(L_t)$ converge  to the same limit almost surely. Consequently, $f(L_t)$ converges almost surely. First, we prove that $b_t \defeq g_t(L_t) - f_t(L_t)$ converges to 0 almost surely. We utilize the lemma from~\cite{mairal2010online} for the proof.

\begin{lemma}[Lemma 8 from~\cite{mairal2010online}]
\label{lem:mairal}
Let $a_t$, $b_t$ be two real sequences such that for all $t$, $a_t \geq 0$, $b_t \geq 0$, $\sum_{t=1}^{\infty}a_t = \infty$, $\sum_{t=1}^{\infty} a_t b_t < \infty$, $\exists K > 0$, such that $\lv b_{t+1} - b_t \rv < K a_t$. Then, $\lim_{t \rightarrow +\infty} b_t = 0$.
\end{lemma}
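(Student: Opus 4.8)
The statement is a deterministic lemma about real sequences, so the plan is purely real-analytic and independent of the stochastic machinery developed earlier. Since $b_t \ge 0$, it suffices to prove $\limsup_{t\to\infty} b_t = 0$, and I would argue by contradiction: suppose $\limsup_{t\to\infty} b_t = 3\eta$ for some $\eta > 0$. The first, easy step is to record that $\liminf_{t\to\infty} b_t = 0$. Indeed, were $\liminf_t b_t = c > 0$, we would have $b_t \ge c/2$ for all $t$ beyond some $T$, whence $\sum_{t \ge T} a_t b_t \ge (c/2)\sum_{t\ge T} a_t = \infty$ because $\sum_t a_t = \infty$; this contradicts $\sum_t a_t b_t < \infty$. (This sub-step does not even use the Lipschitz-type hypothesis.) Consequently $b_t < \eta$ infinitely often while, by the assumption on the $\limsup$, $b_t > 2\eta$ infinitely often as well.

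The heart of the argument is to extract from this oscillation infinitely many disjoint \emph{down-crossing} intervals and to show that each contributes a fixed positive amount to $\sum_t a_t b_t$, forcing divergence. Concretely, I would pick indices $p_1 < q_1 < p_2 < q_2 < \cdots$ with $b_{p_k} > 2\eta$ and $q_k = \min\{t > p_k : b_t < \eta\}$, spacing the $p_k$ so that the intervals $[p_k, q_k]$ are pairwise disjoint; this is possible because $b_t > 2\eta$ and $b_t < \eta$ each occur infinitely often. By minimality of $q_k$ we have $b_t \ge \eta$ for every $t$ with $p_k \le t \le q_k - 1$. The telescoping bound $\eta < b_{p_k} - b_{q_k} = \sum_{t=p_k}^{q_k - 1}(b_t - b_{t+1}) \le \sum_{t=p_k}^{q_k-1}\lvert b_{t+1} - b_t\rvert < K\sum_{t=p_k}^{q_k-1} a_t$ gives $\sum_{t=p_k}^{q_k-1} a_t > \eta/K$, and combining this with $b_t \ge \eta$ on the whole interval yields $\sum_{t=p_k}^{q_k-1} a_t b_t \ge \eta \sum_{t=p_k}^{q_k-1} a_t > \eta^2/K$. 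Summing over the infinitely many disjoint intervals then gives $\sum_t a_t b_t = \infty$, the desired contradiction.

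The one genuinely delicate point — and the reason I would use down-crossings rather than up-crossings — is the possibility that a single step carries a large increment when the corresponding $a_t$ is large. For an up-crossing such a jump would occur near the low end of the band, where $b_t$ is small, so the product $a_t b_t$ need not be controlled and a stray boundary term survives; for a down-crossing the entire interval $[p_k, q_k-1]$ already satisfies $b_t \ge \eta$, so the lower bound $\sum_{t=p_k}^{q_k-1} a_t > \eta/K$ translates \emph{directly} into a lower bound on $\sum a_t b_t$ with no leftover term. The remaining work is pure bookkeeping: checking that the intervals can be chosen disjoint, that discarding finitely many initial terms preserves $\sum_t a_t = \infty$, and that the strict inequalities propagate through the telescoping sum. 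It is worth emphasizing that no hypothesis $a_t \to 0$ is needed, since the argument rests only on the three stated summability and Lipschitz conditions.
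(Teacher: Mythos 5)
Your proof is correct, and it supplies something the paper itself omits: this statement is imported verbatim as Lemma 8 of \cite{mairal2010online} and used as a black box, with no proof given anywhere in the paper. Your down-crossing argument is essentially the classical one from the stochastic-approximation literature for lemmas of this type: first derive $\liminf_{t} b_t = 0$ from $\sum_t a_t = \infty$ and $\sum_t a_t b_t < \infty$, then show that each excursion of $b_t$ from above $2\eta$ down to below $\eta$ forces $\sum_{t=p_k}^{q_k-1} a_t > \eta/K$ via the Lipschitz-type hypothesis, hence contributes at least $\eta^2/K$ to $\sum_t a_t b_t$, contradicting summability. The steps check out: minimality of $q_k$ gives $b_t \geq \eta$ on the whole block $\{p_k, \dots, q_k-1\}$, so the lower bound on $\sum a_t$ converts directly into a lower bound on $\sum a_t b_t$ with no boundary term, exactly as you emphasize, and the blocks can indeed be chosen disjoint since both events occur infinitely often. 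One cosmetic repair: writing $\limsup_{t\to\infty} b_t = 3\eta$ presumes the limsup is finite. Your liminf step rules out $b_t$ staying bounded away from zero, but not the case $\limsup_t b_t = +\infty$ with $\liminf_t b_t = 0$; in that case simply fix an arbitrary $\eta > 0$, note that $b_t > 2\eta$ still holds infinitely often, and run the identical argument. With that one-clause adjustment the proof is complete.
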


We notice that another sequence $\{a_t\}_{t=1}^{\infty}$ should be constructed in Lemma~\ref{lem:mairal}. Here, we take the $a_t = \frac{1}{t} \geq 0$, which satisfies the condition $\sum_{t=1}^{\infty}a_t = \infty$. Next, we need to show that $\lv b_{t+1} - b_t \rv < K a_t$, where $K$ is a constant. To do this, we alternatively show that $\lv b_{t+1} - b_t \rv$ can be upper bounded by $\fronorm{L_{t+1} - L_t}$, which can be further bounded by $Ka_t$.

\begin{proposition}
Let \{$L_t$\} be the basis sequence produced by the Algorithm~\ref{alg:all}. Then,
\begin{equation*}
\fronorm{L_{t+1} - L_t} = O(\frac{1}{t}).
\end{equation*}
\end{proposition}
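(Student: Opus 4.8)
The plan is to lean on the strong convexity of the surrogates guaranteed by Assumption~\ref{as:g_t(L)} together with the uniform Lipschitz estimates already in hand, in the spirit of the dictionary-stability argument for online matrix factorization. The first thing I would record is that both $g_t$ and $g_{t+1}$ are $\beta_1$-strongly convex with respect to the Frobenius norm. The only term of $g_t$ that is quadratic in $L$ is $\frac{1}{2}\tr(L\trans L \widetilde{A}_t)$ with $\widetilde{A}_t = \frac{1}{t}A_t$, and writing it as $\frac{1}{2}\mathrm{vec}(L)\trans(\widetilde{A}_t \otimes I_p)\mathrm{vec}(L)$ shows that its Hessian has smallest eigenvalue equal to that of $\widetilde{A}_t$, which is at least $\beta_1$ by Assumption~\ref{as:g_t(L)}; the remaining linear term and the convex term $\frac{\lambda_1}{2t}\twoinfnorm{L}^2$ do not reduce the modulus. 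The same holds for $g_{t+1}$ through $\widetilde{A}_{t+1}$.

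Since $L_t$ minimizes $g_t$ and $L_{t+1}$ minimizes $g_{t+1}$, strong convexity (using that a zero subgradient is attained at each minimizer) yields
\begin{equation*}
g_t(L_{t+1}) - g_t(L_t) \geq \tfrac{\beta_1}{2}\fronorm{L_{t+1}-L_t}^2, \qquad g_{t+1}(L_t) - g_{t+1}(L_{t+1}) \geq \tfrac{\beta_1}{2}\fronorm{L_{t+1}-L_t}^2.
\end{equation*}
Adding these and setting $h_t(L) \defeq g_t(L) - g_{t+1}(L)$, the left-hand side collapses exactly to $h_t(L_{t+1}) - h_t(L_t)$, so that $\beta_1 \fronorm{L_{t+1}-L_t}^2 \leq h_t(L_{t+1}) - h_t(L_t)$. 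The key algebraic simplification is that $h_t$ has a clean form: using $\frac{1}{t}-\frac{1}{t+1}=\frac{1}{t(t+1)}$ and recognizing $\frac{1}{t(t+1)}\sum_{i=1}^t\tilde{\ell}(\bz_i,L,\br_i,\be_i)+\frac{\lambda_1}{2t(t+1)}\twoinfnorm{L}^2 = \frac{1}{t+1}g_t(L)$, one obtains $h_t(L) = \frac{1}{t+1}\big(g_t(L) - \tilde{\ell}(\bz_{t+1},L,\br_{t+1},\be_{t+1})\big)$, which carries the crucial factor $1/(t+1)$.

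It then remains to bound $|h_t(L_{t+1}) - h_t(L_t)|$ linearly in $\fronorm{L_{t+1}-L_t}$. By the triangle inequality it is at most $\frac{1}{t+1}$ times the sum of $|g_t(L_{t+1}) - g_t(L_t)|$ and $|\tilde{\ell}(\bz_{t+1},L_{t+1},\br_{t+1},\be_{t+1}) - \tilde{\ell}(\bz_{t+1},L_t,\br_{t+1},\be_{t+1})|$. The first is controlled by the uniform Lipschitz constant of $g_t$ from Corollary~\ref{coro:bound l lip gt}, and the second by the uniform bound on $\nabla_L\tilde{\ell} = (L\br+\be-\bz)\br\trans$, which is $t$-independent because $L_t\in\mathcal{L}$, $\twonorm{\br_{t+1}}\leq 1$, and $\be_{t+1},\bz_{t+1}$ are uniformly bounded (Proposition~\ref{prop:bound:reABL} and Assumption~\ref{as:z}). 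Hence $|h_t(L_{t+1})-h_t(L_t)| \leq \frac{C}{t+1}\fronorm{L_{t+1}-L_t}$ for a constant $C$ uniform in $t$. Combining with the lower bound gives $\beta_1\fronorm{L_{t+1}-L_t}^2 \leq \frac{C}{t+1}\fronorm{L_{t+1}-L_t}$, and dividing through (the claim is trivial when $L_{t+1}=L_t$) yields $\fronorm{L_{t+1}-L_t} \leq \frac{C}{\beta_1(t+1)} = O(1/t)$.

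I expect the main difficulty to be bookkeeping rather than conceptual: verifying that the strong-convexity modulus is genuinely $\beta_1$ in the Frobenius geometry (the Kronecker structure of the quadratic form) and that every Lipschitz and boundedness constant invoked is uniform in $t$. The entire $O(1/t)$ rate hinges on the explicit factor $1/(t+1)$ appearing in $h_t$ and on nothing else secretly introducing a $t$-dependent constant.
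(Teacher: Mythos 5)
Your proof is correct and follows essentially the same route as the paper's: a strong-convexity lower bound (via Assumption~\ref{as:g_t(L)}) relating $\fronorm{L_{t+1}-L_t}^2$ to the difference of consecutive surrogates evaluated at $L_{t+1}$ versus $L_t$, combined with an $O(1/t)$ Lipschitz bound on that difference function. The only (cosmetic) deviations are that the paper gets the Lipschitz bound by explicitly computing $\nabla\(\hat{g}_t - \hat{g}_{t+1}\)$ in terms of the accumulation matrices while you use the identity $g_t - g_{t+1} = \frac{1}{t+1}\(g_t - \tilde{\ell}_{t+1}\)$, and you symmetrize strong convexity over both minimizers where the paper uses it once together with the minimizing property of $L_{t+1}$.
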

\begin{proof}
Let us define
\begin{equation}
\hg_t(L) = \frac{1}{t} \(\frac{1}{2} \mathrm{Tr} \( L\trans L A_t \) - \mathrm{Tr} \( L\trans B_t \) \) + \frac{\lambda_1}{2t} \twoinfnorm{L}^2.
\end{equation}
According the strong convexity of $A_t$ (Assumption~\ref{as:g_t(L)}), and the convexity of $\twoinfnorm{L}^2$, we can derive the strong convexity of $\hg_t(L)$. That is,
\begin{equation}
\hg_t(L_{t+1}) - \hg_t(L_t) \geq \langle U_t, L_{t+1} - L_t \rangle + \frac{\beta_1}{2} \fronorm{L_{t+1} - L_t}^2,
\end{equation}
where $U_t \in \partial \hg_t(L_t)$. As $L_t$ is the minimizer of $\hg_t$, we have
\begin{equation*}
0 \in \partial \hg_t(L_t).
\end{equation*}
Let $U_t$ be the zero matrix. Then we have
\begin{equation}
\label{eq:dif_hg1}
\hg_t(L_{t+1}) - \hg_t(L_t) \geq {\frac{\beta_1}{2}} \fronorm{L_{t+1} - L_t}^2.
\end{equation}
On the other hand,
\begin{equation}
\label{eq:dif_hg}
\begin{split}
\hg_t(L_{t+1}) - \hg_t(L_t) =& \hg_t(L_{t+1}) - \hg_{t+1}(L_{t+1}) + \hg_{t+1}(L_{t+1}) - \hg_{t+1}(L_t) + \hg_{t+1}(L_t) - \hg_t(L_t)\\
\leq& \hg_t(L_{t+1}) - \hg_{t+1}(L_{t+1}) + \hg_{t+1}(L_t) - \hg_t(L_t).
\end{split}
\end{equation}
Note that the inequality is derived by the fact that $\hg_{t+1}(L_{t+1}) - \hg_{t+1}(L_t) \leq 0$, as $L_{t+1}$ is the minimizer of $\hg_{t+1}(L)$. Let us denote $\hg_t(L) - \hg_{t+1}(L)$ by $G_t(L)$. We have
\begin{equation*}
\begin{split}
G_t(L) = &\frac{1}{t} \(\frac{1}{2} \mathrm{Tr} \( L\trans L A_t \) - \mathrm{Tr} \( L\trans B_t \) \) - \frac{1}{t+1} \(\frac{1}{2} \mathrm{Tr} \( L\trans L A_{t+1} \) - \mathrm{Tr} \( L\trans B_{t+1} \) \)\\
& + \frac{\lambda_1}{2t} \twoinfnorm{L}^2 - \frac{\lambda_1}{2(t+1)}\twoinfnorm{L}^2.
 \end{split}
\end{equation*}
By a simple calculation, we have the gradient of $G_t(L)$:
\begin{equation*}
\begin{split}
\nabla G_t(L) =& \frac{1}{t}\(LA_t - B_t\) - \frac{1}{t+1}\(LA_{t+1}- B_{t+1}\) + \(\frac{1}{t} - \frac{1}{t+1} \) \lambda_1 U\\
=& \frac{1}{t}\(L(A_t - \frac{t}{t+1} A_{t+1}) + \frac{t}{t+1}B_{t+1} - B_t + \frac{\lambda_1}{t+1} U \),
\end{split}
\end{equation*}
where $U \in  \partial  \twoinfnorm{L}^2$. We then compute  the Frobenius norm of the gradient of $G_t(L)$:
\begin{equation}
\begin{split}
\fronorm{\nabla G_t(L)} \leq & \frac{1}{t} \( \fronorm{L(A_t - \frac{t}{t+1} A_{t+1})} + \fronorm{\frac{t}{t+1}B_{t+1} - B_t} + \frac{\lambda_1}{t+1} \fronorm{L} \)\\
\leq & \frac{1}{t} \( \fronorm{L} \cdot \fronorm{A_t - \frac{t}{t+1} A_{t+1}} + \fronorm{ \frac{t}{t+1}B_{t+1} - B_t} + \frac{\lambda_1}{t+1} \fronorm{L} \)\\
=& \frac{1}{t} \{ \fronorm{L} \cdot \fronorm{\frac{1}{t+1}A_t - \frac{t}{t+1}\br_{t+1}\br_{t+1}\trans} \\
&+ \fronorm{\frac{1}{t+1}B_t - \frac{t}{t+1}\(\bz_{t+1}-\be_{t+1}\)\br_{t+1}\trans} + \frac{\lambda_1}{t+1} \fronorm{L} \}.
\end{split}
\end{equation}
According to the first order Taylor expansion,
\begin{equation*}
\begin{split}
G_t(L_{t+1}) - G_t(L_t) = & \tr\( \(L_{t+1} - L_t\)\trans \nabla G_t \(\alpha L_t + \(1- \alpha\)L_{t+1}\) \)\\
\leq & \fronorm{L_{t+1} - L_t} \cdot \fronorm{\nabla G_t \(\alpha L_t + \(1- \alpha\)L_{t+1}\)},
\end{split}
\end{equation*}
where $\alpha$ is a constant between 0 and 1. According to Proposition~\ref{prop:bound:reABL}, $L_t$ and $L_{t+1}$ are uniformly bounded, so $\alpha L_t + \(1- \alpha\)L_{t+1}$ is uniformly bounded. According to Proposition~\ref{prop:bound:reABL}, $\frac{1}{t+1}A_t$, $\frac{t}{t+1}\br_{t+1}\br_{t+1}\trans$, $\frac{1}{t+1}B_t$ and $ \frac{t}{t+1}\(\bz_{t+1}-\be_{t+1}\)\br_{t+1}\trans$ are all uniformly bounded. Thus, there exists a constant $c$, such that
\begin{equation*}
\fronorm{\nabla G_t \(\alpha L_t + \(1- \alpha\)L_{t+1}\)} \leq \frac{c}{t},
\end{equation*}
resulting that
\begin{equation*}
G_t(L_{t+1}) - G_t(L_t) \leq \frac{c}{t} \fronorm{L_{t+1} - L_t}.
\end{equation*}
Applying this property in Eq.~\eqref{eq:dif_hg}, we have
\begin{equation}
\label{eq:dif_hg2}
\hg_t(L_{t+1}) - \hg_t(L_t) \leq G_t(L_{t+1}) - G_t(L_t) \leq \frac{c}{t} \fronorm{L_{t+1} - L_t}.
\end{equation}
From Eq.~\eqref{eq:dif_hg1} and Eq.~\eqref{eq:dif_hg2}, we conclude that
\begin{equation}
\fronorm{L_{t+1} - L_t} \leq \frac{2c}{\beta_1}\cdot \frac{1}{t}.
\end{equation}

\end{proof}

\begin{theorem}[Convergence of the empirical and expected loss]
Let $\{f(L_t)\}_{t=1}^{\infty}$ be the sequence of the expected loss where $\{L_t\}_{t=1}^{\infty}$ be the sequence of the solutions produced by the Algorithm~\ref{alg:all}. Also for any $t>0$, denote $g_t(L_t) - f_t(L_t)$ by $b_t$. Then,
\begin{enumerate}
	\item The sequence $\{b_t\}_{t=1}^{\infty}$ converges almost surely to 0.
		
	\item The sequence of the empirical loss $\{f_t(L_t)\}_{t=1}^{\infty}$ converges almost surely.
		
	\item The sequence of the expected loss $\{f(L_t)\}_{t=1}^{\infty}$ converges almost surely to the same limit of the surrogate $\{g_t(L_t)\}_{t=1}^{\infty}$.
		
\end{enumerate}
\end{theorem}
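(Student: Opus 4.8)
The plan is to prove Claim 1 by verifying the three hypotheses of Lemma~\ref{lem:mairal} for the nonnegative sequence $b_t = g_t(L_t) - f_t(L_t)$ with the choice $a_t = 1/(t+1)$, after which Claims 2 and 3 follow quickly. First I would record that $b_t \ge 0$ for every $t$: because the coefficients $\br_i,\be_i$ entering the surrogate are suboptimal for the empirical loss, $g_t(L) \ge f_t(L)$ holds pointwise, so in particular $b_t \ge 0$. With $a_t = 1/(t+1)$ one has $a_t \ge 0$ and $\sum_t a_t = \infty$, so two of the three hypotheses are immediate, and the work reduces to the summability $\sum_t a_t b_t < \infty$ and the increment bound $|b_{t+1}-b_t| \le K a_t$.

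The summability is the crux. Starting from the decomposition of $u_{t+1}-u_t$ (with $u_t = g_t(L_t)$) derived in the proof of Theorem~\ref{thm:convergence gt(Lt)}, taking $\EXP[\cdot\mid\mathcal{F}_t]$, using $\EXP[\ell(\bz_{t+1},L_t)\mid\mathcal{F}_t]=f(L_t)$, and discarding the nonpositive term $\EXP[g_{t+1}(L_{t+1})-g_{t+1}(L_t)\mid\mathcal{F}_t]$, I would obtain
\[
\frac{b_t}{t+1} \le \frac{f(L_t)-f_t(L_t)}{t+1} - \EXP[u_{t+1}-u_t \mid \mathcal{F}_t].
\]
Taking unconditional expectations, summing over $t=1,\dots,T$, telescoping $-\sum_t \EXP[u_{t+1}-u_t] = \EXP[u_1]-\EXP[u_{T+1}] \le \EXP[u_1]$, and bounding $\EXP[f(L_t)-f_t(L_t)] \le \EXP[\|f-f_t\|_\infty] = O(1/\sqrt{t})$ by Proposition~\ref{prop:concentration of ft}, yields $\sum_t \EXP[b_t/(t+1)] < \infty$ since $\sum_t 1/(\sqrt{t}(t+1)) < \infty$. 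As $b_t \ge 0$, Tonelli gives $\sum_t a_t b_t < \infty$ almost surely.

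For the increment bound I would control $|g_{t+1}(L_{t+1})-g_t(L_t)|$ and $|f_{t+1}(L_{t+1})-f_t(L_t)|$ separately, each by $O(1/t)$. Splitting $g_{t+1}(L_{t+1})-g_t(L_t) = [g_{t+1}(L_{t+1})-g_{t+1}(L_t)] + [g_{t+1}(L_t)-g_t(L_t)]$, the first bracket is $O(1/t)$ by the uniform Lipschitz property of $g_{t+1}$ (Corollary~\ref{coro:bound l lip gt}) and $\fronorm{L_{t+1}-L_t}=O(1/t)$ (Proposition~\ref{prop:diff_L}); the second bracket equals $\frac{1}{t+1}\tl(\bz_{t+1},L_t,\br_{t+1},\be_{t+1})$ plus $\big(\frac{1}{t+1}-\frac{1}{t}\big)\sum_{i=1}^t \tl(\bz_i,L_t,\br_i,\be_i)$, both of order $O(1/t)$ because each $\tl$ summand is uniformly bounded (Corollary~\ref{coro:bound l lip gt}) and the weight difference is $1/(t(t+1))$. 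The identical argument with $\ell(\bz_i,L_t)$ and the Lipschitz property of $f_{t+1}$ (Corollary~\ref{coro:bound lip ft}) handles $f$. Hence $|b_{t+1}-b_t| = O(1/t) \le K a_t$, and Lemma~\ref{lem:mairal} gives $b_t \to 0$ almost surely, proving Claim 1.

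Claim 2 is then immediate: $f_t(L_t) = g_t(L_t) - b_t$, and since $g_t(L_t)$ converges almost surely (Theorem~\ref{thm:convergence gt(Lt)}) while $b_t \to 0$, the empirical loss converges almost surely to the same limit. For Claim 3 I would show $f(L_t)-f_t(L_t) \to 0$ almost surely and combine with Claim 2. Writing $f_t(L) = \frac{1}{t}\sum_{i=1}^t \ell(\bz_i,L) + \frac{\lambda_1}{2t}\twoinfnorm{L}^2$, the regularization term at $L_t$ vanishes since $\twoinfnorm{L_t}$ is uniformly bounded (Proposition~\ref{prop:bound:reABL}), while the remaining discrepancy is dominated by $\sup_{L\in\mathcal{L}}\big|\frac{1}{t}\sum_i \ell(\bz_i,L) - \EXP_{\bz}\ell(\bz,L)\big|$, which tends to $0$ almost surely because $\{\ell(\bz,L):L\in\mathcal{L}\}$ is P-Donsker (Corollary~\ref{coro:l donsker}) and hence P-Glivenko--Cantelli. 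Thus $f(L_t)$ converges almost surely to $\lim_t g_t(L_t)$. The main obstacle is the summability step, the one place where the stochastic structure genuinely enters, requiring the quasi-martingale bound of Theorem~\ref{thm:convergence gt(Lt)} to be combined correctly with the $O(1/\sqrt{t})$ concentration of $\|f_t-f\|_\infty$; a secondary subtlety is that Proposition~\ref{prop:concentration of ft} only yields convergence in mean, so upgrading Claim 3 to an almost-sure statement forces reliance on the Glivenko--Cantelli consequence of the P-Donsker property rather than on the concentration bound alone.
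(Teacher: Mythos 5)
Your proposal is correct and follows the same skeleton as the paper's proof: nonnegativity of $b_t$ from the surrogate property, Lemma~\ref{lem:mairal} with $a_t \asymp 1/t$, summability of $\sum_t b_t/(t+1)$ extracted from the $u_{t+1}-u_t$ decomposition of Theorem~\ref{thm:convergence gt(Lt)}, and the increment bound $\lvert b_{t+1}-b_t\rvert = O(1/t)$ from the uniform Lipschitz/boundedness results (Corollaries~\ref{coro:bound l lip gt} and~\ref{coro:bound lip ft}) together with Proposition~\ref{prop:diff_L}. Where you genuinely diverge is in the two stochastic steps, and your versions are arguably tighter than the paper's. For the summability, the paper argues pathwise: it invokes ``the central limit theorem'' to claim $\sqrt{t}\,(f(L_t)-f_t(L_t))$ is bounded almost surely and combines this with the almost-sure bound $\sum_t \lvert \EXP[u_{t+1}-u_t \mid \mathcal{F}_t]\rvert < \infty$. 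That CLT invocation is loose --- the CLT gives only distributional control, and at the $\sqrt{t}$ scale the fluctuation is in fact not almost surely bounded (it grows at a law-of-the-iterated-logarithm rate), though the paper's conclusion survives because only summability against $1/(t+1)$ is needed. Your route --- unconditional expectations, telescoping $-\sum_t \EXP[u_{t+1}-u_t] \le \EXP[u_1]$ using only $u_t \ge 0$, the $O(1/\sqrt{t})$ bound of Proposition~\ref{prop:concentration of ft}, then Tonelli to upgrade $\sum_t \EXP[b_t/(t+1)] < \infty$ to almost-sure finiteness --- sidesteps this issue entirely and does not even need the quasi-martingale sum bound, only the convergence of $g_t(L_t)$ itself. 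Likewise for Claim 3, the paper again cites the CLT, whereas you use the Glivenko--Cantelli consequence of the uniformly bounded, uniformly Lipschitz (hence P-Donsker, Corollary~\ref{coro:l donsker}) class $\{\ell(\bz,\cdot): L \in \mathcal{L}\}$, which is the correct tool for an almost-sure statement evaluated along the data-dependent sequence $L_t$; your closing remark pinpoints exactly the gap that this substitution repairs.
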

\begin{proof}
We start our proof by deriving an upper bound for $g_t(L_t) - f_t(L_t)$.

\textbf{Step 1:}
According to Eq.~\eqref{eq:diff_ut},
\begin{equation*}
\begin{split}
 \frac{b_t}{t+1} &= g_{t+1}(L_{t+1}) - g_{t+1}(L_t) + \frac{\ell(\bz_{t+1}, L_t) - f_t(L_t)}{t+1} + u_t - u_{t+1}\\
 &\leq \frac{\ell(\bz_{t+1}, L_t) - f_t(L_t)}{t+1} + u_t - u_{t+1}.
 \end{split}
\end{equation*}
Taking the expectation conditioned on the past information $\mathcal{F}_t$ in the above equation, and note that
\begin{equation*}
\begin{split}
\EXP[\frac{b_t}{t+1} \mid \mathcal{F}_t] &= \frac{g_t(L_t) - f_t(L_t)}{t+1},\\
\EXP[\frac{\ell(\bz_{t+1}, L_t) - f_t(L_t)}{t+1} \mid \mathcal{F}_t] &= \frac{f(L_t) - f_t(L_t)}{t+1},
\end{split}
\end{equation*}
we have
\begin{equation*}
\frac{b_t}{t+1} \leq  \frac{f(L_t) - f_t(L_t)}{t+1} + \EXP[u_t - u_{t+1} \mid \mathcal{F}_t].
\end{equation*}
Thus,
\begin{equation*}
\begin{split}
\sum_{t=1}^{\infty} \frac{b_t}{t+1} &\leq \sum_{t=1}^{\infty} \frac{f(L_t) - f_t(L_t)}{t+1} + \sum_{t=1}^{\infty} \EXP[u_t - u_{t+1} \mid \mathcal{F}_t]\\
\end{split}
\end{equation*}
According to the central limit theorem, $\sqrt{t}(f(L_t) - f_t(L_t))$ is bounded almost surely. Also, from Eq.~\eqref{eq:sum diff_ut},
\begin{equation*}
\sum_{t=1}^{\infty} \EXP[u_t - u_{t+1} \mid \mathcal{F}_t] \leq \sum_{t=1}^{\infty} \lv \EXP[u_t - u_{t+1} \mid \mathcal{F}_t] \rv < +\infty.
\end{equation*}
Thus,
\begin{equation*}
\sum_{t=1}^{\infty} \frac{b_t}{t+1} < +\infty.
\end{equation*}

\textbf{Step 2:}
We examine the difference between $b_{t+1}$ and $b_t$:
\begin{equation*}
\begin{split}
&\lv b_{t+1} - b_t \rv\\
 =& \lv g_{t+1}(L_{t+1}) -  f_{t+1}(L_{t+1}) - g_t(L_t) + f_t(L_t) \rv \\
\leq& \lv  g_{t+1}(L_{t+1}) - g_t(L_t) \rv + \lv f_{t+1}(L_{t+1}) - f_t(L_t) \rv\\
=& \lv g_{t+1}(L_{t+1}) - g_t(L_{t+1}) + g_t(L_{t+1}) - g_t(L_t) \rv + \lv f_{t+1}(L_{t+1}) - f_t(L_{t+1}) + f_t(L_{t+1}) - f_t(L_t) \rv \\
\leq&  \lv g_{t+1}(L_{t+1}) - g_t(L_{t+1}) \rv + \lv g_t(L_{t+1}) - g_t(L_t) \rv + \lv f_{t+1}(L_{t+1}) - f_t(L_{t+1}) \rv + \lv f_t(L_{t+1}) - f_t(L_t) \rv \\
=& \lv  \frac{1}{t+1}\ell(\bz_{t+1}, L_{t+1}) - \frac{1}{t+1}g_t(L_{t+1}) \rv + \lv g_t(L_{t+1}) - g_t(L_t) \rv \\
&+ \lv \frac{1}{t+1}\ell(\bz_{t+1}, L_{t+1}) - \frac{1}{t+1}f_t(L_{t+1}) \rv + \lv f_t(L_{t+1}) - f_t(L_t) \rv. \\
\end{split}
\end{equation*}
According to Corollary~\ref{coro:bound l lip gt} and Corollary~\ref{coro:bound lip ft}, we know that there exist constant $\kappa_1$ and $\kappa_2$ that are uniformly over $t$, such that
\begin{equation*}
\begin{split}
\lv g_t(L_{t+1}) - g_t(L_t) \rv &\leq \kappa_1 \fronorm{L_{t+1} - L_t},\\
\lv f_t(L_{t+1}) - f_t(L_t) \rv &\leq \kappa_2 \fronorm{L_{t+1} - L_t}.
\end{split}
\end{equation*}
Combing with Proposition~\ref{prop:diff_L}, there exists a constant $\kappa_3$ that is uniformly over $t$, such that
\begin{equation*}
\lv g_t(L_{t+1}) - g_t(L_t) \rv + \lv f_t(L_{t+1}) - f_t(L_t) \rv \leq \frac{\kappa_3}{t}.
\end{equation*}
As we shown, $\ell(\bz_{t+1}, L_{t+1})$, $g_t(L_{t+1})$ and $f_t(L_{t+1})$ are all uniformly bounded. Therefore, there exists a constant $\kappa_4$, such that
\begin{equation*}
\lvert \ell(\bz_{t+1}, L_{t+1}) - g_t(L_{t+1}) \rvert + \lvert \ell(\bz_{t+1}, L_{t+1}) - f_t(L_t+1) \rvert  \leq \kappa_4.
\end{equation*}
Finally, we have
\begin{equation*}
b_{t+1} - b_t \leq \frac{\kappa_4}{t+1} + \frac{\kappa_3}{t} \leq \frac{\kappa_5}{t},
\end{equation*}
where $\kappa_5$ is a constant that is uniformly over $t$.

Applying Lemma~\ref{lem:mairal}, we conclude that $\{b_t\}$ converges to zero. That is,
\begin{equation}
\lim_{t \rightarrow +\infty} g_t(L_t) - f_t(L_t) = 0.
\end{equation}

In Theorem~\ref{thm:convergence gt(Lt)}, we have shown that $g_t(L_t)$ converges almost surely. This implies that $f_t(L_t)$ also converges almost surely to the same limit of $g_t(L_t)$.

According to the central limit theorem, $\sqrt{t}(f(L_t)- f_t(L_t)$ is bounded, implying
\begin{equation*}
\lim_{t \rightarrow +\infty} f(L_t)- f_t(L_t) = 0, \quad a.s.
\end{equation*}
Thus, we conclude that $f(L_t)$ converges almost surely to the same limit of $f_t(L_t)$ (or, $g_t(L_t)$).

\end{proof}

%
%

\subsection{Finalizing the Proof}
According to Theorem~\ref{thm:convergence of f(L_t)}, we can  see that $g_t(L_t)$ and $f(L_t)$ converge to the same limit almost surely. Let $t$ tends to infinity, as $L_t$ is uniformly bounded (Proposition~\ref{prop:bound:reABL}), the term $\frac{\lambda_1}{2t} \twoinfnorm{L_t}^2$ in $g_t(L_t)$ vanishes. Thus $g_t(L_t)$ becomes differentiable. On the other hand, we have the following proposition about the gradient of $f(L)$.

\begin{proposition}[Subgradient of $f(L)$]
\label{prop:gradient f}
Let $f(L)$ be the expected loss function defined in Eq.~\eqref{eq:f(L)}. Then, $f(L)$ is continuously differentiable and $\nabla f(L) = \EXP_{\bz}[\nabla_L \ell(\bz, L)]$. Moreover, $\nabla f(L)$ is uniformly Lipschitz on $\mathcal{L}$.
\end{proposition}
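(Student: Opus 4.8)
The plan is to establish the gradient formula by differentiating under the expectation and then to upgrade this to the uniform Lipschitz continuity of $\nabla f$ on $\mathcal{L}$. First I would recall that $f(L) = \EXP_{\bz}[\ell(\bz, L)]$ by~\eqref{eq:f(L)}, and invoke Proposition~\ref{prop:l:Lipschtiz}, which already shows that for each fixed $\bz$ the map $\ell(\bz, \cdot)$ is continuously differentiable with $\nabla_L \ell(\bz, L) = (L\br^* + \be^* - \bz)\br^{*\top}$. The device for passing the gradient through the expectation is a domination argument: by Proposition~\ref{prop:bound:reABL} the optimal $\br^*$ and $\be^*$ are uniformly bounded, and under Assumption~\ref{as:z} the samples lie in the compact set $\mathcal{Z}$, so $\fronorm{\nabla_L \ell(\bz, L)}$ is bounded by a single constant uniformly over $(\bz, L) \in \mathcal{Z} \times \mathcal{L}$. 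This constant is trivially integrable against the law of $\bz$, so the dominated convergence theorem (equivalently, the Leibniz rule for differentiation under the integral sign) applies and yields $\nabla f(L) = \EXP_{\bz}[\nabla_L \ell(\bz, L)]$.

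Continuity of $\nabla f$ then follows for free: $L \mapsto \nabla_L \ell(\bz, L)$ is continuous by Proposition~\ref{prop:l:Lipschtiz} and is dominated by the same integrable constant, so dominated convergence lets continuity pass through the expectation. Consequently $f \in C^1$, which settles the first half of the statement. The substantive work remaining is the uniform Lipschitz property.

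Since $\nabla f(L) = \EXP_{\bz}[\nabla_L \ell(\bz, L)]$ is an average, it suffices to show that $L \mapsto \nabla_L \ell(\bz, L)$ is Lipschitz with a constant independent of $\bz$, after which taking expectations preserves the constant. Writing $\nabla_L \ell(\bz, L) = (L\br^*(L) + \be^*(L) - \bz)\br^*(L)\trans$, the obstacle is that the minimizers $\br^*(L)$ and $\be^*(L)$ themselves depend on $L$, so I must control how they move. Here I would exploit the strong convexity guaranteed by Assumptions~\ref{as:g_t(L)} and~\ref{as:unique}: the objective $\tilde{\ell}(\bz, L, \br, \be)$ is strongly convex in $(\br, \be)$ once the jitter is added, so by a standard stability estimate for strongly convex parametric programs the unique minimizer $(\br^*(L), \be^*(L))$ is Lipschitz in $L$ with a modulus depending only on the uniform bounds and the strong convexity constant, hence independent of $\bz$. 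Feeding this Lipschitz dependence into a product-rule estimate for $(L\br^*(L) + \be^*(L) - \bz)\br^*(L)\trans$, together with the uniform boundedness of each factor, shows $\nabla_L \ell(\bz, \cdot)$ is uniformly Lipschitz on $\mathcal{L}$.

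I expect the main obstacle to be precisely this stability estimate, i.e.\ proving that $(\br^*(L), \be^*(L))$ varies Lipschitz-continuously with $L$ at a rate uniform in $\bz$; the remaining ingredients (the domination bound, passing the limit and continuity through the expectation, and the product-rule bound on the gradient) are routine consequences of the uniform bounds already in hand from Proposition~\ref{prop:bound:reABL}.
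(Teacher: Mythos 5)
Your proposal is correct and follows essentially the same route as the paper: the gradient formula and continuity of $\nabla f$ come from the differentiability of $\ell(\bz,\cdot)$ (Proposition~\ref{prop:l:Lipschtiz}) plus the uniform bounds of Proposition~\ref{prop:bound:reABL}, and the uniform Lipschitz property is reduced, exactly as in the paper's Steps 1--3, to Lipschitz stability of the minimizers $(\br^*,\be^*)$ in $(\bz, L)$, obtained from the jitter-induced strong convexity of Assumption~\ref{as:unique} combined with a Lipschitz bound on the difference of objectives at two parameter values. The only real difference is presentational: the paper first localizes to the support set of $\be^*$ (arguing the support is locally constant) so that the restricted problem has a Hessian lower bound before running the perturbation argument, whereas you invoke the standard stability estimate for strongly convex parametric programs directly, which needs no smoothness of the $\ell_1$ term and so sidesteps that localization.
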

\begin{proof}
Since $\ell(\bz, L)$ is continuously differentiable (Proposition~\ref{prop:l:Lipschtiz}), $f(L)$ is continuously differentiable and $\nabla f(L) = \EXP_{\bz}[\nabla_L \ell(\bz, L)]$.

Now we prove the second claim. Let us consider a matrix $L$ and a sample $\bz$, and denote $\br^*(\bz, L)$ and $\be^*(\bz, L)$ as the optimal solutions for Eq.~\eqref{eq:l(z,L)}.

\textbf{Step 1:}
First, $\tilde{\ell}(\bz, L, \br, \be)$ is continuous in $\bz$, $L$, $\br$ and $\be$, and has a unique minimizer. This implies that $\br^*(\bz, L)$ and $\be^*(\bz, L)$ is continuous in $\bz$ and $L$.

Let us denote $\varLambda$ as the set of the indices such that $\forall j \in \varLambda$, $\be_j^* \neq 0$. According to the first order optimal condition for Eq.~\eqref{eq:solve_re} w.r.t $\be$, we have
\begin{equation*}
\begin{split}
\bz - L \br - \be \in \lambda_2 \partial \onenorm{\be}, \\
\Rightarrow \lv (\bz - L \br - \be)_j \rv = \lambda_2,\ \forall j \in \varLambda.
\end{split}
\end{equation*}

Since $\bz - L \br - \be$ is continuous in $\bz$ and $L$, we consider a small perturbation of $(\bz, L)$ in one of their open neighborhood $V$, such that for all $(\bz', L') \in V$, we have if $ j \notin \varLambda$, then $\lv (\bz' - L' {\br^*}' - {\be^*}')_j \rv < \lambda_2$ and ${\be^*}'_j = 0$, where ${\br^*}' = \br^*(\bz', L')$ and ${\be^*}' = \be^*(\bz', L')$. That is, the support set of $\be^*$ does not change.

Let us denote $D = [L\ I]$ and $\bb=[\br;\ \be]$ and consider the function

\begin{equation*}
\tilde{\ell}(\bz, L^{}_{\varLambda}, \bb^{}_{\varLambda}) \defeq \frac{1}{2} \twonorm{\bz - D^{}_{\varLambda}\bb^{}_{\varLambda}}^2 + \lambda_2 \onenorm{[0\ I] \bb^{}_{\varLambda}}.
\end{equation*}

According to Assumption~\ref{as:unique}, $\tilde{\ell}(\bz,  L^{}_{\varLambda}, \cdot)$ is strongly convex with a Hessian lower-bounded by a positive constant $\kappa_1$. Thus,
\begin{equation}
\label{eq:geq}
\begin{split}
\tilde{\ell}(\bz, L^{}_{\varLambda}, \bb'^*_{\varLambda}) - \tilde{\ell}(\bz, L^{}_{\varLambda}, \bb^*_{\varLambda}) \geq & \kappa_1 \twonorm{\bb^{}_{\varLambda} - \bb'_{\varLambda}}^2 \\
= & \kappa_1 \( \twonorm{\br^* - \br'^*}^2 + \twonorm{\be^*_{\varLambda} - \be'^*_{\varLambda}}^2 \)
\end{split}
\end{equation}

\textbf{Step 2:}
We shall prove that $\tilde{\ell}(\bz, L, \cdot) - \tilde{\ell}(\bz', L', \cdot)$ is Lipschitz w.r.t. $\bb$.
\begin{equation*}
\begin{split}
& 2\times \( \tilde{\ell}(\bz, L, \bb) - \tilde{\ell}(\bz', L', \bb) \) - 2\times \(\tilde{\ell}(\bz, L, \bb') - \tilde{\ell}(\bz', L', \bb') \)\\
=& \Vert \bz - D\bb \Vert_2^2 - \Vert \bz - D\bb' \Vert_2^2 + \Vert \bz' - D'\bb' \Vert_2^2 - \Vert \bz' - D'\bb \Vert_2^2 \\
=& 2\bz\trans D (\bb' - \bb) + \bb\trans D\trans D \bb - \bb'^{\top} D\trans D \bb' -2 \bz'^{\top} D'(\bb'-\bb) - \bb\trans D'^{\top} D'\bb + \bb'^{\top} D'^{\top} D'\bb' \\
=& 2 [ (\bz\trans D - \bz'^{\top} D')(\bb'-\bb) ] + [ \bb\trans D\trans D\bb - \bb\trans D'^{\top} D'\bb + \bb'^{\top} D'^{\top} D'\bb' - \bb'^{\top} D\trans D\bb' ] \\
\end{split}
\end{equation*}
For the first term,
\begin{equation*}
\begin{split}
&(\bz\trans D - \bz'^{\top} D')(\bb'-\bb) \\
=& (\bz\trans D - \bz\trans D' + \bz\trans D' - \bz'^{\top} D'^{\top})(\bb'-\bb) \\
=& \(\bz\trans(D-D') + (\bz\trans - \bz'^{\top})D' \)(\bb'-\bb)
\end{split}
\end{equation*}
As each sample is bounded, $D$ is bounded (as $L$ is bounded), so the $\ell_2$-norm of the first term can be bounded as follows:
\begin{equation}
\label{eq:t1}
\begin{split}
& \Vert (\bz\trans D - \bz'^{\top} D')(\bb'-\bb) \Vert_2 \\
=& \Vert \(\bz\trans(D-D') + (\bz\trans - \bz'^{\top})D' \)(\bb'-\bb) \Vert_2\\
\leq& \( \Vert \bz \Vert_2 \Vert D - D' \Vert_F + \Vert \bz - \bz' \Vert_2 \Vert D' \Vert_F \) \cdot \Vert \bb'-\bb \Vert_2 \\
\leq& \( c_1   \Vert D - D' \Vert_F + c_2 \Vert \bz - \bz' \Vert_2  \) \cdot \Vert \bb'-\bb \Vert_2 
\end{split}
\end{equation}

For the second term,
\begin{equation*}
\begin{split}
&  \bb\trans D\trans D\bb - \bb\trans D'^{\top} D'\bb + \bb'^{\top} D'^{\top} D'\bb' - \bb'^{\top} D\trans D\bb'\\
=& \bb\trans\(D\trans D - D'^{\top} D'\)\bb - \bb'^{\top}\(  D\trans D - D'^{\top} D' \)\bb' \\
=& \bb\trans\(D\trans D - D'^{\top} D'\)\bb -  \bb\trans\(D\trans D - D'^{\top} D'\)\bb' +  \bb\trans\(D\trans D - D'^{\top} D'\)\bb' - \bb'^{\top}\(  D\trans D - D'^{\top} D' \)\bb' \\
=& \bb\trans\( D\trans D - D'^{\top} D' \)\( \bb - \bb' \) + \( \bb - \bb' \)\trans \( D\trans D - D'^{\top} D' \)\bb' \\
=& \bb\trans\( D\trans D -D\trans D' +D\trans D'  - D'^{\top} D' \)\( \bb - \bb' \) + \( \bb - \bb' \)\trans \( D\trans D -D\trans D' +D\trans D' - D'^{\top} D' \)\bb' \\
=& \bb\trans\( D\trans\(D-D'\) + \(D\trans - D'\)D' \)\( \bb - \bb' \) +  \( \bb - \bb' \)\trans \( D\trans\(D-D'\) + \(D\trans - D'\)D' \)\bb' 
 \end{split}
\end{equation*}
Since $D$ is bounded, $\bb$ is bounded, the second term can be bounded as follows:
\begin{equation}
\label{eq:t2}
\begin{split}
& \Vert \bb\trans D\trans D\bb - \bb\trans D'^{\top} D'\bb + \bb'^{\top} D'^{\top} D'\bb' - \bb'^{\top} D\trans D\bb' \Vert_2\\
=& \Vert  \bb\trans\( D\trans\(D-D'\) + \(D\trans - D'^{\top}\)D' \)\( \bb - \bb' \) +  \( \bb - \bb' \)\trans \( D\trans\(D-D'\) + \(D\trans - D'^{\top}\)D' \)\bb'  \Vert_2 \\
\leq& c_3 \Vert D-D' \Vert_F \cdot \Vert \bb - \bb' \Vert_2
\end{split}
\end{equation}
Combining \eqref{eq:t1} and \eqref{eq:t2}, we prove that $\tilde{\ell}(\bz, L, \cdot) - \tilde{\ell}(\bz', L', \cdot)$ is Lipschitz with constant $\(c_1 + c_3 \)  \Vert D - D' \Vert_F + c_2 \Vert \bz - \bz' \Vert_2 $:
\begin{equation}
\label{eq:leq}
\begin{split}
 & \( \tilde{\ell}(\bz, L, \bb) - \tilde{\ell}(\bz', L', \bb) \) -  \(\tilde{\ell}(\bz, L, \bb') - \tilde{\ell}(\bz', L', \bb') \) \\
 \leq& \(\(c_1 + c_3 \)  \Vert D - D' \Vert_F + c_2 \Vert \bz - \bz' \Vert_2 \) \Vert \bb - \bb' \Vert_2 \\
 =& \(\(c_1 + c_3 \)  \Vert D - D' \Vert_F + c_2 \Vert \bz - \bz' \Vert_2 \) \sqrt{\Vert \br - \br' \Vert_2^2 + \Vert \be - \be' \Vert_2^2}
\end{split}
\end{equation}

\textbf{Step 3:}
According to Eq.~\eqref{eq:geq} and Eq.~\eqref{eq:leq}, and notice that $\bb'^*$ minimizes $\tilde{\ell}(\bz', L', \cdot)$, we have
\begin{equation*}
\begin{split}
&\kappa_1 \( \Vert \br^* - \br'^* \Vert_2^2 + \Vert \be^*_{\varLambda} - \be'^*_{\varLambda} \Vert_2^2 \)\\
\leq& \tilde{\ell}(\bz, L^{}_{\varLambda}, \bb'^*_{\varLambda}) - \tilde{\ell}(\bz, L^{}_{\varLambda}, \bb^*_{\varLambda})\\
=&  \tilde{\ell}(\bz, L^{}_{\varLambda}, \bb'^*_{\varLambda}) - \tilde{\ell}(\bz', L'_{\varLambda}, \bb^*_{\varLambda})  + \tilde{\ell}(\bz', L'_{\varLambda}, \bb^*_{\varLambda})  - \tilde{\ell}(\bz, L^{}_{\varLambda}, \bb^*_{\varLambda}) \\
\leq& \tilde{\ell}(\bz, L^{}_{\varLambda}, \bb'^*_{\varLambda}) - \tilde{\ell}(\bz', L'_{\varLambda}, \bb'^*_{\varLambda})  + \tilde{\ell}(\bz', L'_{\varLambda}, \bb^*_{\varLambda})  - \tilde{\ell}(\bz, L^{}_{\varLambda}, \bb^*_{\varLambda}) \\
\leq&  \(\(c_1 + c_3 \)  \Vert D - D' \Vert_F + c_2 \Vert \bz - \bz' \Vert_2 \) \sqrt{\Vert \br^* - \br'^* \Vert_2^2 + \Vert \be^*_{\varLambda} - \be'^*_{\varLambda} \Vert_2^2}
\end{split}
\end{equation*}

Therefore, $\br^*(\bz, L)$ and $\be^*(\bz, L)$ are Lipschitz, which concludes the proof.

\end{proof}

Finally, taking a first order Taylor expansion for $f(L_t)$ and $g_t(L_t)$, we can show that the gradient of $f(L_t)$ equals to that of $g_t(L_t)$ when $t$ tends to infinity. Since  $L_t$ is the minimizer for $g_t(L)$, we know that the gradient of $f(L_t)$ vanishes. Therefore, we have proved Theorem~\ref{thm:stationary}.

\begin{proof}
According to Proposition~\ref{prop:bound:reABL}, the sequences $\{\frac{1}{t}A_t\}$ and $\{\frac{1}{t}B_t\}$ are uniformly bounded. Then, there exist sub-sequences of $\{\frac{1}{t}A_t\}$ and $\{\frac{1}{t}B_t\}$ that converge to $A_{\infty}$ and $B_{\infty}$ respectively.
In that case, $L_t$ converges to $L_{\infty}$. Let $V$ be an arbitrary matrix in $\Rpd$, and $\{h_k\}$ be a positive sequence that converges to zero.

Since $g_t$ is the surrogate function of $f_t$, for all $t$ and $k$, we have
\begin{equation*}
g_t(L_t + h_k V) \geq f_t(L_t + h_k V).
\end{equation*}
Let $t$ tend to infinity:
\begin{equation*}
g_{\infty}(L_{\infty} + h_k V) \geq f(L_{\infty} + h_k V).
\end{equation*}
Since $L_t$ is uniformly bounded, when $t$ tends to infinity, the term $\lambdat \Vert L_t \Vert_{\infty}^2$ will vanish. In this way, $g_t(\cdot)$ becomes differentiable. Also, the Lipschitz of $\nabla f(L)$ (proved in Proposition~\ref{prop:gradient f}) implies that the second derivative of $f(L_t)$ can be uniformly bounded. And by a simple calculation, this also holds for $g_t(L_t)$. Thus, we can take the first order Taylor expansion even when $t$ tends to infinity.
Using a first order Taylor expansion, and note the fact that $g_{\infty}(L_{\infty}) = f(L_{\infty})$, we have
\begin{equation*}
\tr(h_k V\trans \nabla g_{\infty}(L_{\infty})) + o(h_k V) \geq \tr(h_k V\trans \nabla f(L_{\infty})) + o(h_k V).
\end{equation*}
Since $\{h_k\}$ is a positive sequence, by multiplying $\frac{1}{h_k \Vert  V \Vert_F}$ on both side, it follows that
\begin{equation*}
\tr(\frac{1}{\Vert V \Vert_F} V\trans \nabla g_{\infty}(L_{\infty})) + \frac{o(h_k V)}{h_k {\Vert V \Vert_F} } \geq \tr(\frac{1}{\Vert V \Vert_F}  V\trans \nabla f(L_{\infty})) + \frac{o(h_k V)}{h_k {\Vert V \Vert_F} } .
\end{equation*}
Now let $k$ tend to infinity:
\begin{equation*}
\tr(\frac{1}{\Vert V \Vert_F} V\trans \nabla g_{\infty}(L_{\infty}))  \geq \tr(\frac{1}{\Vert V \Vert_F}  V\trans \nabla f(L_{\infty})).
\end{equation*}

Since the inequality holds for all matrix $V \in \Rpd$, it can easily show that
\begin{equation*}
\nabla g_{\infty}(L_{\infty}) = \nabla f(L_{\infty}).
\end{equation*}
Since $L_t$ always minimizes $g_t(\cdot)$, we have
\begin{equation*}
\nabla f(L_{\infty}) = \nabla g_{\infty}(L_{\infty}) = 0,
\end{equation*}
which implies that when $t$ tend to infinity, $L_t$ is a stationary point of $f(\cdot)$.

\end{proof}

\clearpage
{
\bibliographystyle{alpha}
\bibliography{max-norm}
}
\end{document}